%% LyX 2.3.3 created this file.  For more info, see http://www.lyx.org/.
%% Do not edit unless you really know what you are doing.
\documentclass[english]{article}
\usepackage[T1]{fontenc}
\usepackage[latin9]{inputenc}
\usepackage{color}
\usepackage{babel}
\usepackage{refstyle}
\usepackage{graphicx}
\usepackage{caption}
\usepackage{subcaption}
\usepackage{amsmath}
\usepackage{amsthm}
\usepackage{amssymb}
\usepackage{xcolor}
\PassOptionsToPackage{sort&compress, numbers}{natbib}
\usepackage[unicode=true,pdfusetitle,
 bookmarks=true,bookmarksnumbered=false,bookmarksopen=false,
 breaklinks=false,pdfborder={0 0 1},backref=false,colorlinks=true]
 {hyperref}
\hypersetup{
 citecolor=blue,linkcolor=blue,urlcolor=blue}

\makeatletter
%%%%%%%%%%%%%%%%%%%%%%%%%%%%%% LyX specific LaTeX commands.

\AtBeginDocument{\providecommand\lemref[1]{\ref{lem:#1}}}
\AtBeginDocument{\providecommand\secref[1]{\ref{sec:#1}}}
\AtBeginDocument{\providecommand\thmref[1]{\ref{thm:#1}}}
\RS@ifundefined{subsecref}
  {\newref{subsec}{name = \RSsectxt}}
  {}
\RS@ifundefined{thmref}
  {\def\RSthmtxt{theorem~}\newref{thm}{name = \RSthmtxt}}
  {}
\RS@ifundefined{lemref}
  {\def\RSlemtxt{lemma~}\newref{lem}{name = \RSlemtxt}}
  {}

%%%%%%%%%%%%%%%%%%%%%%%%%%%%%% Textclass specific LaTeX commands.
\theoremstyle{plain}
\newtheorem{thm}{\protect\theoremname}
\theoremstyle{plain}
\newtheorem{lem}[thm]{\protect\lemmaname}

%%%%%%%%%%%%%%%%%%%%%%%%%%%%%% User specified LaTeX commands.
\newref{thm}{name=Theorem~,Name=Theorem~}
\newref{lem}{name=Lemma~,Name=Lemma~}
\newref{sec}{name=Section~,Name=Section~}
\usepackage[final]{neurips_2022}
\makeatother
\providecommand{\lemmaname}{Lemma}
\providecommand{\theoremname}{Theorem}

\begin{document}
\title{Accelerating SGD for Highly Ill-Conditioned Huge-Scale Online Matrix
Completion}
\author{Gavin Zhang\\
University of Illinois at Urbana--Champaign\\
\texttt{jialun2@illinois.edu}~\\
\And Hong-Ming Chiu\\
University of Illinois at Urbana--Champaign\\
\texttt{hmchiu2@illinois.edu}\\
\And Richard Y. Zhang\\
University of Illinois at Urbana--Champaign\\
\texttt{ryz@illinois.edu}}

\maketitle
\global\long\def\inner#1#2{\langle#1,#2\rangle}%
\global\long\def\R{\mathbb{R}}%
\global\long\def\E{\mathbb{E}}%
\global\long\def\eqdef{\overset{\mathrm{def}}{=}}%
\global\long\def\rank{\operatorname{rank}}%
\begin{abstract}
The matrix completion problem seeks to recover a $d\times d$ ground
truth matrix of low rank $r\ll d$ from observations of its individual
elements. Real-world matrix completion is often a huge-scale optimization
problem, with $d$ so large that even the simplest full-dimension
vector operations with $O(d)$ time complexity become prohibitively
expensive. Stochastic gradient descent (SGD) is one of the few algorithms
capable of solving matrix completion on a huge scale, and can also
naturally handle streaming data over an evolving ground truth. Unfortunately,
SGD experiences a dramatic slow-down when the underlying ground truth
is ill-conditioned; it requires at least $O(\kappa\log(1/\epsilon))$
iterations to get $\epsilon$-close to ground truth matrix with condition
number $\kappa$. In this paper, we propose a preconditioned version
of SGD that preserves all the favorable practical qualities of SGD
for huge-scale online optimization while also making it agnostic to
$\kappa$. For a symmetric ground truth and the Root Mean Square Error
(RMSE) loss, we prove that the preconditioned SGD converges to $\epsilon$-accuracy
in $O(\log(1/\epsilon))$ iterations, with a rapid linear convergence
rate as if the ground truth were perfectly conditioned with $\kappa=1$.
In our experiments, we observe a similar acceleration for item-item 
collaborative filtering on the MovieLens25M dataset via a pair-wise ranking loss, 
with 100 million training pairs and 10 million testing pairs. 
{[}See supporting code at \url{https://github.com/Hong-Ming/ScaledSGD}.{]}
%ill-conditioned matrix completion under the root mean square error (RMSE) loss, 
%Euclidean distance matrix (EDM) completion under pairwise square 
%loss. 
\end{abstract}
\section{Introduction}
The matrix completion problem seeks to recover an underlying $d\times d$
ground truth matrix $M$ of low rank $r\ll d$ from observations of
its individual matrix elements $M_{i,j}$. The problem appears most
prominently in the context of collaborative filtering and recommendation
system, but also numerous other applications. In this paper, we focus
on the \emph{symmetric} and \emph{positive semidefinite} variant of
the problem, in which the underlying matrix $M$ can be factored as
$M=ZZ^{T}$ where the factor matrix $Z$ is $d\times r$, though that
our methods have natural extensions to the nonsymmetric case. We note
that the symmetric positive semidefinite variant is actually far more
common in collaborative filtering, due to the prevalence of \emph{item-item}
models, which enjoy better data (most platforms contain several orders
of magnitude more users than items) and more stable recommendations
(the similarity between items tends to change slowly over time) than
user-user and user-item models. 

For the full-scale, online instances of matrix completion that arise
in real-world collaborative filtering, stochastic gradient descent
or SGD is the only viable algorithm for learning the underlying matrix
$M$. The basic idea is to formulate a candidate matrix of the form
$XX^{T}$ with respect to a learned factor matrix $X\in\mathbb{R}^{d\times r}$,
and to minimize a cost function of the form $\phi(XX^{T}-M)$. Earlier
work used the root mean square error (RMSE) loss $\|XX^{T}-M\|_{F}^{2}=\sum_{i,j}(XX^{T}-M)_{i,j}^{2}$,
though later work have focused on pairwise losses like the BPR~\citep{rendle2012bpr}
that optimize for ordering and therefore give better recommendations.
For the RMSE loss, the corresponding SGD iterations with (rescaled) learning rate $\alpha > 0$ reads
\begin{align}\label{eq:sgdupdate}
x_{i,+}=x_{i} - \alpha\cdot\left(x_{i}^{T}x_{j}-M_{ij}\right)x_{j},\quad x_{j,+}=x_{j} - \alpha\cdot\left(x_{i}^{T}x_{j}-M_{ij}\right)x_{i},
\end{align}
where $M_{ij}$ is the sampled $(i,j)$-th element of the ground truth
matrix $M$, and $x_{i},x_{j}$ and $x_{i,+},x_{j,+}$ denote the
$i$-th and $j$-th rows of the current iterate $X_{t}$ and new iterate
$X_{t+1}$. Pairwise losses like the BPR can be shown to have a similar
update equation over three rows of $X$~\citep{rendle2012bpr}. 
Given that only two or three rows of $X$ are accessed and updated
at any time, SGD is readily accessible to massive parallelization
and distributed computing. For very large values of $d$, the update
equation (\ref{eq:sgdupdate}) can be run by multiple workers in parallel
without locks, with vanishing probability of collision~\citep{recht2011hogwild}.
The blocks of $X$ that are more frequently accessed together can
be stored on the same node in a distributed memory system. 

Unfortunately, the convergence rate of SGD can sometimes be extremely slow.
One possible explanation, as many recent authors have pointed out~\citep{zheng2016convergence,tong2022scaling,zhang2021preconditioned,zhuo2021computational}, is that
matrix factorization models are very sensitive to ill-conditioning of the ground truth matrix $M$.
The number of SGD iterations grows at least linearly the condition number $\kappa$, which here is defined 
as the ratio between the largest and the $r$-th largest singular values of $M$.  Ill-conditioning causes particular concern because most real-world data are
ill-conditioned. In one widely cited study~\cite{kosinski2013private}, it was found that the dominant singular value accounts for only $\approx$80\% prediction
accuracy, with diversity of individual preferences making up the remainder ill-conditioned singular values. \citet{cloninger2014solving} notes that there are certain applications
of matrix completion that have condition numbers as high as $\kappa = 10^{15}$.

This paper is inspired by a recent full-batch gradient method called
ScaledGD~\citep{tong2021accelerating,tong2022scaling} and a closely
related algorithm PrecGD~\citep{zhang2021preconditioned} in which
 gradient descent is made immune to ill-conditioning in the ground truth by right-rescaling the full-batch gradient by the matrix $(X^{T}X)^{-1}$. Applying this
same strategy to the SGD update equation (\ref{eq:sgdupdate}) yields
the row-wise updates \begin{subequations} \label{eq:update}
\begin{equation}
x_{i,+}=x_{i}-\alpha\cdot\left(x_{i}^{T}x_{j}-M_{ij}\right)Px_{j},\quad x_{j,+}=x_{j}-\alpha\cdot\left(x_{i}^{T}x_{j}-M_{ij}\right)Px_{i},
\end{equation}
in which we \emph{precompute and cache} the preconditioner $P=(X^{T}X)^{-1}$
ahead of time\footnote{For an initialization, if the $d$ rows of $X_{0}$ are selected from
the unit Gaussian as in $x_{1},\dots,x_{d}\sim\mathcal{N}(0,\sigma^{2}I_{r})$,
then we can simply set $P_{0}=\sigma^{2}I$ without incurring the
$O(d)$ time needed in explicitly computing $P_{0}=(X_{0}^{T}X_{0})^{-1}$.}, and update it after the iteration as
\begin{equation}
P_{+}=(P^{-1}+x_{i,+}x_{i,+}^{T}+x_{j,+}x_{j,+}^{T}-x_{i}x_{i}^{T}-x_{j}x_{j}^{T})^{-1}
\end{equation}
\end{subequations}by making four calls to the Sherman--Morrison
rank-1 update formula 
\[
(P^{-1}+uu^{T})^{-1}=P-\frac{Puu^{T}P}{1+u^{T}Pu},\qquad(P^{-1}-uu^{T})^{-1}=P+\frac{Puu^{T}P}{1-u^{T}Pu}.
\]
This way, the rescaled update equations use just $O(r^{2})$ arithmetic
operations, which for modest values of $r$ is only marginally more
than the $O(r)$ cost of the unscaled update equations (\ref{eq:sgdupdate}).
Indeed, the nearest-neighbor algorithms inside most collaborative filters have exponential
complexity with respect to the latent dimensionality $r$, and so are often implemented with $r$ small enough for 
(\ref{eq:sgdupdate}) and (\ref{eq:update}) to have essentially the same runtime. 
Here, we observe that the rescaled update equations (\ref{eq:update})
preserve essentially all of the practical advantages of SGD for huge-scale,
online optimization: it can also be run by multiple workers in parallel
without locks, and it can also be easily implemented over distributed memory. The only minor difference is that separate copies of
$P$ should be maintained by each worker, and resynchronized once
differences grow large.  

\paragraph{Contributions}

In this paper, we provide a rigorous proof that the rescaled update
equations (\ref{eq:sgdupdate}), which we name ScaledSGD, become \emph{immune}
to the effects of ill-conditioning in the underlying ground truth
matrix. For symmetric matrix completion under the root mean squared
error (RMSE) loss function, regular SGD is known to have an iteration
count of $O(\kappa^{4}\cdot dr\log(d/\epsilon))$ within a local neighborhood
of the ground truth~\citep{jin2016provable}. This figure is optimal
in the dimension $d$, the rank $r$, and the final accuracy $\epsilon$,
but suboptimal by four exponents with respect to condition number
$\kappa$. In contrast, we prove for the same setting that ScaledSGD
attains an optimal convergence rate, converging to $\epsilon$-accuracy
in $O(dr\log(d/\epsilon))$ iterations for all values of the condition
number $\kappa$. In fact, our theoretical result predicts that ScaledSGD
converges as if the ground truth matrix is perfectly conditioned,
with a condition number of $\kappa=1$.

At first sight, it appears quite natural that applying the ScaledGD
preconditioner to SGD should result in accelerated convergence. However,
the core challenge of stochastic algorithms like SGD is that each
iteration can have substantial \emph{variance} that ``drown out''
the expected progress made in the iteration. In the case of ScaledSGD,
a rough analysis would suggest that the highly ill-conditioned preconditioner
should improve convergence in expectation, but at the cost of dramatically
worsening the variance. 

Surprisingly, we find in this paper that the specific scaling $(X^{T}X)^{-1}$
used in ScaledSGD not only does not worsen the variance, but in fact
\emph{improves} it. Our key insight and main theoretical contribution
is \lemref{gdecr}, which shows that the same mechanism that allows
ScaledGD to converge faster (compared to regular GD) also allows ScaledSGD
to enjoy reduced variance (compared to regular SGD). In fact, it is
this effect of variance reduction that is responsible for most ($\kappa^{3}$
out of $\kappa^{4}$) of our improvement over the previous state-of-the-art.
It turns out that a careful choice of preconditioner can be used as
a mechanism for \emph{variance reduction}, while at the same time
also fulfilling its usual, classical purpose, which is to accelerate
convergence in expectation. 

\paragraph{Related work}
Earlier work on matrix completion analyzed a convex relaxation of the original problem, showing that nuclear norm minimization can recover the ground truth from a few incoherent measurements \citep{candes2010matrix, candes2010power, recht2010guaranteed, srebro2005rank, negahban2012restricted}. This approach enjoys a near optimal sample complexity but incurs an $O(d^3)$ per-iteration computational cost, which is prohibitive for a even moderately large $d$. More recent work has focused more on a nonconvex formulation based on \citet{burer2003nonlinear}, which factors the optimization variable as $M=XX^T$ where $X\in \mathbb{R}^{d\times r}$ and applies a local search method such as alternating-minimization \cite{jain2013low, hardt2014fast, hardt2014understanding, sun2016guaranteed}, projected gradient descent \cite{chen2015fast, jain2015fast} and regular gradient descent \cite{tu2016low, bhojanapalli2016dropping, candes2015phase, ma2021implicit}. A separate line of work \citep{bhojanapalli2016global,li2019non,sun2018geometric,ge2016matrix,ge2017no,chen2017memory,zhang2019sharp,zhang2021sharp, josz2021nonsmooth} focused on global properties of nonconvex matrix recovery problems, showing that the problem has no spurious local minima if sampling operator satisfies certain regularity conditions such as incoherence or restricted isometry. 

The convergence rate of SGD has been well-studied for general classes of functions \cite{bassily2018exponential, vaswani2019fast, gower2019sgd, xie2020linear}. For matrix completion in particular, 
    \citet{jin2016provable} proved that SGD converges towards an $\epsilon$-accurate solution in $O(d\kappa^{4}\log(1/\epsilon))$ iterations where $\kappa$ is the condition number of $M$. Unfortunately, this quartic dependence on $\kappa$ makes SGD extremely slow and impractical for huge-scale applications. 
    
This dramatic slow down of gradient descent and its variants caused by ill-conditioning has become well-known in recent years. Several recent papers have proposed full-batch algorithms to overcome this issue \cite{tong2021accelerating, tong2021low,kummerle2021scalable}, but these methods cannot be used in the huge-scale optimization setting where $d$ is so large that even full-vector operations with $O(d)$ time complexity are too expensive. As a deterministic full-batch method, ScaledGD \cite{tong2021accelerating} requires a projection onto the set of incoherent matrices at every iteration in order to maintain rapid convergence. Instead our key finding here is that the stochasticity of SGD alone is enough to keep the iterates as incoherent as the ground truth, which allows for rapid progress to be made. The second-order method proposed in \cite{kummerle2021scalable} costs at least $O(d)$ per-iteration and has no straightforward stochastic analog. PrecGD~\cite{zhang2021preconditioned} only applies to matrices that satisfies matrices satisfying the restricted isometry property, which does not hold for matrix completion.

\section{Background: Linear convergence of SGD}

In our theoretical analysis, we restrict our attention to symmetric
matrix completion under the root mean squared error (RMSE) loss function.
Our goal is to solve the following nonconvex optimization 
\begin{equation}
\min_{X\in\R^{d\times r}}f(X)\eqdef\|XX^{T}-ZZ^{T}\|_{F}^{2}\qquad\text{ where }Z=[z_{1},z_{2},\dots,z_{n}]^{T}\in\R^{d\times r}\label{eq:prob}
\end{equation}
in which we assume that the $d\times d$ ground truth $ZZ^{T}\succeq0$
matrix is exactly rank-$r$, with a finite \emph{condition number}
\begin{equation}
\kappa\eqdef\lambda_{\max}(ZZ^{T})/\lambda_{r}(ZZ^{T})=\lambda_{\max}(Z^{T}Z)/\lambda_{\min}(Z^{T}Z)<\infty.\label{eq:cond}
\end{equation}
In order to be able to reconstruct $ZZ^{T}$ from a small number of
measurements, we will also need to assume that the ground truth has
small \emph{coherence}~\citep{candes2009exact}
\begin{equation}
\mu\eqdef\frac{d}{r}\cdot\max_{1\le i\le d}\|e_{i}^{T}Z(Z^{T}Z)^{-1/2}\|^{2}.\label{eq:incoh}
\end{equation}
Recall that $\mu$ takes on a value from $1$ to $d/r$, with the
smallest achieved by dense, orthonormal choices of $Z$ whose rows
all have magnitudes of $1/\sqrt{d}$, and the largest achieved by
a ground truth $ZZ^{T}$ containing a single nonzero element. Assuming
incoherence $\mu=O(1)$ with respect to $d$, it is a well-known result
that all $d^{2}$ matrix elements of $ZZ^{T}$ can be perfectly reconstructed
from just $O(dr\log d)$ random samples of its matrix elements~\citep{candes2010power,recht2011simpler}.

This paper considers solving (\ref{eq:prob}) in the huge-scale, online
optimization setting, in which individual matrix elements of the ground
truth $(ZZ^{T})_{i,j}=z_{i}^{T}z_{j}$ are revealed one-at-a-time,
uniformly at random with replacement, and that a current iterate $X$
is continuously updated to streaming data. We note that this is
a reasonably accurate model for how recommendation engines are tuned
to user preferences in practice, although the uniformity of random
sampling is admittedly an assumption made to ease theoretical analysis.
Define the stochastic gradient operator as 
\[
SG(X)=2d^{2}\cdot(x_{i}^{T}x_{j}-z_{i}^{T}z_{j})(e_{i}x_{j}^{T}+e_{j}x_{i}^{T})\quad\text{where }(i,j)\sim\mathrm{Unif}([d]\times[d]),
\]
where $x_{i},x_{j}\in\R^{r}$ are the $i$-th and $j$-th rows of
$X$, and the scaling $d^{2}$ is chosen that, over the randomness
of the sampled index $(i,j)$, we have exactly $\E[SG(X)]=\nabla f(X)$.
Then, the classical online SGD algorithm can be written as
\begin{equation}
X_{t+1}=X_{t}-\alpha SG(X_{t})\qquad\text{ where }\alpha>0.\tag{SGD}\label{sgd}
\end{equation}
Here, we observe that a single iteration of \ref{sgd} coincides with
full-batch gradient descent in expectation, as in $\E[X_{t+1}|X_{t}]=X_{t}-\alpha\nabla f(X_{t})$.
Therefore, assuming that bounded deviations and bounded variances,
it follows from standard arguments that the behavior of many iterations of SGD should
concentrate about that of full-batch gradient descent $X_{t+1}=X_{t}-\alpha\nabla f(X_{t})$.

Within a region sufficiently close to the ground truth, full-batch
gradient descent is well-known to converge at a linear rate to the
ground truth~\citep{zheng2015convergent,tu2016low}. Within this same region, \citet{jin2016provable} proved
that SGD also converges linearly. For an incoherent ground
truth with $\mu=O(1)$, they proved that SGD with an aggressive choice of step-size is able to
recover the ground truth to $\epsilon$-accuracy $O(\kappa^{4}dr\log(d/\epsilon))$
iterations, with each iteration costing $O(r)$ arithmetic operations
and selecting 1 random sample. This iteration count is optimal with respect to $d,$ $r,$ and $1/\epsilon,$
although its dependence on $\kappa$ is a cubic factor (i.e. a factor of $\kappa^{3}$)
worse than full-batch gradient descent's figure of $O(\kappa\log(1/\epsilon))$,
which is itself already quite bad, given that $\kappa$ in practice
can readily take on values of $10^{3}$ to $10^{6}$.

\begin{thm}[\citet*{jin2016provable}]
\label{thm:jin}For $Z\in\R^{d\times r}$ with $\sigma_{\max}(Z)=1$
and $f(X)=\|XX^{T}-ZZ^{T}\|_{F}^{2}$ and $h_{i}(X)=\|e_{i}^{T}X\|^{2}$,
define the following
\[
f_{\max}\eqdef\left(\frac{1}{10\kappa}\right)^{2},\qquad h_{\max}\eqdef20\cdot\kappa^{2}\cdot\frac{\mu r}{d}.
\]
For an initial point $X_{0}\in\R^{d\times r}$ that satifies $f(X_{0})\le\frac{1}{2}f_{\max}$
and $\max_{i}h_{i}(X_{0})\le\frac{1}{2}h_{\max}$, there exists some
constant $c$ such that for any learning rate $\alpha<c\cdot(\kappa\cdot h_{\max}\cdot d^{2}\log d)^{-1}$,
with probability at least $1-T/d^{10}$, we will have for all $t\le T$
iterations of \ref{sgd} that
\[
f(X_{t})\le\left(1-\frac{\alpha}{2\cdot\kappa}\right)^{t}\cdot f_{\max},\qquad\max_{i}h_{i}(X_{t})\le h_{\max}.
\]
\end{thm}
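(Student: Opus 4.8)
The plan is to track the two scalar quantities named in the statement --- the objective $f(X_t)$ and the worst-case row energy $\max_i h_i(X_t)$ --- simultaneously, and to argue by induction over $t$ that both invariants are preserved with high probability. Since $\sigma_{\max}(Z)=1$ we have $\lambda_{\max}(ZZ^T)=1$ and $\lambda_r(ZZ^T)=1/\kappa$, so I would first isolate the two deterministic ingredients that govern a single step on the ``good event'' where the invariants hold. The first is a local gradient-dominance estimate for $f$ near the ground truth (a Polyak--{\L}ojasiewicz inequality), of the form $\|\nabla f(X)\|_F^2 \gtrsim \kappa^{-1} f(X)$; this is the source of the $(1-\alpha/(2\kappa))$ contraction factor. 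The second is a second-moment bound on the stochastic gradient, $\E[\|SG(X)\|_F^2] \lesssim d^2\,(\max_i h_i(X))\, f(X)$, whose crucial feature is that it is \emph{proportional to $f(X)$ itself}: because the ground truth is exactly rank-$r$ and noiseless, $SG$ vanishes at the optimum, placing us in the interpolation regime where the contraction is purely multiplicative with no additive error floor.

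Next I would carry out the one-step-in-expectation analysis. Expanding the quartic $f(X_t - \alpha\,SG(X_t))$ and taking the conditional expectation, the identity $\E[SG(X_t)\mid X_t]=\nabla f(X_t)$ produces a first-order gain $-\alpha\|\nabla f(X_t)\|_F^2$ and a second-order term controlled by $\alpha^2\,\E[\|SG(X_t)\|_F^2\mid X_t]$ (higher powers of $\alpha$ being negligible for the tiny step size). Substituting the two estimates above and imposing $\max_i h_i(X_t)\le h_{\max}$ bounds the second-order term by $\alpha^2\,C\,d^2 h_{\max}\,f(X_t)$, which is dominated by the first-order gain $\alpha\,\kappa^{-1} f(X_t)$ exactly when $\alpha \lesssim (\kappa\,h_{\max}\,d^2)^{-1}$ --- the origin of the step-size restriction --- and yields $\E[f(X_{t+1})\mid X_t]\le (1-\tfrac{\alpha}{2\kappa})\,f(X_t)$. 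In parallel I would show that $\E[h_i(X_{t+1})\mid X_t]$ contracts toward the ground-truth coherence level $\mu r/d$, so that the slack factor $20\kappa^2$ built into $h_{\max}$ is never consumed in expectation.

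To upgrade these conditional-expectation statements into the pathwise high-probability bounds claimed, I would introduce the stopping time $\tau$ equal to the first iteration at which either invariant fails, and form the stopped processes $(1-\tfrac{\alpha}{2\kappa})^{-t} f(X_{t\wedge\tau})$ and, for each row $i$, $h_i(X_{t\wedge\tau})$. The one-step estimates make each of these a nonnegative supermartingale, while the invariants together with the small step size bound its increments and predictable quadratic variation. A Freedman/Bernstein-type maximal inequality then controls the probability that any of these processes escapes its envelope before time $T$; the extra $\log d$ in the step-size budget is precisely what drives each such failure probability down to $d^{-10}$, and a union bound over the $O(d)$ rows and the $T$ iterations delivers the stated $1-T/d^{10}$. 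The factor $\tfrac12$ slack in the hypotheses $f(X_0)\le\tfrac12 f_{\max}$ and $\max_i h_i(X_0)\le\tfrac12 h_{\max}$ supplies the initial gap that these maximal inequalities require.

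The main obstacle I anticipate is the circular coupling between the two invariants: the multiplicative decrease of $f$ relies on $\max_i h_i\le h_{\max}$ to tame the $d^2$-scaled variance of $SG$, while the contraction of the row energies $h_i$ in turn relies on $f$ already being small enough that each row is only mildly perturbed per step. Breaking this circularity is what forces a \emph{joint} stopping-time induction rather than two separate arguments, and controlling $\max_i h_i$ uniformly over all $d$ rows and all $T$ iterations --- where most of the probability budget and the $\kappa^2$ inflation of $h_{\max}$ are spent --- is the genuinely delicate part of the argument.
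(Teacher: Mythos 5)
Your proposal matches the argument this paper attributes to \citet{jin2016provable} and sketches in its appendix: a local gradient-dominance bound $\|\nabla f(X)\|_{F}^{2}\ge 8\lambda_{\min}(Z^{T}Z)\,f(X)$ giving the $(1-\alpha/2\kappa)$ contraction, a second-moment bound $\E[\|SG(X)\|_{F}^{2}]\le 16d^{2}f(X)\cdot\max_{i}h_{i}(X)$ that is proportional to $f(X)$, coupled supermartingales on $f$ and on each $h_{i}$ restricted to the good event, and an Azuma--Bernstein maximal inequality with a union bound over the $d$ rows and $T$ steps. One small correction: in expectation $h_{i}$ contracts toward $\tfrac{1}{2}h_{\max}\approx 10\kappa^{2}\mu r/d$ rather than toward the ground-truth level $\mu r/d$, because the equilibrium of $\E[h_{i}(X_{+})]\le(1-8\alpha/\kappa)h_{i}(X)+8\alpha\sqrt{h_{i}(X)h_{i}(Z)}+O(\alpha^{2}d^{2}h_{\max}^{2})$ sits at $h_{i}\approx\kappa^{2}h_{i}(Z)$, so the $\kappa^{2}$ in $h_{\max}$ is forced rather than unconsumed slack --- but you correctly identify this as where the $\kappa^{2}$ is spent, and your route is otherwise the same as the paper's.
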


The reason for \thmref{jin}'s additional $\kappa^{3}$ dependence
beyond full-batch gradient descent is due to its need to maintain
\emph{incoherence} in its iterates. Using standard techniques on martingale
concentration, one can readily show that SGD replicates a single
iteration of full-batch gradient descent over an epoch of $d^{2}$
iterations.  This results in an iteration count $O(\kappa\cdot d^{2}\log(1/\epsilon))$
with an optimal dependence on $\kappa$, but the entire matrix is
already fully observed after collecting $d^{2}$ samples. Instead,
\citet{jin2016provable} noted that the \emph{variance} of SGD
iterations is controlled by the step-size $\alpha$ times the maximum
coherence $\mu_{X}=\frac{d}{r}\cdot\max_{i,t}\|e_{i}^{T}X_{t}\|^{2}$
over the iterates $X_{t},X_{t-1},\dots,X_{0}$. If the iterates can
be kept incoherent with $\mu_{X}=O(1)$, then SGD with a more
aggressive step-size will reproduce an
iteration of full-batch gradient descent after an epoch of just $O(dr\log d)$
iterations. 

The main finding in \citet{jin2016provable}'s proof of \thmref{jin}
is that the stochasticity of SGD is enough to keep the iterates
incoherent. This contrasts with full-batch methods at the time, which required an added regularizer~\citep{sun2016guaranteed,ge2016matrix,chi2019nonconvex}
or an explicit projection step~\citep{tong2021accelerating}. 
(As pointed out by a reviewer, it was later shown by \citet{ma2018implicit} that full-batch gradient descent 
is also able to maintain incoherence without a regularizer nor a projection.)
Unfortunately,
maintaining incoherence requires shrinking the step-size by a factor
of $\kappa$, and the actual value of $\mu_{X}$ that results is also
a factor of $\kappa^{2}$ worse than the original coherence $\mu$
of the ground truth $Z$. The resulting iteration count $O(\kappa^{4}\cdot dr\log(d/\epsilon))$
is made optimal with respect to $d,$ $r,$ and $1/\epsilon$, but
only at the cost of worsening its the dependence on the condition
number $\kappa$ by another \emph{three exponents}. 

\global\long\def\spur{\mathrm{spur}}%
Finally, the quality of the initial point $X_{0}$ also has a dependence
on the condition number $\kappa$. In order to guarantee linear convergence,
\thmref{jin} requires $X_{0}$ to lie in the neighborhood $\|X_{0}X_{0}^{T}-ZZ^{T}\|_{F}<\lambda_{\min}(Z^{T}Z)=O(\kappa^{-1})$.
This dependence on $\kappa$ is optimal, because full-batch gradient
descent must lose its ability to converge linearly in the limit $\kappa\to\infty$~\citep{zhuo2021computational,zhang2021preconditioned}.
However, the leading constant can be very pessmistic, because the
theorem must formally exclude spurious critical points $X_{\spur}$
that have $\nabla f(X_{\spur})=0$ but $f(X_{\spur})>0$ in order
to be provably correct. In practice, it is commonly observed that
SGD converges globally, starting from an arbitrary, possibly
random initialization~\citep{ge2016matrix}, at a linear rate that
is consistent with local convergence theorems like \thmref{jin}.
It is now commonly argued that gradient methods can escape saddle
points with high probability~\citep{lee2019first}, and so their
performance is primarily dictated by local convergence behavior~\citep{jin2017escape,jin2021nonconvex}.

\section{\label{sec:main} Proposed algorithm and main result}

Inspired by a recent full-batch gradient method called
ScaledGD~\citep{tong2021accelerating,tong2022scaling} and a closely
related algorithm PrecGD~\citep{zhang2021preconditioned}, we proposed the following algorithm
\begin{equation}
X_{t+1}=X_{t}-\alpha SG(X_{t})(X_{t}^{T}X_{t})^{-1}\qquad\text{ where }\alpha>0.\tag{ScaledSGD}\label{scaledsgd}
\end{equation}
As we mentioned in the introduction, the preconditioner $P=(X^T X)^{-1}$ can be precomputed and cached in a practical 
implementation, and afterwards efficiently updated using the Sherman--Morrison formula. The per-iteration cost of \ref{scaledsgd}
is $O(r^{2})$ arithmetic operations and 1 random sample, which for
modest values of $r$ is only marginally more than the cost of \ref{sgd}.

Our main result in this paper is that, with a region sufficiently
close to the ground truth, this simple rescaling allows ScaledSGD
to converge linearly to $\epsilon$-accuracy $O(dr\log(d/\epsilon))$
iterations, with no further dependence on the condition number $\kappa$.
This iteration count is optimal with respect to $d,$ $r,$ $1/\epsilon,$
and $\kappa$, and in fact matches SGD with a\emph{ perfectly
conditioned} ground truth $\kappa=1$. In our numerical experiments,
we observe that ScaledSGD converges globally from a random
initialization at the same rate as SGD as if $\kappa=1$.
\begin{thm}[Main]
\label{thm:main}For $Z\in\R^{d\times r}$ with $\sigma_{\max}(Z)=1$
and $f(X)=\|XX^{T}-ZZ^{T}\|_{F}^{2}$ and $g_{i}(X)=e_{i}^{T}X(X^{T}X)^{-1}X^{T}e_{i}$,
select a radius $\rho<1/2$ and set
\[
f_{\max}\eqdef\left(\frac{\rho}{\kappa}\right)^{2},\qquad g_{\max}\eqdef\frac{2^{4}}{(1-2\rho)^{2}}\cdot\frac{\mu r}{d}.
\]
For an initial point $X_{0}\in\R^{d\times r}$ that satifies $f(X_{0})\le\frac{1}{2}f_{\max}$
and $\max_{i}g_{i}(X_{0})\le\frac{1}{2}g_{\max}$, there exists some
constant $c$ such that for any learning rate $\alpha<c\cdot[(g_{\max}+\rho)\cdot d^{2}\log d]^{-1}$,
with probability at least $1-T/d^{10}$, we will have for all $t\le T$
iterations of \ref{scaledsgd} that:
\[
f(X_{t})\le\left(1-\frac{\alpha}{2}\right)^{t}\cdot f_{\max},\qquad\max_{i}g_{i}(X_{t})\le g_{\max}.
\]
\end{thm}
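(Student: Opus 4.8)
The plan is to run a coupled induction on the theorem's two invariants --- geometric decay of the residual $f(X_{t})$ and boundedness of the leverage scores $g_{i}(X_{t})$ --- tied together by a single stopping time. Writing $P=(X^{T}X)^{-1}$ and $E=XX^{T}-ZZ^{T}$, and letting $\mathcal{F}_{t}$ be the natural filtration, I would define $\tau$ to be the first iteration at which either $f(X_{t})\le(1-\alpha/2)^{t}f_{\max}$ or $\max_{i}g_{i}(X_{t})\le g_{\max}$ fails. The factor-$2$ slack in the initial conditions gives room to start the induction. I would then prove two things: that both one-step estimates below are valid on $\{t<\tau\}$ (where, by construction, both invariants still hold), and that $P(\tau\le T)\le T/d^{10}$. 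On the complementary event both conclusions of \thmref{main} hold for all $t\le T$.

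The first ingredient is a one-step descent estimate for $f$. Expanding $f(X_{t+1})$ to second order in the increment $\Delta=-\alpha\,SG(X_{t})P$ gives $f(X_{t+1})-f(X_{t})=\inner{\nabla f(X_{t})}{\Delta}+2\inner{E}{\Delta\Delta^{T}}+\|X\Delta^{T}+\Delta X^{T}\|_{F}^{2}$. Taking conditional expectation and using $\E[SG(X)]=\nabla f(X)=4EX$, the drift term becomes $-16\alpha\,\mathrm{tr}(PX^{T}E^{2}X)=-16\alpha\|XPX^{T}E\|_{F}^{2}$, the squared residual seen through the orthogonal projection $XPX^{T}$ onto the range of $X$. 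This is the mechanism of \lemref{gdecr}: when $f$ is small --- which the choice $f_{\max}=(\rho/\kappa)^{2}$ guarantees inside the invariant region --- the range of $X$ is $\rho$-close to the range of $Z$, so a Davis--Kahan-type perturbation bound forces $\|XPX^{T}E\|_{F}^{2}\ge c_{1}f(X_{t})$ with $c_{1}$ \emph{independent of $\kappa$}. The same projection structure tames the variance: the dominant second-order term satisfies $\|XPx_{j}\|^{2}=x_{j}^{T}P(X^{T}X)Px_{j}=x_{j}^{T}Px_{j}=g_{j}\le g_{\max}$, so $\E[\|X\Delta^{T}+\Delta X^{T}\|_{F}^{2}\mid\mathcal{F}_{t}]\le C\alpha^{2}d^{2}g_{\max}f(X_{t})$, again with no $\kappa$. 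Choosing $\alpha$ below the stated threshold then yields $\E[f(X_{t+1})\mid\mathcal{F}_{t}]\le(1-\alpha/2)f(X_{t})$ on $\{t<\tau\}$, i.e. $(1-\alpha/2)^{-t}f(X_{t\wedge\tau})$ is a supermartingale.

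The second, and by far the harder, ingredient is to keep every leverage score $g_{i}(X_{t})$ below $g_{\max}$ with high probability. Since a single step modifies only rows $i$ and $j$ of $X$ and updates $P$ by four Sherman--Morrison corrections, I would derive a one-step bound showing that, conditioned on $\mathcal{F}_{t}$, each $g_{i}$ has non-positive drift (up to an $O(\alpha f)$ correction) together with bounded increments and conditional variance controlled by $\alpha$ and the current $g$-values. The scaling-invariance of the leverage score, $g_{i}(XR)=g_{i}(X)$ for any invertible $R$, is what makes this possible and is precisely the reason the resulting bound $g_{\max}=O(\mu r/d)$ carries no $\kappa^{2}$ penalty, unlike the row-norm coherence $h_{\max}=\kappa^{2}\mu r/d$ of \thmref{jin}. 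Feeding these estimates into a Freedman/Bernstein-type martingale concentration inequality, with the $\log d$ factor in the step-size tuned so that the per-step, per-coordinate failure probability is $O(d^{-11})$, and union-bounding over the $d$ coordinates and $T$ iterations, gives $P(\max_{i,\,t\le T}g_{i}(X_{t})>g_{\max})\le T/d^{10}$. A parallel, and easier, Bernstein bound on the one-step deviation of $f$, using the same variance estimate from the descent step, controls the other half of $\{\tau\le T\}$.

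I expect the coherence control to be the principal obstacle. The descent-in-expectation calculation is essentially the ScaledGD argument, and the projection identities that remove $\kappa$ are clean; but showing that the stochastic iterates \emph{stay} incoherent requires a careful accounting of how each Sherman--Morrison update perturbs the leverage scores, a conditional-variance bound tight enough to survive the union bound over all coordinates and iterations, and verification that the drift of $g_{i}$ genuinely does not grow. It is here that the interplay between the preconditioner and the random sampling, rather than any explicit projection as in full-batch ScaledGD, must be shown to do the work of maintaining incoherence.
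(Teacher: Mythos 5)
Your plan reproduces the paper's own proof architecture essentially step for step: your stopping time $\tau$ is the paper's indicator event $\mathfrak{E}_t$, your $\kappa$-free drift $-16\alpha\operatorname{tr}(PX^{T}E^{2}X)\le-c_{1}\alpha f(X_{t})$ is the gradient-dominance bound $(\|\nabla f(X)\|_{X}^{*})^{2}\ge13f(X)$ of \lemref{fdecr}, and your variance bound $O(\alpha^{2}d^{2}g_{\max}f)$ together with the leverage-score drift-plus-Freedman argument (per-coordinate failure $O(d^{-11})$, union bound over $d$ rows and $T$ steps) is exactly the supermartingale machinery of \lemref{fmartin} through \lemref{gconc}. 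One small correction: the drift of $g_{i}$ is not ``non-positive up to an $O(\alpha f)$ correction'' but a contraction toward $\tfrac{1}{2}g_{\max}$ with cross term $\alpha\sqrt{g_{i}(X)g_{i}(Z)}/(1-\rho)$ as in \lemref{gdecr}, which is precisely why $g_{\max}$ must be inflated by the factor $16/(1-2\rho)^{2}$ over the ground truth's coherence.
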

\thmref{main} eliminates all dependencies on the condition number
$\kappa$ in \thmref{jin} except for the quality of the initial point,
which we had already noted earlier as being optimal. Our main finding
is that it is possible to maintain incoherence while making aggressive
step-sizes towards a highly ill-conditioned ground truth $ZZ^{T}$.
In fact, \thmref{main} says that, with high probability, the maximum
coherence $\mu_{X}$ over of any iterate $X_{t}$ will only be a \emph{mild
constant factor} of $\approx 16$ times worse than the
coherence $\mu$ of the ground truth $ZZ^{T}$. This is particularly
surprising in view of the fact that every iteration of ScaledSGD
involves inverting a potentially highly ill-conditioned matrix $(X^{T}X)^{-1}$.
In contrast, even without inverting matrices, \thmref{jin} says
that \ref{sgd} is only able to keep   $\mu_{X}$
within a factor of $\kappa^{2}$ of $\mu$, and only by shrinking
the step-size $\alpha$ by another factor of $\kappa$. 

However, the price we pay for maintaining incoherence is that the
quality of the initial point $X_{0}$ now gains a dependence on dimension
$d$, in addition to the condition number $\kappa$. In order to guarantee fast
linear convergence independent of $\kappa$, \thmref{main} requires
$X_{0}$ to lie in the neighborhood $\|X_{0}X_{0}^{T}-ZZ^{T}\|_{F}<\mu r\lambda_{\min}(Z^{T}Z)/d=(\kappa d)^{-1}$,
so that $\rho$ can be set to be the same order of magnitude as $g_{\max}$.
In essence, the ``effective'' condition number of the ground truth
has been worsened by another factor of $d$. This shrinks the size
of our local neighborhood by a factor of $d$, but has no impact on
the convergence rate of the resulting iterations.

In the limit that $\kappa\to\infty$ and the search rank $r$ becomes
\emph{overparameterized} with respect to the true rank $r^{\star}<r$
of $ZZ^{T}$, both full-batch gradient descent and SGD slows
down to a \emph{sublinear} convergence rate, in theory and in practice~\citep{zhuo2021computational,zhang2021preconditioned}.
While \thmref{main} is no longer applicable, we observe in our numerical
experiments that ScaledSGD nevertheless maintains its fast
linear convergence rate as if $\kappa=1$. Following PrecGD~\citep{zhang2021preconditioned},
we believe that introducing a small identity perturbation to the scaling
matrix of ScaledSGD, as in $(X^{T}X+\eta I)^{-1}$ for some
$\eta\approx\sqrt{f(X)}$, should be enough to rigorously extend \thmref{main}
to the overparameterized regime. We leave this extension as future
work. 

\section{Key ideas for the proof}

We begin by explaining the mechanism by which \ref{sgd} slows down
when converging towards an ill-conditioned ground truth. Recall that
\[
\E[SG(X)]=\E[2d^{2}\cdot(XX^{T}-ZZ^{T})_{i,j}\cdot(e_{i}e_{j}^{T}+e_{j}e_{i}^{T})X]=4(XX^{T}-ZZ^{T})X=\nabla f(X).
\]
As $XX^{T}$ converges towards an ill-conditioned ground truth $ZZ^{T}$,
the factor matrix $X$ must become progressively ill-conditioned,
with
\[
\lambda_{\min}(X^{T}X)=\lambda_{r}(XX^{T})\le\lambda_{r}(ZZ^{T})+\|XX^{T}-ZZ^{T}\|_{F}\le\frac{1+\rho}{\kappa}.
\]
Therefore, it is possible for components of the error vector $XX^{T}-ZZ^{T}$
to become ``invisible'' by aligning within the ill-conditioned subspaces
of $X$. As SGD progresses towards the solution, these ill-conditioned
subspaces of $X$ become the slowest components of the error vector
to converge to zero. On the other hand, the maximum step-size that
can be taken is controlled by the most well-conditioned subspaces
of $X$. A simple idea, therefore, is to rescale the ill-conditioned components
of the gradient $\nabla f(X)$ in order to make the ill-conditioned
subspaces of $X$ more ``visible''.

More concretely, define the local norm of the gradient as $\|\nabla f(X)\|_X = \|\nabla f(X) (X^TX)^{1/2}\|_F$ and its corresponding dual norm as $\|\nabla f(X)\|^*_X = \|\nabla f(X) (X^TX)^{-1/2}\|_F$. It has long been known (see e.g. \cite{zheng2015convergent,tu2016low}) that rescaling the gradient yields 
\[
\|\nabla f(X)\|_{X}^{*} \eqdef\|4(XX^{T}-ZZ^{T})X(X^{T}X)^{-1/2}\|_{F}=4\cos\theta\cdot\|XX^{T}-ZZ^{T}\|_{F},
\]
where $\theta$ is the angle between the error vector $XX^T-ZZ^T$ and the linear subspace $\{XY^T+YX^T: Y\in\mathbb{R}^{d\times r}\}$.
This insight immediately suggests an iteration like $X_{+}=X-\alpha\nabla f(X)(X^{T}X)^{-1}$. In fact, the gradients of $f$ have some Lipschitz constant $L$, so 
\begin{align*}
f(X_{+}) & \le f(X)-\alpha\inner{\nabla f(X)}{\nabla f(X)(X^{T}X)^{-1}}+\frac{L}{2}\alpha^{2}\|\nabla f(X)(X^{T}X)^{-1}\|_{F}^{2},\\
 & \le f(X)-\alpha(\|\nabla f(X)\|_{X}^{*})^{2}+\frac{L_{X}}{2}\alpha^{2}(\|\nabla f(X)\|_{X}^{*})^{2},\\
 & \le\left[1-\alpha\cdot8\cos^{2}\theta\right]f(X)\qquad\text{ for }\alpha\le1/L_{X}.
\end{align*}
However, a naive analysis finds that $L_{X}=L/\lambda_{\min}(X^{T}X)\approx L\cdot\kappa$,
and this causes the step-size to shrink by a factor of $\kappa$.
The main motivating insight behind ScaledGD~\citep{tong2021accelerating,tong2022scaling} and later PrecGD~\citep{zhang2021preconditioned} is that,
with a finer analysis, it is possible to prove Lipschitz continuity
under a local change of norm.
\begin{lem}[Function descent]
\label{lem:fdecr}Let $X,Z\in\R^{n\times r}$ satisfy $\|XX^{T}-ZZ^{T}\|_{F}\le\rho\lambda_{\min}(Z^{T}Z)$
where $\rho<1/2$. Then, the function $f(X)=\|XX^{T}-ZZ^{T}\|_{F}^{2}$
satisfies
\[
f(X+V)\le f(X)+\inner{\nabla f(X)}V+\frac{L_{X}}{2}\|V\|_{X}^{2},\qquad (\|\nabla f(X)\|_{X}^{*})^{2}\ge 13 \cdot f(X)
\]
for all $\|V\|_{X}\le C\cdot\sqrt{f(X)}$ with $L_{X}=6+8C+2C^{2}=O(1+C^{2})$. 
\end{lem}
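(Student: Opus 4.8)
The plan is to reduce both conclusions to estimates on the error matrix $E\eqdef XX^{T}-ZZ^{T}$ and the orthogonal projector $P\eqdef X(X^{T}X)^{-1}X^{T}$ onto the column space of $X$. The hypothesis $\|E\|_{F}\le\rho\lambda_{\min}(Z^{T}Z)$ bounds $E$ in operator norm, so Weyl's inequality applied to $XX^{T}=ZZ^{T}+E$ yields $\lambda_{\min}(X^{T}X)=\lambda_{r}(XX^{T})\ge(1-\rho)\lambda_{\min}(Z^{T}Z)>0$. In particular $X$ has full column rank and $P$ is well defined; this is the one place where the ill-conditioning of $X$ is tamed, and it is what ultimately frees the final constants from any dependence on $\kappa$.

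For the gradient lower bound I would first rewrite it through $P$. Since $\nabla f(X)=4EX$ and $X(X^{T}X)^{-1/2}$ has orthonormal columns, $(\|\nabla f(X)\|_{X}^{*})^{2}=16\|EX(X^{T}X)^{-1/2}\|_{F}^{2}=16\|EP\|_{F}^{2}$, while the Pythagorean identity $\|E\|_{F}^{2}=\|EP\|_{F}^{2}+\|E(I-P)\|_{F}^{2}$ reduces the task to bounding the off-range energy $\|E(I-P)\|_{F}$. The key simplification is that $XX^{T}(I-P)=0$, so $E(I-P)=-ZZ^{T}(I-P)$. Setting $W\eqdef(I-P)Z$ and $A\eqdef(X^{T}X)^{-1/2}X^{T}Z$, a short orthogonality computation gives $\|E(I-P)\|_{F}^{2}=\|WA^{T}\|_{F}^{2}+\|WW^{T}\|_{F}^{2}$. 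A second use of Weyl bounds the in-range block from below, $\sigma_{\min}(A)^{2}\ge(1-2\rho)\lambda_{\min}(Z^{T}Z)$, while $WW^{T}=-(I-P)E(I-P)$ gives $\|W\|_{\mathrm{op}}^{2}\le\rho\lambda_{\min}(Z^{T}Z)$. Combining these with $\|E\|_{F}^{2}\ge2\|WA^{T}\|_{F}^{2}+\|WW^{T}\|_{F}^{2}$ controls $\|E(I-P)\|_{F}$ by a fixed fraction of $\|E\|_{F}$, and hence $(\|\nabla f(X)\|_{X}^{*})^{2}\ge c(\rho)\cdot f(X)$ for an explicit constant; matching the stated numeric value is then a matter of tracking $\rho<1/2$ carefully.

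For the smoothness inequality I would expand $f(X+V)$ exactly. The perturbation of $XX^{T}$ is $XV^{T}+VX^{T}+VV^{T}$, and using the symmetry of $E$ the terms linear in this perturbation collapse to $f(X)+\inner{\nabla f(X)}{V}$, leaving the remainder $R=2\inner{E}{VV^{T}}+\|XV^{T}+VX^{T}+VV^{T}\|_{F}^{2}$. The crux is that every piece can be measured in the local norm $\|V\|_{X}=\|XV^{T}\|_{F}$: because $V^{T}V\succeq0$ one has $\|VV^{T}\|_{F}\le\|V\|_{X}^{2}/\lambda_{\min}(X^{T}X)\le\|V\|_{X}^{2}/[(1-\rho)\lambda_{\min}(Z^{T}Z)]$, and feeding in both $\|E\|_{F}\le\rho\lambda_{\min}(Z^{T}Z)$ and $\|V\|_{X}\le C\sqrt{f(X)}\le C\rho\lambda_{\min}(Z^{T}Z)$ makes every factor of $\lambda_{\min}(Z^{T}Z)$ cancel. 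Expanding the square, bounding its cross terms by Cauchy--Schwarz (e.g.\ $\inner{XV^{T}}{VX^{T}}\le\|V\|_{X}^{2}$), and collecting the resulting dimensionless coefficients gives $R\le\frac{L_{X}}{2}\|V\|_{X}^{2}$ with $L_{X}=O(1+C^{2})$.

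I expect the gradient lower bound, not the smoothness bound, to be the main obstacle --- specifically, producing a clean constant rather than the algebra itself. The naive estimate $\|E\|_{F}^{2}\ge2\|WA^{T}\|_{F}^{2}$ alone only shows that the off-range energy is at most about half of $\|E\|_{F}^{2}$; sharpening this requires exploiting that the genuinely ``invisible'' normal block $WW^{T}$ is dominated by the cross block $WA^{T}$, which in turn hinges on $\sigma_{\min}(A)$ staying bounded away from zero. This is exactly where the margin $\rho<1/2$ is indispensable: as $\rho\to1/2$ the lower bound $(1-2\rho)\lambda_{\min}(Z^{T}Z)$ on $\sigma_{\min}(A)^{2}$ degenerates, $X$ is permitted to approach rank deficiency, and both the descent and the gradient estimate break down.
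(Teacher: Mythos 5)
Your plan for the smoothness inequality is essentially the paper's own proof: expand $f(X+V)$ exactly as a quartic in $V$, use Weyl's inequality to get $\lambda_{\min}(X^{T}X)\ge(1-\rho)\lambda_{\min}(Z^{T}Z)$, and absorb the cubic and quartic terms into $\|V\|_{X}^{2}$ via $\|V\|_{X}\le C\sqrt{f(X)}\le C\rho\lambda_{\min}(Z^{T}Z)$. That part is sound and matches the paper.

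The gradient lower bound is where there is a genuine gap, and it is not the bookkeeping issue you anticipate. Your identity $(\|\nabla f(X)\|_{X}^{*})^{2}=16\|EP\|_{F}^{2}=16\bigl(\|E\|_{F}^{2}-\|E(I-P)\|_{F}^{2}\bigr)$ is exact, and your two decompositions $\|E(I-P)\|_{F}^{2}=\|WA^{T}\|_{F}^{2}+\|WW^{T}\|_{F}^{2}$ and $\|E\|_{F}^{2}=\|PEP\|_{F}^{2}+2\|WA^{T}\|_{F}^{2}+\|WW^{T}\|_{F}^{2}$ are correct. But they already expose the obstruction: the cross block $\|WA^{T}\|_{F}^{2}=\|PE(I-P)\|_{F}^{2}$ appears with coefficient $1$ in the discarded energy and coefficient $2$ in $\|E\|_{F}^{2}$, and nothing in the hypotheses makes it small relative to $\|E\|_{F}$ --- only the normal block $WW^{T}=-(I-P)E(I-P)$ is $O(\rho)\|E\|_{F}$. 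Dominating $WW^{T}$ by $WA^{T}$ therefore cannot push you past the $1/2$ barrier; the sharp outcome of your route is $\|E(I-P)\|_{F}^{2}\le\tfrac{1}{2-\rho}\|E\|_{F}^{2}$, i.e. $(\|\nabla f(X)\|_{X}^{*})^{2}\ge\tfrac{16(1-\rho)}{2-\rho}f(X)$, which is $\tfrac{16}{3}f(X)$ uniformly over $\rho<1/2$ and only approaches $8f(X)$ as $\rho\to0$. This is not a looseness of your estimates: with $d=2$, $r=1$, $X=e_{1}$, $Z=(\cos\phi,\sin\phi)^{T}$ one gets $f(X)=2\sin^{2}\phi$ and $(\|\nabla f(X)\|_{X}^{*})^{2}=16\sin^{2}\phi=8\,f(X)$ while the hypothesis holds with $\rho=\sqrt{2}|\sin\phi|\to0$, so the ratio $8$ is attained and the target constant $13$ is out of reach for the dual norm as defined by $\|G\|_{X}^{*}=\|G(X^{T}X)^{-1/2}\|_{F}$. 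The paper reaches $13$ by a different route, equating $\|\nabla f(X)\|_{X}^{*}$ with $4\|E\|_{F}\cos\theta$ where $\theta$ is the angle to the tangent space $\{XY^{T}+YX^{T}\}$, so that only $(I-P)E(I-P)$ is discarded and $\sin\theta=O(\rho)$; your exact computation differs from that identity by precisely the cross block, so your approach has in effect surfaced a discrepancy in the stated constant. Any completion of your argument should target a constant no larger than $16/3$ (valid for all $\rho<1/2$, and still sufficient for the descent estimate that consumes this lemma), and the closing claim that reaching $13$ is ``a matter of tracking $\rho<1/2$ carefully'' should be withdrawn.
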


This same idea can be ``stochastified'' in a straightforward manner.
Conditioning on the current iterate $X$, then the new iterate $X_{+}=X-\alpha SG(X)(X^{T}X)^{-1}$
has expectation
\[
\E[f(X_{+})]\le f(X)-\alpha\inner{\nabla f(X)}{\E[SG(X)(X^{T}X)^{-1}]}+\alpha\frac{L_{X}}{2}\E[(\|SG(X)\|_{X}^{*})^{2}].
\]
The linear term evaluates  as $\E[SG(X)(X^{T}X)^{-1}]=\nabla f(X)(X^{T}X)^{-1}$,
while the quadratic term is
\begin{align*}
\E[(\|SG(X)\|_{X}^{*})^{2}] \le \sum_{i,j}4d^{2}\cdot(XX^{T}-ZZ^{T})_{i,j}^{2}\cdot4\max_{i}(\|e_{i}^{T}X\|_{X}^{*})^{2}=16\cdot f(X)\cdot \max_i g_i(X),
\end{align*}
where $g_i(X)=e_i^T X(X^T X)^{-1} X^{T} e_i = (\|e_i ^T X\|_X^*)^2.$ Combined, we obtain geometric convergence
\begin{equation}
\label{fdescent}
\E[f(X_{+})]\le\left(1-\alpha\cdot8\cos^{2}\theta\right)f(X)\qquad\text{ for }\alpha=O(g_{\max}^{-1} \cdot d^{-2}).
\end{equation}
We see that the step-size depends crucially on the incoherence $g_i(X) \le g_{\max}$
of the current iterate. If the current iterate $X$ is incoherent
with $g_{\max}=O(1/d)$, then a step-size of $\alpha=O(1/d)$ is possible,
resulting in convergence in $O(dr\log(d/\epsilon))$ iterations, which can be shown using standard martingale techniques \cite{jin2016provable}. But
if the current iterate is $g_{\max}=O(1)$, then only a step-size of
$\alpha=O(1/d^{2})$ is possible, which forces us to compute $d^{2}$
iterations, thereby obviating the need to complete the matrix in the first place. 

Therefore, in order for prove rapid linear
convergence, we need to additionally show that with high
probability, the coherence $g_{k}(X){=}(\|e_{k}^{T}X\|_{X}^{*})^{2}$
remains $O(1)$ throughout ScaledGD iterations. This is the most challenging
part of our proof. Previous methods that applied a similar scaling
to full-batch GD~\citep{tong2021accelerating} required an explicit
projection onto the set of incoherent matrices at each iteration. Applying
a similar projection to ScaledSGD will take $O(d)$ time, which destroys
the scalability of our method. On the other hand, \citet{jin2016provable}
showed that the randomness in SGD is enough to keep the coherence
of the iterates within a factor of $\kappa^{2}$ times worse than
the coherence of the ground truth, and only by a step-size of at
most $\alpha=O(\kappa^{-1})$. 

Surprisingly, here we show that the randomness in ScaledSGD is enough
to keep the coherence of the iterates with a \emph{constant factor}
of the coherence the ground truth, using a step-size with no dependence
on $\kappa$. The following key lemma is the crucial insight of our
proof. First, it says that function $g_{k}(X)$ satisfies a ``descent
lemma'' with respect to the local norm $\|\cdot\|_{X}^{*}$. Second,
and much more importantly, it says that descending $g_{k}(X)$ along
the scaled gradient direction $\nabla f(X)(X^{T}X)^{-1}$ incurs a
linear decrement $\frac{1-2\rho}{1-\rho}g_{k}(X)$ with no dependence
of the condition number $\kappa$. This is in direct analogy to the
function value decrement in (\ref{fdescent}), which has no dependence
on $\kappa$, and in direct contrast to the proof of \citet{jin2016provable},
which is only able to achieve a decrement of $(8/ \kappa) g_{k}(X)$ due to the lack of rescaling by $(X^T X)^{-1}$. 

\begin{lem}[Coherence descent]
\label{lem:gdecr} Let $g_{k}(X)=e_{k}^{T}X(X^{T}X)^{-1}X^Te_{k}$. Under the same conditions as \lemref{fdecr}, we have
\begin{gather*}
g_{k}(X+V)\le g_{k}(X)+\inner V{\nabla g_{k}(X)}+\frac{5(\|V\|_{X}^{*})^{2}}{1-2\|V\|_{X}^{*}},\\
\inner{\nabla g_{k}(X)}{\nabla f(X)(X^{T}X)^{-1}}\ge\left[\frac{1-2\rho}{1-\rho}g_{k}(X)-\frac{1}{1-\rho}\sqrt{g_{k}(X)g_{k}(Z)}\right].
\end{gather*}
\end{lem}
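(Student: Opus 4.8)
The plan is to treat the two inequalities separately, since the first is a purely geometric ``descent lemma'' for the leverage score $g_k$ while the second is where the interaction with $f$ and the ground truth $Z$ enters. In both cases the starting point is an explicit formula for $\nabla g_k$. Writing $P_X=X(X^TX)^{-1}X^T$ for the orthogonal projector onto the column space of $X$ and $w=(X^TX)^{-1}X^Te_k$, and differentiating $g_k(X)=e_k^TP_Xe_k$ through the identity $d[(X^TX)^{-1}]=-(X^TX)^{-1}((dX)^TX+X^T\,dX)(X^TX)^{-1}$, I get the clean expression $\nabla g_k(X)=2(I-P_X)e_kw^T$. The single most useful consequence is the annihilation identity $(I-P_X)X=0$, which I will use repeatedly.

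\textbf{First inequality.} Here I would exploit that both $g_k$ and the local norm are invariant under $X\mapsto XR$ for invertible $R$: replacing $X$ by $X(X^TX)^{-1/2}$ and $V$ by $V(X^TX)^{-1/2}$ reduces everything to the case $X^TX=I$, where $\|V\|_X^*=\|V\|_F$ and $g_k(X)=\|X^Te_k\|^2\le 1$. Setting $a=X^Te_k$, $b=V^Te_k$, and $S=X^TV+V^TX+V^TV$, I would expand $g_k(X+V)=(a+b)^T(I+S)^{-1}(a+b)$ by the Neumann series. The constant and linear terms reproduce $g_k(X)+\inner{V}{\nabla g_k(X)}$ exactly, and the remainder collects all terms of order $\ge 2$ in $V$. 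Since $\|a\|\le 1$, $\|b\|\le\|V\|_F$, and $\|S\|\le 2\|V\|_F+\|V\|_F^2$, each order is controlled by a geometric series of ratio $\approx 2\|V\|_F$, which sums to the claimed $5(\|V\|_X^*)^2/(1-2\|V\|_X^*)$. This step is essentially bookkeeping; the only care needed is to confirm the leading constant and the ratio $2$ coming from $\|X^TV+V^TX\|\le 2\|V\|_F$.

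\textbf{Second inequality.} This is the crux. Substituting $\nabla f(X)(X^TX)^{-1}=4(XX^T-ZZ^T)X(X^TX)^{-1}$ into $\inner{\nabla g_k(X)}{\cdot}$ and using $(I-P_X)X=0$, the entire $XX^T$ contribution cancels and the inner product reduces to a purely $Z$-dependent expression of the form $-\,\mathrm{const}\cdot e_k^T(I-P_X)ZZ^TX(X^TX)^{-1}w$. I would then decompose $e_k=P_Xe_k+(I-P_X)e_k$ and insert $ZZ^T=P_Z(ZZ^T)$ with $P_Z=Z(Z^TZ)^{-1}Z^T$, so that both projectors appear explicitly; Cauchy--Schwarz against $\|P_Xe_k\|=\sqrt{g_k(X)}$ and $\|P_Ze_k\|=\sqrt{g_k(Z)}$ is what manufactures the $g_k(X)$ and $\sqrt{g_k(X)g_k(Z)}$ terms on the right-hand side.

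\textbf{Main obstacle.} The whole point of the lemma is that the reduced expression carries an ill-conditioned factor $(X^TX)^{-1}$ (in fact $(X^TX)^{-2}$, since $\nabla g_k$ and $\nabla f(X)(X^TX)^{-1}$ each supply one inverse), whose norm blows up like $\kappa$. The key is that every such factor appears multiplied against $ZZ^T$, and the closeness hypothesis $\|XX^T-ZZ^T\|_F\le\rho\lambda_{\min}(Z^TZ)$ controls this product: Weyl's inequality gives $\lambda_{\min}(X^TX)\ge(1-\rho)\lambda_{\min}(Z^TZ)$, so the $\kappa$-sized growth of $(X^TX)^{-1}$ is exactly offset by the $O(1/\kappa)$ scale that $ZZ^T$ carries on the relevant subspaces, producing the denominator $1-\rho$. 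The extra factor $1-2\rho$ in the numerator should come from a second use of closeness, quantifying how far the error can tilt the column space of $X$ away from that of $Z$. I expect the delicate part to be not any single inequality but verifying that \emph{no} hidden $\kappa$ survives this cancellation, which is precisely the variance-reduction mechanism the paper advertises.
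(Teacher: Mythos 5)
Your treatment of the second inequality is essentially the paper's own argument: the gradient formula $\nabla g_k(X)=2(I-P_X)e_ke_k^TX(X^TX)^{-1}$, the annihilation $(I-P_X)X=0$ killing the $XX^T$ contribution, the reduction to $-8\,e_k^T(I-P_X)ZZ^TX(X^TX)^{-2}X^Te_k$, and then splitting off the $P_X$ piece (lower-bounded by $\tfrac{1-2\rho}{1-\rho}g_k(X)$ after writing $ZZ^T=XX^T-E$) from the remainder (upper-bounded by $\tfrac{1}{1-\rho}\sqrt{g_k(X)g_k(Z)}$ via the orthonormal factors $X(X^TX)^{-1/2}$, $Z(Z^TZ)^{-1/2}$, the operator bound $\|ZZ^TX(X^TX)^{-2}X^T\|\le 1+\rho/(1-\rho)$, and Weyl's inequality) is exactly the paper's three-identity proof.

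For the first inequality you take a genuinely different route, and as written it does not deliver the stated constants. The paper does not expand a Neumann series: it applies the intermediate-value form of Taylor's theorem, computes $\nabla^2 g_k$ explicitly, bounds $\tfrac12\langle\nabla^2 g_k(\tilde X)[V],V\rangle\le 5(\|V\|_{\tilde X}^*)^2$ at the intermediate point $\tilde X=X+tV$ by counting five Hessian terms each of size $(\|V\|_{\tilde X}^*)^2$, and then converts back to the $X$-norm via a change-of-norm lemma $(\|V\|_{\tilde X}^*)^2\le(\|V\|_X^*)^2/(1-2t\|V\|_X^*)$ --- that is the sole source of the denominator $1-2\|V\|_X^*$. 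Your geometric-series bookkeeping has two concrete problems. First, convergence: with $\epsilon=\|V\|_F$ one only has $\|S\|\le 2\epsilon+\epsilon^2$, which exceeds $1$ for $\epsilon\in(\sqrt2-1,\tfrac12)$, so the series you expand need not converge throughout the range where the claimed bound is finite. Second, the crude term-by-term count does not produce $5$: the exact second-order coefficient is $b^Tb-2a^TTb-a^TUa+a^TT^2a$, whose naive bound is $(1+4+1+4)\epsilon^2=10\epsilon^2$, and the higher orders are similarly loose; recovering $5\epsilon^2$ requires the cancellations that the paper's explicit Hessian computation encodes. Neither issue is fatal downstream (any absolute constant and any denominator of the form $1-O(\|V\|_X^*)$ would suffice for Theorem 2), but your argument as proposed proves a weaker inequality than the one stated in the lemma.
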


Conditioning on $X$, we have for the search direction $V=SG(X)(X^{T}X)^{-1}$ and $X_{+}=X+V$
\begin{align}
\E[g_{k}(X_{+})] & \le g_{k}(X)-\alpha\inner{\nabla g_{k}(X)}{\E[V]}+\alpha^{2}\cdot\E\left[\frac{(\|V\|_{X}^{*})^{2}}{1-2\|V\|_{X}^{*}}\right]\nonumber \\
 & \le\left(1-\frac{1-2\rho}{1-\rho}\alpha\right)g_{k}(X)+\alpha\cdot\frac{1}{1-\rho}\cdot\sqrt{g_{k}(X)g_{k}(Z)}+\alpha^{2}\cdot\frac{\E\left[(\|V\|_{X}^{*})^{2}\right]}{1-2\|V\|_{X}^{*}}\nonumber \\
 & \le\left(1-\frac{1-2\rho}{1-\rho}\alpha\right)g_{k}(X)+\alpha\cdot\frac{\sqrt{\mu/g_{\max}}}{1-\rho}\cdot g_{\max}+\alpha^{2}\cdot\frac{O(d^{2}\cdot g_{\max}\cdot\rho^{2})}{1-O(g_{\max}^{1/2}\cdot\rho)}\nonumber \\
 & \le\left(1-\zeta\alpha\right)g_{k}(X)+\alpha\cdot\frac{\zeta}{2}g_{\max}\qquad\text{ for }\alpha=O(\rho^{-1}d^{-2}).\label{eq:gdecr}
\end{align}
It then follows that $g_{k}(X_{+})$ converges geometrically towards
$\frac{1}{2}g_{\max}$ in expectation, with a convergence rate $(1-\zeta\alpha)$
that is independent of the condition number $\kappa$:
\[
\E[g_{k}(X_{+})-\frac{1}{2}g_{\max}]\le\left[\left(1-\zeta\alpha\right)g_{k}(X)+\alpha\cdot\frac{\zeta}{2}g_{\max}\right]-\frac{1}{2}g_{\max}\le\left(1-\zeta\alpha\right)\left[g_{k}(X)-\frac{1}{2}g_{\max}\right].
\]
The proof of \thmref{main} then follows from standard techniques, by making the two decrement conditions (\ref{fdescent}) and (\ref{eq:gdecr}) into supermartingales and applying a standard concentration inequality. We defer the rigorous proof to appendix \ref{app:proof}.

\section{Experimental validation}
In this section we compare the practical performance of \ref{scaledsgd} and \ref{sgd} for the RMSE loss function in \thmref{main} and two real-world loss functions: the pairwise RMSE loss used to complete Euclidean Distance Matrices (EDM) in wireless communication networks; and the Bayesian Personalized Ranking (BRP) loss used to generate personalized item recommendation in collaborative filtering. In each case, ScaledSGD remains highly efficient since it only updates two or three rows at a time, and the preconditioner $P$ can be computed through low-rank updates, for a per-iteration cost of $O(r^2).$ All of our experiments use random Gaussian initializations and an initial $P=\sigma^2I$. To be able to accurately measure and report the effects of ill-conditioning on ScaledSGD and SGD, we focus on small-scale synthetic datasets in the first two experiments, for which the ground truth is explicitly known, and where the condition numbers can be finely controlled. In addition, to gauge the scalability of ScaledSGD on huge-scale real-world datasets, in the third experiment, we apply ScaledSGD to generate personalized item recommendation using MovieLens25M dataset \cite{movielens}, for which the underlying item-item matrix has more than 62,000 items and 100 million pairwise samples are used during training. (Due to space constraints, we defer the details on the experimental setup, mathematical formulations, and the actual update equations to Appendix \ref{app:exp}.) The code for all experiments are available at \url{https://github.com/Hong-Ming/ScaledSGD}.

\paragraph{Matrix completion with RMSE loss.} The problem formulation is discussed in Section~\ref{sec:main}. Figure~\ref{fig:rmse} plots the error $f(X) = \|XX^T-M\|_F^2$ as the number of epochs increases. As expected, in the well-conditioned case, both ScaledSGD and SGD converges to machine error at roughly the same linear rate. However, in the ill-conditioned case, SGD slows down significantly while ScaledSGD converges at almost exactly the same rate as in the well-conditioned case. 
\begin{figure}[h!]
    \centering
    \begin{subfigure}{0.5\textwidth}
      \centering
      \includegraphics[width=\linewidth]{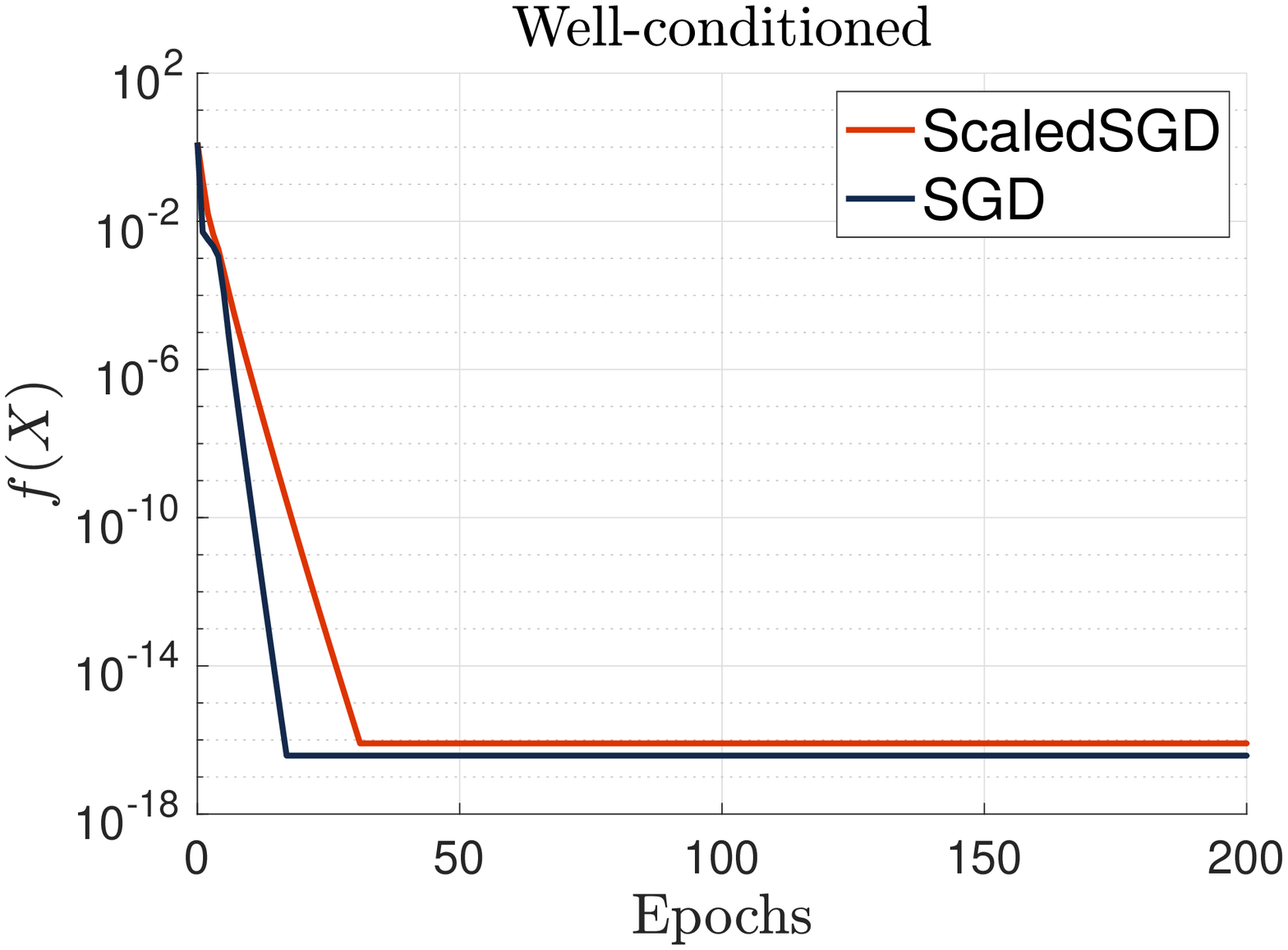}
    \end{subfigure}%
    \begin{subfigure}{0.5\textwidth}
      \centering
      \includegraphics[width=\linewidth]{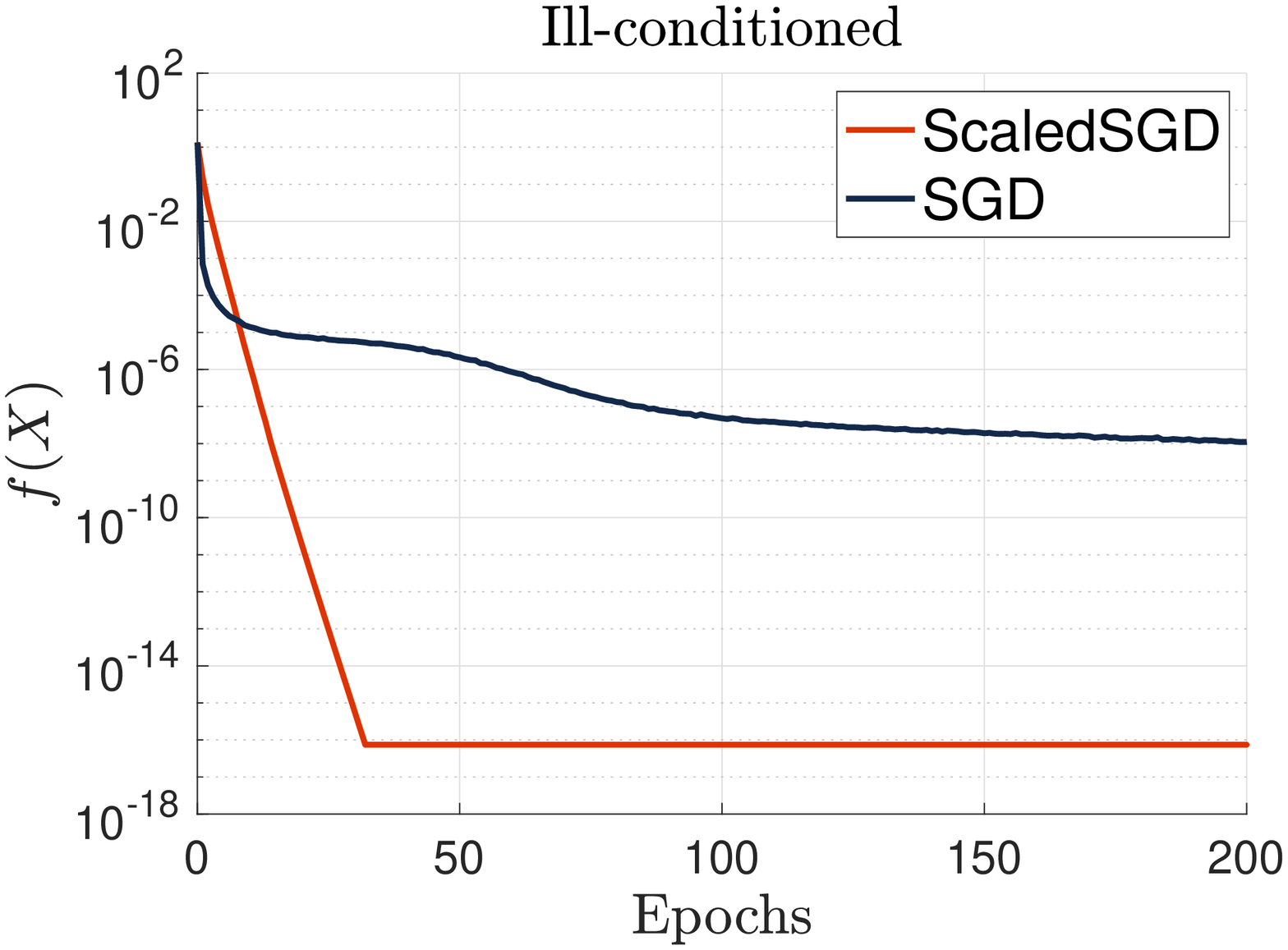}
    \end{subfigure}
    \caption{\textbf{Matrix Completion with RMSE loss.} We compare the convergence rate of \ref{scaledsgd} and \ref{sgd} for a well-conditioned and ill-conditioned ground truth matrix of size $30\times 30$ and rank 3. (\textbf{Left}) Well-conditoned $M$, $\kappa(M) = 1$. Step-size $\alpha=0.3$. Both ScaledSGD and SGD converges quickly to the ground truth. (\textbf{Right}) Ill-conditoned $M$, $\kappa(M) = 10^4$. Step-size $\alpha=0.3$. SGD stagnates while ScaledSGD retains the same convergence rate as the well-conditioned case.}
    \label{fig:rmse}
\end{figure}

\paragraph{Euclidean distance matrix (EDM) completion.} The Euclidean distance matrix (EDM) is a matrix of pairwise distance between $d$ points in Euclidean space \citep{dokmanic2015euclidean}. In applications such as wireless sensor networks, estimation of unknown distances, i.e., completing the EDM is often required. We emphasize that this loss function is a pairwise loss, meaning that each measurement indexes multiple elements of the ground truth matrix.

To demonstrate the efficacy of ScaledSGD, we conduct two experiments where $D$ is well-conditioned and ill-conditioned respectively: \textbf{Experiment 1.} We uniformly sample $30$ points in a cube center at origin with side length $2$, and use them to compute the ground truth EDM $D$. In this case, each row $x_i\in\mathbb R^3$ corresponds to the coordinates of the $i$-th sample. The corresponding matrix $X\in\mathbb R^{30\times 3}$ is well-conditioned because of the uniform sampling. \textbf{Experiment 2.} The ground truth EDM is generated with $25$ samples lie in the same cube in experiment 1, and $5$ samples lie far away from the the cube. These five outliers make the corresponding $X$ become ill-conditioned.

 \begin{figure}[h!]
    \centering
    \begin{subfigure}{0.5\textwidth}
      \centering
      \includegraphics[width=\linewidth]{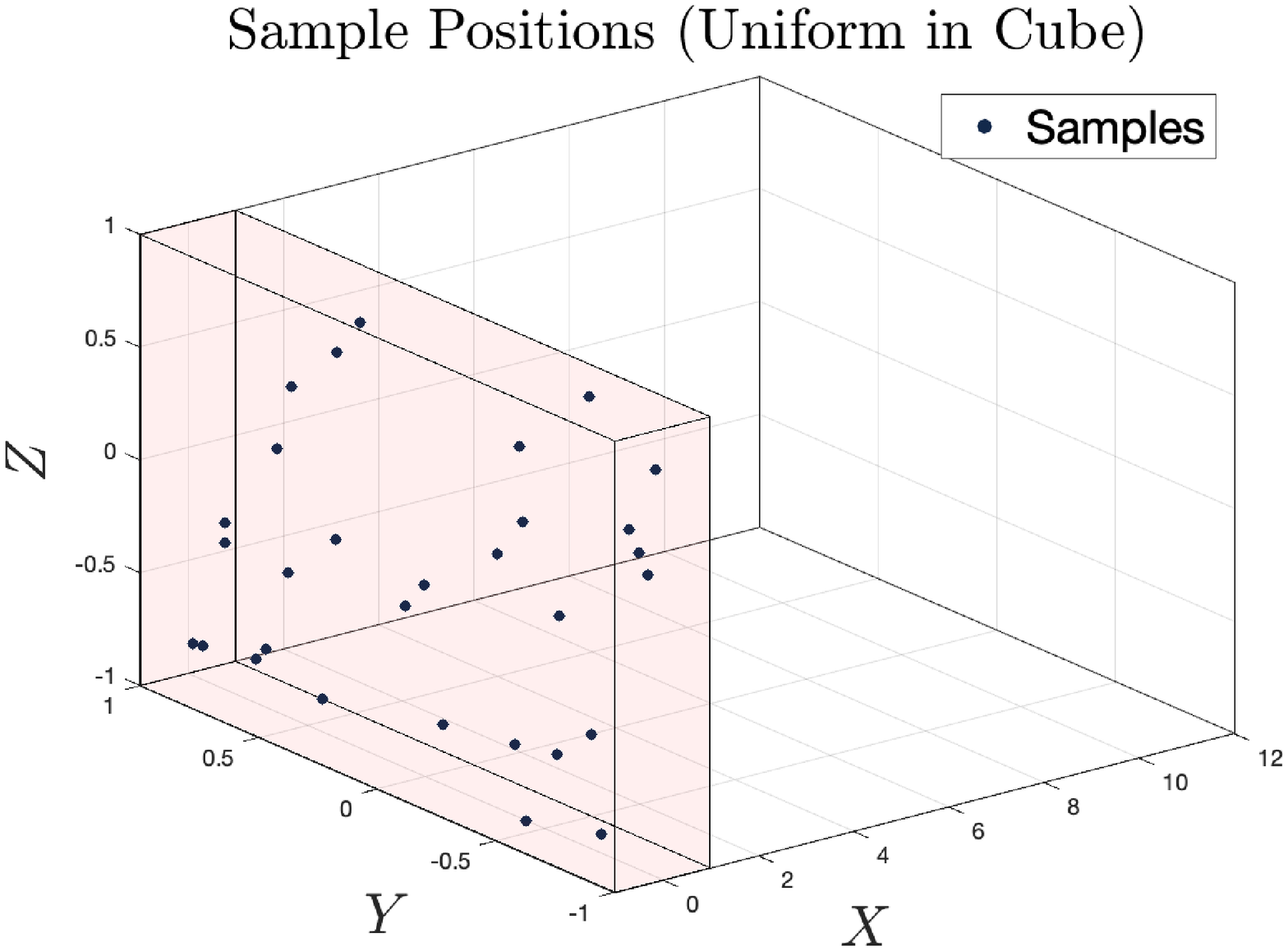}
    \end{subfigure}%
    \begin{subfigure}{0.5\textwidth}
      \centering
      \includegraphics[width=\linewidth]{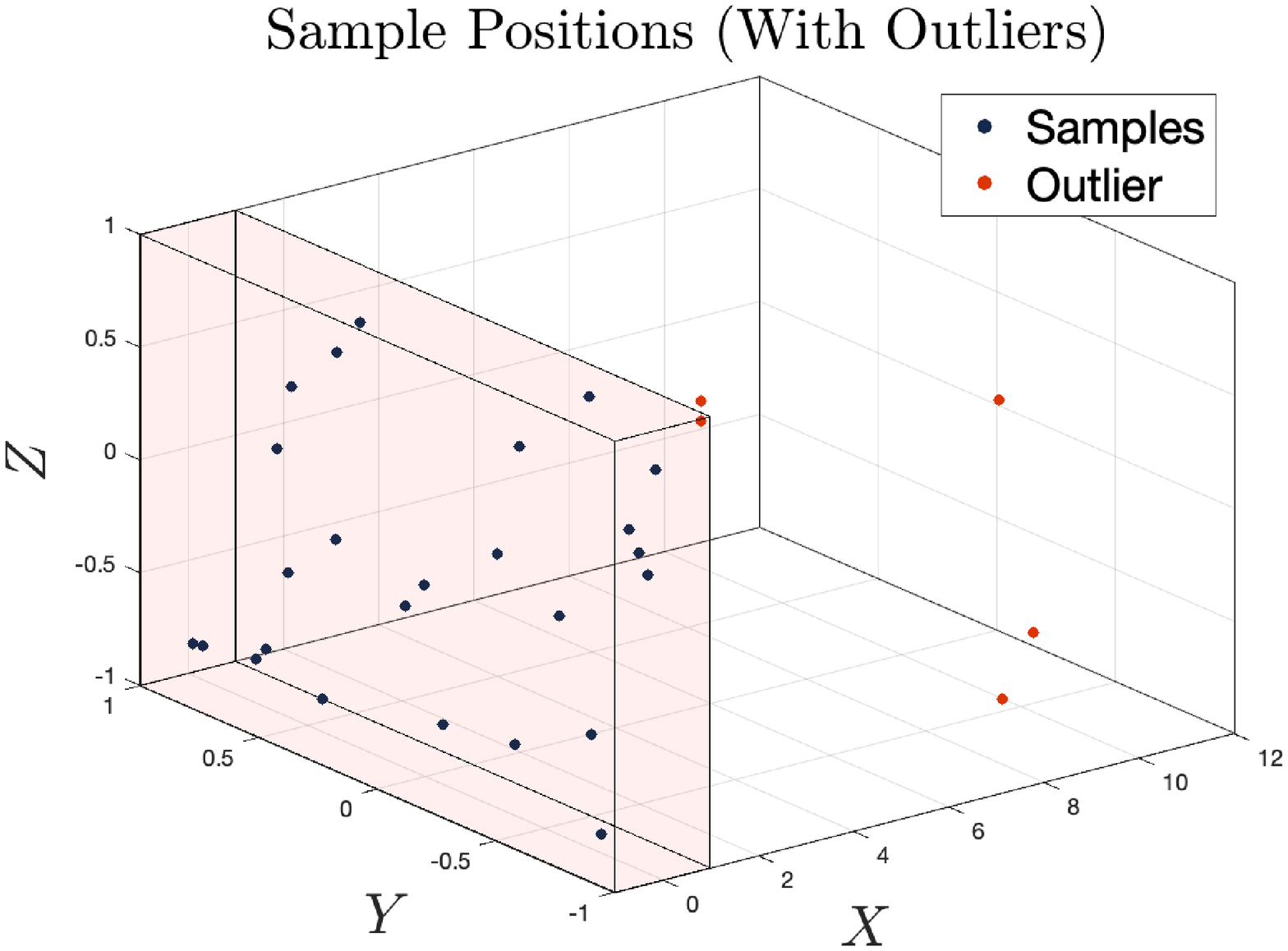}
    \end{subfigure}
    \begin{subfigure}{0.5\textwidth}
      \centering
      \includegraphics[width=\linewidth]{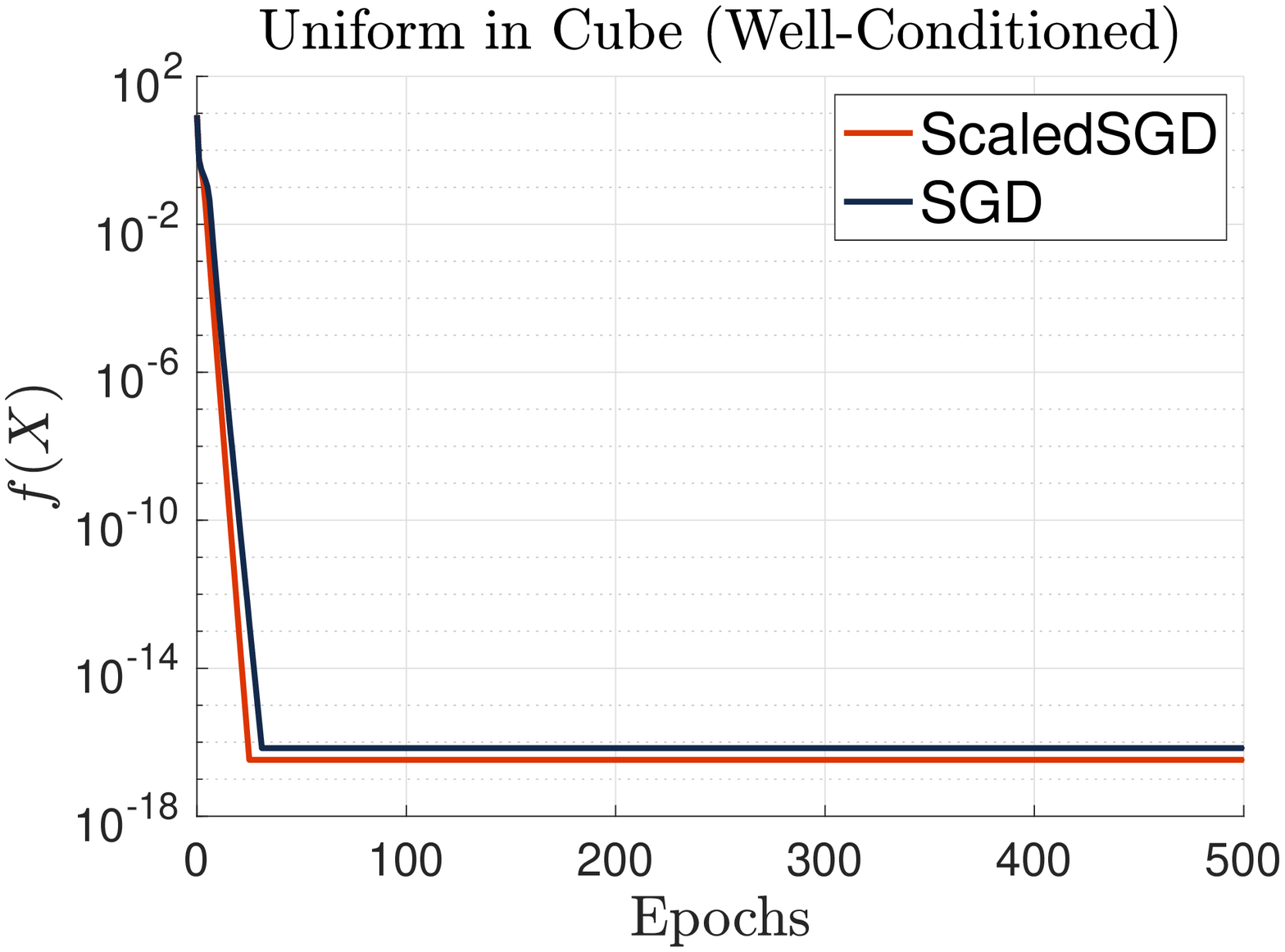}
    \end{subfigure}%
    \begin{subfigure}{0.5\textwidth}
      \centering
      \includegraphics[width=\linewidth]{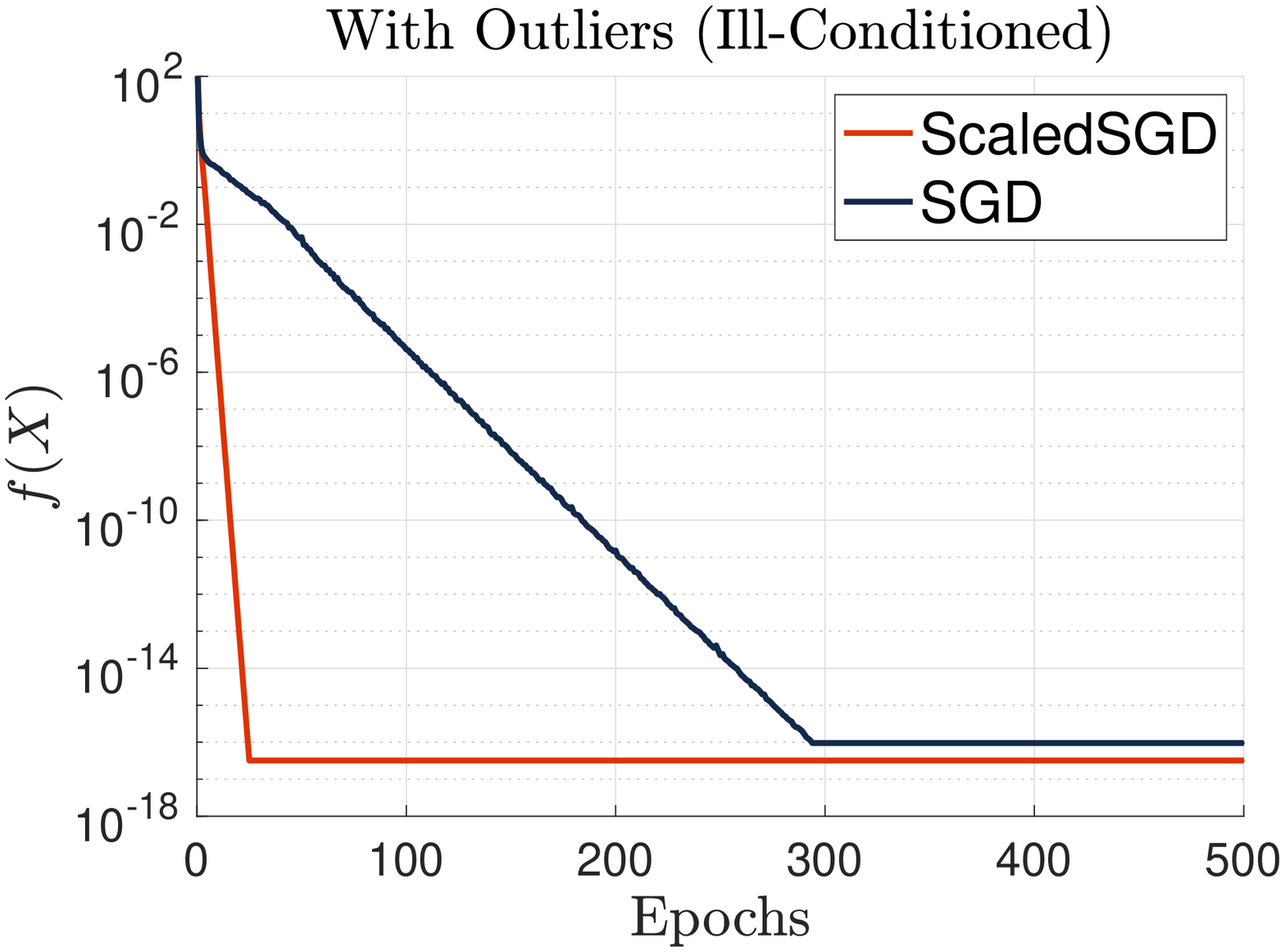}
    \end{subfigure}
    \caption{\textbf{Euclidean distance matrix (EDM) completion.} We compare the convergence rate of \ref{scaledsgd} and \ref{sgd} for EDM completion for two set of samples. (\textbf{Upper right}) 30 samples are uniformly distributed in the pink cube center at origin. (\textbf{Upper left}) 25 samples (in blue) are uniformly distributed in the cube, 5 outlier samples (in orange) are outside of the cube. (\textbf{Lower left}) Sample uniformly in cube. (\textbf{Lower right}) Sample with outliers.}
    \label{fig:edm}
\end{figure}

\paragraph{Item-item collaborative filtering (CF).}
In the task of item-item collaborative filtering (CF), the ground truth $M$ is a $d\times d$ matrix where $d$ is the number of items we wish to rank and the $i,j$-th of $M$ is a similarity measure between the items. Our goal is to learn a low-rank matrix that preserves the ranking of similarity between the items. For instance, given a pairwise sample $(i,j,k)$, if item $i$ is more similar to item $j$ than item $k$, then $M_{ij}>M_{ik}$. We want to learn a low-rank matrix that also has this property, i.e., the $i,j$-th entry is greater than the $i,k$-th entry. 

To gauge the scalability of ScaledSGD on a huge-scale real-world dataset, we perform simulation on item-item collaborative filtering using a $62,000\times 62,000$ item-item matrix $M$ obtained from MovieLens25M dataset. The CF model is trained using Bayesian Personalized Ranking (BRP) loss \cite{rendle2012bpr} on a training set, which consists of 100 million pairwise samples in $M$. The  performance of CF model is evaluated using Area Under the ROC Curve (AUC) score \cite{rendle2012bpr} on a test set, which consists of 10 million pairwise samples in $M$. The BPR loss is a widely used loss function in the context of collaborative filtering for the task of personalized recommendation, and the AUC score is a popular evaluation metric to measure the accuracy of the recommendation system. We defer the detail definition of BPR loss and AUC score to Appendix~\ref{app:cf}.

Figure~\ref{fig:bprhuge} plots the training BPR loss and testing AUC score within the first epoch (filled with red) and the second epoch (filled with blue). In order to measure the efficacy of ScaledSGD, we compare its testing AUC score against a standard baseline called the NP-Maximum \cite{rendle2012bpr}, which is the best possible AUC score by \textit{non-personalized} ranking methods.  For a rigorous definition, see Appendix~\ref{app:cf}.

We emphasize two important points in the Figure~\ref{fig:bprhuge}. First, the percentage of training samples needed for ScaledSGD to achieve the same testing AUC scores as NP-Maximum is roughly 4 times smaller than SGD. Though both ScaledSGD and SGD are able to achieve higher AUC score than NP-Maximum before finishing the first epoch, ScaledSGD achieve the same AUC score as NP-Maximum after training on $11\%$ of training samples while SGD requires $46\%$ of them. We note that in this experiment, the size of the training set is 100 million, this means that SGD would require 35 million more iterations than ScaledSGD before it can reach NP-Maximum. 

Second, the percentage of training samples needed for ScaledSGD to converge after the first epoch is roughly 5 times smaller than SGD. Given that both ScaledSGD and SGD converge to AUC score at around $0.9$ within the second epoch (area filled with blue), we indicate the percentage of training samples when both algorithms reach $0.9$ AUC score in Figure~\ref{fig:bprhuge}. As expected, ScaledSGD is able to converge using fewer samples than SGD, with only $16\%$ of training samples. SGD, on the other hand, requires $81\%$ training samples.

\begin{figure}[h!]
    \centering
    \begin{subfigure}{0.5\textwidth}
      \centering
      \includegraphics[width=\linewidth]{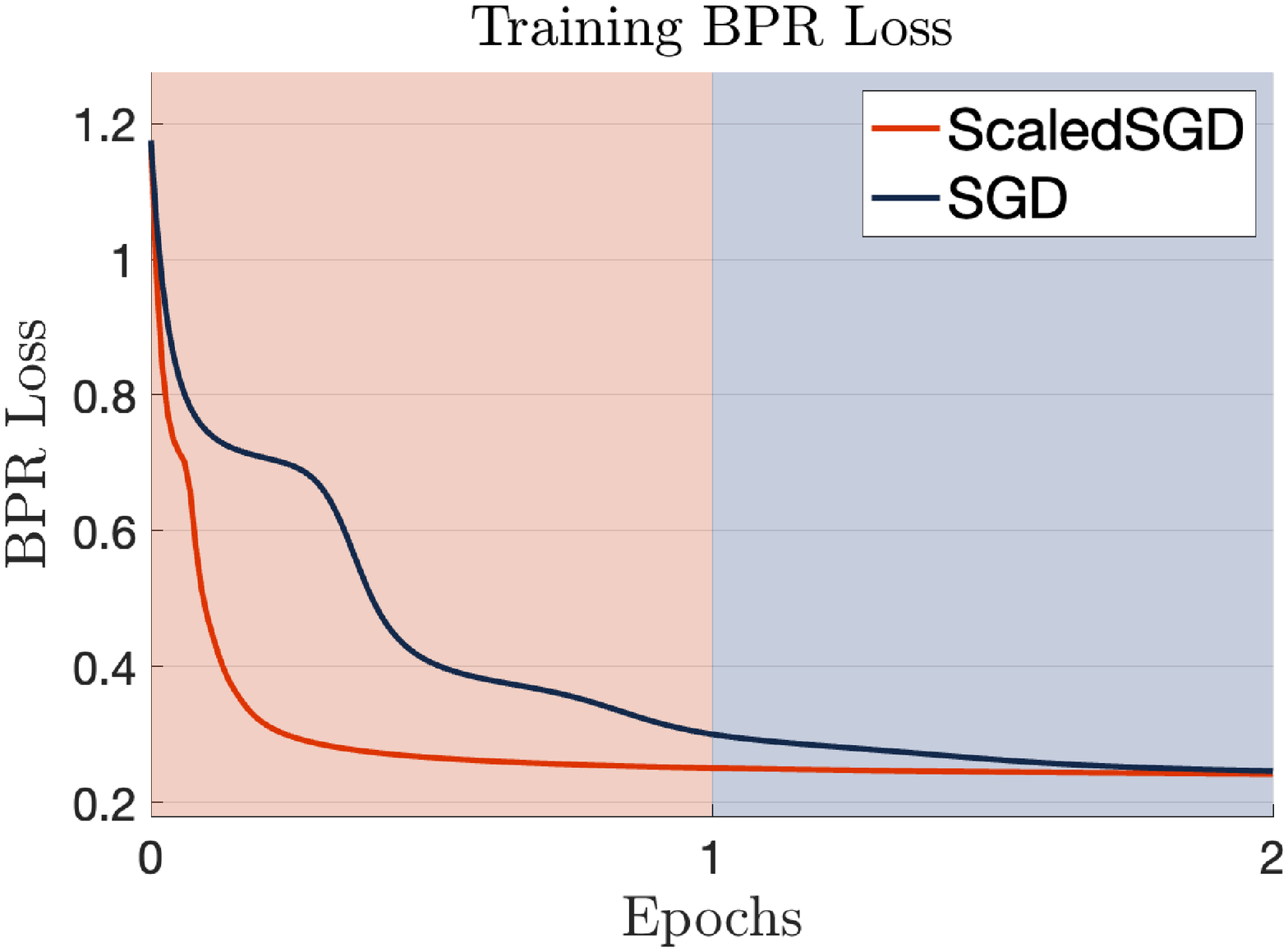}
    \end{subfigure}%
    \begin{subfigure}{0.5\textwidth}
      \centering
      \includegraphics[width=\linewidth]{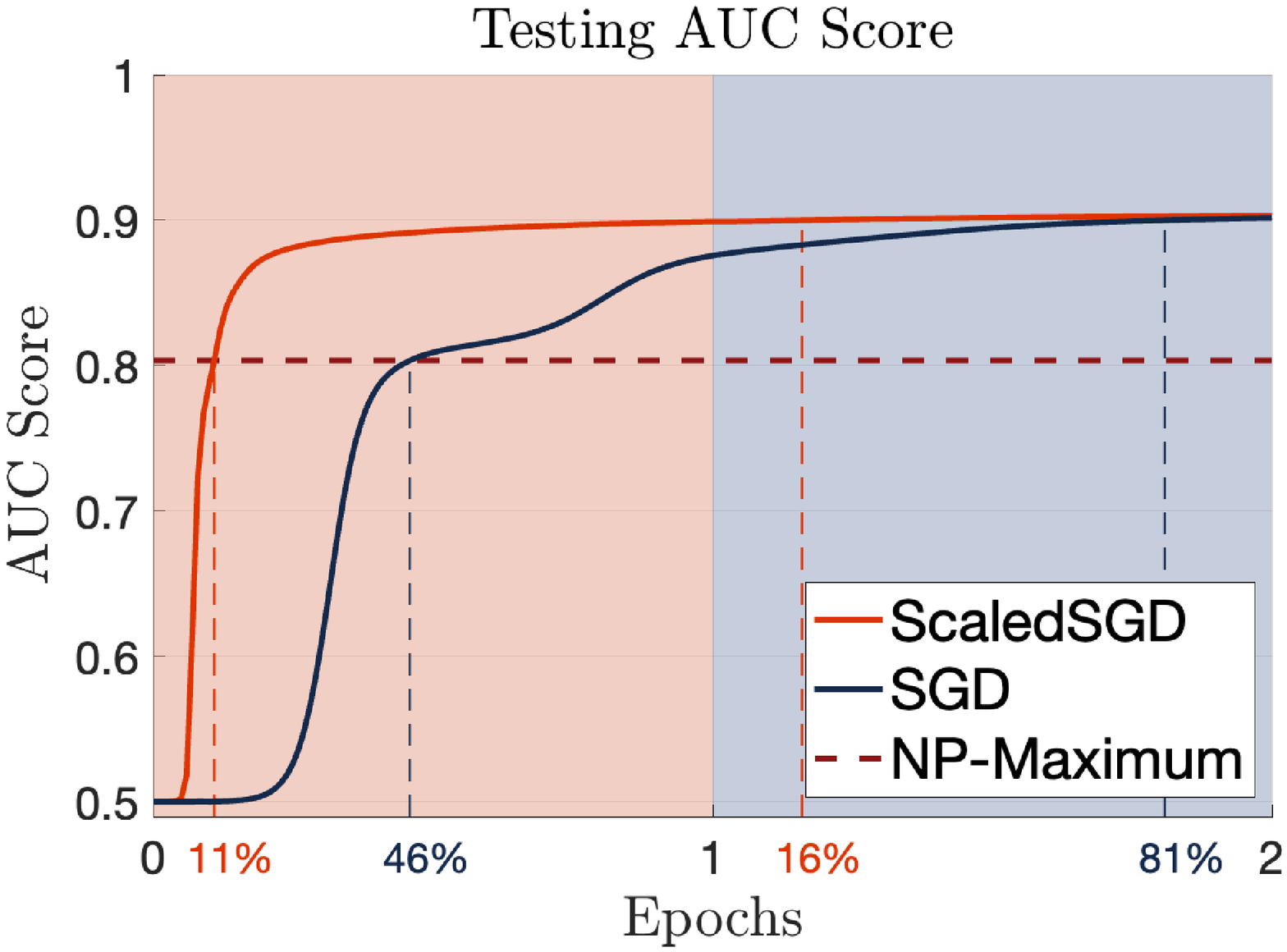}
    \end{subfigure}
    \caption{\textbf{Huge-scale item-item collaborative filtering.} (MovieLens25M dataset with $|\Omega_{\text{train}}|=$ 100 million and $|\Omega_{\text{test}}|=$ 10 million pairwise measurements). We compare the training BPR loss and testing AUC score of \ref{scaledsgd} and \ref{sgd}. (\textbf{Left}) Training BPR loss on the training set $\Omega_{\text{train}}$. (\textbf{Right}) Testing AUC score on the test set $\Omega_{\text{test}}$.}
    \label{fig:bprhuge}
\end{figure}

\section{Conclusions}
We propose an algorithm called \ref{scaledsgd} for huge scale online matrix completion. For the nonconvex approach to solving matrix completion, ill-conditioning in the ground truth causes \ref{sgd} to slow down significantly. ScaledSGD preserves all the favorable qualities of SGD while making it immune to ill-conditioning. For the RMSE loss, we prove that with an initial point close to the ground truth, ScaledSGD converges to an $\epsilon$-accurate solution in $O(\log(1/\epsilon))$ iterations, independent of the condition number $\kappa$. We also run numerical experiments on a wide range of other loss functions commonly used in applications such as collaborative filtering, distance matrix recovery, etc.  We find that ScaledSGD achieves similar acceleration on these losses, which means that it is widely applicable to many real problems. It remains future work to provide rigorous justification for these observations.

% \medskip
%\newpage
\bibliographystyle{unsrtnat}
\bibliography{references}

\newpage
\appendix
\section{Supplemental Details on Experiments in Main Paper}\label{app:exp}

\subsection{Experimental setup and datasets used}\label{app:A1}
\paragraph{Simulation environment.}
We implement both \ref{scaledsgd} and \ref{sgd} in MATLAB (version R2021a). All simulations in this paper are performed on a computer with Apple silicon M1 pro chip with 10-core CPU, 16-core GPU, and 32GB of RAM.

\paragraph{Datasets.} The datasets we use for the experiments in the main paper are described below. 

\begin{itemize}
\item \textbf{Matrix completion with RMSE loss:} In the simulation result shown in Figure~\ref{fig:rmse}, we synthetically generate both the well-conditioned and ill-conditioned ground truth matrix $M$. We fix both $M$ to be a rank-3 matrix of size $30\times 30$. To generate $M$, we sample a random orthonormal matrix $U\in\mathbb{R}^{30\times 3}$ and set $M=USU^T$. For well-conditioned case, we set $S = \mathrm{diag}(2,2,2),$ thus $M$ is perfectly conditioned with $\kappa=1$. For ill-conditioned case, we let $S = \mathrm{diag}(10,10^{-1},10^{-3}),$ so that $M$ is ill-conditioned with $\kappa = 10^4$. 

\item \textbf{Euclidean distance matrix completion:} In this simulation shown in Figure \ref{fig:edm}, the ground truth Euclidean distance matrix $D$ for experiments 1 and 2 are generated with respect to their sample matrix $X$ as $D_{ij}=\|x_i-x_j\|^2_2$. For the sample points in \textbf{Experiment 1}, we randomly sample (without replacement) $30$ points in 3-dimensional cube centered at origin with side length $2$, and the corresponding sample matrix $X$ has conditioned number $\kappa=1.3908$. For the sample points in \textbf{Experiment 2}, we take the first $5$ sample points in experiment 1 and perturb its x-coordinate by $10$, and keep the rest of the $25$ samples intact. The corresponding sample matrix $X$ has conditioned number $\kappa = 8.0828$. 

\item \textbf{Item-item collaborative filtering:} In this simulation shown in Figure \ref{fig:bprhuge}, we use the MovieLens25M dataset \cite{movielens}, which is a standard benchmark for algorithms for recommendation systems.\footnote{The MovieLens25M dataset is accessible at \url{https://grouplens.org/datasets/movielens/25m/}} This dataset consists of 25 million ratings over 62,000 movies by 162,000 users, the ratings are stored in an user-item matrix $G$ whose $(i,j)$-th entry is the rating that the $i$-th user gives the $j$-th movie. The rating is from $1$ to $5$, where a higher score indicates a stronger preference. If the $i,j$-th entry is $0$, then no rating is given. For the simulation of item-item collaborative filtering, the $(i,j)$-th entry of the ground truth item-item matrix $M$ is the \textit{similarity score} between the item $i$ and $j$, which can be computed by measuring cosine similarity between the $i$-th and $j$-th column of $G$.
\end{itemize}

\paragraph{Hyperparameter and initialization.}
We start ScaledSGD and SGD at the same initial point in each simulation. The initial points for each simulation are drawn from the standard Gaussian distribution.
\begin{itemize}
    \item \textbf{Matrix completion with RMSE loss:} The step-size for both ScaledSGD and SGD are set to be $\alpha=0.3$. The search rank for both ScaledSGD and SGD are set to be $r=3$.
    \item \textbf{Euclidean distance matrix completion:} Since SGD is only stable for small step-size in EDM completion problem, while ScaledSGD can tolerance larger step-sizes, we pick the largest possible step-size for ScaledSGD and SGD in both experiments. \textbf{Experiment 1}: step-size for ScaledSGD $\alpha = 0.2$, step-size for SGD $\alpha = 0.02$. \textbf{Experiment 2}: step-size for ScaledSGD $\alpha = 0.2$, step-size for SGD $\alpha = 0.002$. The search rank for both ScaledSGD and SGD are set to be $r=3$.
    \item \textbf{Item-item collaborative filtering:} The step-sizes for this experiment are set as follows: we first pick a small step-size and train the CF model over a sufficient number of epochs, this allows us to estimate the best achievable AUC score; we then set the step-sizes for both ScaledSGD and SGD to the largest possible step-size for which ScaledSGD and SGD is able to converge to the best achievable AUC score, respectively. The step-size for ScaledSGD $\alpha = 1,000$, step-size for SGD $\alpha = 0.01$. The search rank for both ScaledSGD and SGD are set to be $r=3$. 
\end{itemize}

\subsection{Matrix completion with RMSE loss}
We now turn to the practical aspects of implementing ScaledSGD for RMSE loss function. In practical setting, suppose we are given a set $\Omega=\{(i,j)\}$ that contains indices for which we know the value of $M_{ij}$, our goal is to recover the missing elements in $M$ by solving the following nonconvex optimization
\begin{equation*}
\min_{X\in\R^{d\times r}}f(X)=\frac{1}{2|\Omega|}\sum_{(i,j)\in\Omega}\left(x_{i}^{T}x_{j}-M_{ij}\right)^{2}.
\end{equation*}

The gradient of $f(X)$ is
\[
\nabla f(X)= \frac{1}{|\Omega|}\sum_{(i,j)\in\Omega}\left(x_{i}^{T}x_{j}-M_{ij}\right)\left(e_ie_j^T+e_je_i^T\right)X.
\]
\paragraph{ScaledSGD update equations for RMSE loss.}
Each iteration of ScaledSGD samples one element $(i,j)\in\Omega$ uniformly. The resulting iteration updates only two rows of $X$
\begin{align*}
&x_{i,+}=x_{i}-\alpha\cdot\left(x_{i}^{T}x_{j}-M_{ij}\right)Px_{j},\quad
x_{j,+}=x_{j}-\alpha\cdot\left(x_{i}^{T}x_{j}-M_{ij}\right)Px_{i}.
\end{align*}
The update on $P$ is low-rank
\[
P_{+}=(P^{-1}+x_{i,+}x_{i,+}^{T}+x_{j,+}x_{j,+}^{T}-x_{i}x_{i}^{T}-x_{j}x_{j}^{T})^{-1},
\]
and can be computed by calling four times of rank-1 Sherman--Morrison--Woodbury (SMW) update formula in $O(r^{2})$ time
\begin{equation}\label{eq:p_update}
(P^{-1}+uu^{T})^{-1}=P-\frac{Puu^{T}P}{1+u^{T}Pu},\qquad(P^{-1}-uu^{T})^{-1}=P+\frac{Puu^{T}P}{1-u^{T}Pu}.
\end{equation}
In practice, this low-rank update can be ``pushed'' onto a centralized
storage of the preconditioner $P$. Heuristically, independent copies
of $P$ can be maintained by separate, distributed workers, and a
centralized dispatcher can later merge the updates to $P$ by simply
adding the cumulative low-rank updates onto the existing centralized
copy. 

\subsection{Euclidean distance matrix (EDM) completion}
Suppose that we have $d$ points $x_1,\ldots,x_d\in\mathbb R^r$ in $r$ dimensional space, the Euclidean distance matrix $D\in\mathbb R^{d\times d}$ is a matrix of pairwise squared distance between $d$ points in Euclidean space, namely, $D_{ij}=\|x_i-x_j\|^2$. Many applications, such as wireless sensor networks, communication and machine learning, require Euclidean distance matrix to provide necessary services. However, in practical scenario, entries in $D$ that correspond to points far apart are often missing due to high uncertainty or equipment limitations in distance measurement. The task of Euclidean distance matrix completion is to recover the missing entries in $D$ from a set of available measurement, and this problem can be formulated as a rank $r$ matrix completion problem with respect to pairwise square loss function. Specifically, let $X\in\mathbb R^{d\times r}$ be a matrix containing $x_1,\ldots,x_d$ in its row and let $M=XX^T$ be the Grammian of $X$. Each entry of $D$ can be written in terms of three entries in $M$
\[
D_{ij}=\|x_i-x_j\|^2=x_i^Tx_i-2x_i^Tx_j+x_j^Tx_j=M_{ii}-2M_{ij}+M_{jj}.
\]
Hence, given a set of sample $\Omega=\{(i,j)\}$ in $D$, the pairwise square loss function for EDM completion reads
\[
\min_{X\in\R^{d\times r}}f(X)=\frac{1}{4|\Omega|}\sum_{(i,j)\in\Omega} \left(x_i^Tx_i-2x_i^Tx_j+x_j^Tx_j-D_{ij}\right)^2.
\]
The gradient of $f(X)$ is
\[
\nabla f(X)= \frac{1}{|\Omega|}\sum_{(i,j)\in\Omega}\left(x_i^Tx_i-2x_i^Tx_j+x_j^Tx_j-D_{ij}\right)\left[\left(e_ie_i^T+e_je_j^T\right)-\left(e_ie_j^T+e_je_i^T\right)\right]X.
\]
\paragraph{ScaledSGD update equations for EDM completion.}
Each iteration of ScaledSGD samples one element $(i,j)\in\Omega$ uniformly. The resulting iteration updates only two rows of $X$
\begin{align*}
    x_{i,+} &= x_i - \alpha\cdot\left(x_i^Tx_i-2x_i^Tx_j+x_j^Tx_j-D_{ij}\right)P(x_i-x_j),\\
     x_{j,+} &= x_j - \alpha\cdot\left(x_i^Tx_i-2x_i^Tx_j+x_j^Tx_j-D_{ij}\right)P(x_j-x_i)
\end{align*}
Similarly, the update on $P$ is low-rank and can be computed by calling four times of \eqref{p_update}.

\subsection{Item-item collaborative filtering (CF)} \label{app:cf}
In the task of item-item collaborative filtering, the ground truth $M$ is an $d\times d$ matrix where $d$ is the number of items we wish to rank and the $i,j$-th of $M$ is a similarity measure between the items. Our goal is to learn a low-rank matrix that preserves the ranking of similarity between the items. For instance, suppose that item $i$ is more similar to item $j$ than item $k$, then $M_{ij}>M_{ik}$, we want to learn a low-rank matrix $XX^T$ that also has this property, i.e., $x_{i}^Tx_{j} \geq x_{i}^Tx_{k}$ where $x_{i}$ is the $i$-th row of $X$. 
\paragraph{Similarity score.}
An important building block of item-item recommendation systems is the so-called item-item similarity matrix \cite{davidson2010youtube, linden2003amazon, smith2017two}, which we denote by $M$. The $i,j$-th entry of this matrix is the pairwise \textit{similarity scores} of items $i$ and $j$. There are various measures of similarity. In our experiments we adopt a common similarity measure known as cosine similarity \cite{linden2003amazon}. As a result, the item-item matrix can be computed from the user-item matrix. In particular, let $g_i$, $g_j$ denote the $i$-th and $j$-th columns of the user-item matrix $G$, corresponding to the ratings given by all users to the $i$-th and $j$-th items. Then the $(i,j)$-th element of the item-item matrix $M$ is set to
\[M_{ij} = g_i^Tg_j/(\|g_i\|\|g_j\|).\]
In general, the item-item matrix computed this way will be very sparse and not capable of generating good recommendations. Our goal is to complete the missing entries of this matrix, assuming that that $M$ is low-rank. As we will see, we can formulate this completion problem as an optimization problem over the set of $\mathrm{rank}$-$r$ matrices. 

\paragraph{Pairwise entropy loss (BPR loss).}
The Bayesian Personalized Ranking (BRP) loss \cite{rendle2012bpr} is a widely used loss function in the context of collaborative filtering. For the task of predicting a personalized ranking of a set of items (videos, products, etc.), BRP loss often outperforms RMSE loss because it is directly optimized for ranking; most collaborative filtering models that use RMSE loss are essentially scoring each individual item based on user implicit feedbacks, in applications that only positive feedbacks are available, the models will not be able to learn to distinguish between negative feedbacks and missing entries. 

The BPR loss in this context can be defined as follows. Let $\Omega=\{(i,j,k)\}$ denote a set of indices for which we observe the ranking of similarity between items $i,j,k$. Our observations are of the form $Y_{ijk} = 1$ if $M_{ij}>M_{ik}$ and $Y_{ijk} = 0$ otherwise. In other words, $Y_{ijk} = 1$ if item $i$ is more similar to item $j$ than to item $k$. We form a candidate matrix of the form $XX^T$, where $X\in\mathbb{R}^{d\times r}$. Our hope is that $XX^T$ preserves the ranking between the items. The BPR loss function is designed to enforce this property. 

Let $x_i$ denote the $i$-th row of $X$ and set $z_{ijk} = (XX^T)_{ij}-(XX^T)_{ik} = x_i^T(x_j-x_k)$. The BPR loss attempts to preserve the ranking of samples in each row of $M$ by minimizing the logistic loss with respect to $(Y_{ijk},\sigma(z_{ijk}))$, where $\sigma(\cdot)$ is the sigmoid function:
\begin{equation*}
\min_{X\in\R^{n\times r}}f(X)=\frac{1}{|\Omega|}\sum_{(i,j,k)\in\Omega} -Y_{ijk}\log\left(\sigma(z_{ijk})\right)-\left(1-Y_{ijk}\right)\log\left(1-\sigma(z_{ijk})\right).
\end{equation*}
Then the gradient of $f(X)$ is
\[
\nabla f(X)= \frac{1}{|\Omega|}\sum_{(i,j,k)\in\Omega}\left(\sigma(z_{ijk})-Y_{ijk}\right)\left[\left(e_ie_j^T+e_je_i^T\right)-\left(e_ie_k^T+e_ke_i^T\right)\right]X.
\]
\paragraph{ScaledSGD update equations for BPR loss.}
Similarly to the previous section, each iteration of ScaledSGD samples one element $(i,j,k)\in\Omega$ uniformly. The resulting iteration updates only three rows of $X$, as in 
\begin{alignat*}{2}
    &x_{i,+} = x_i - \alpha\cdot\left(\sigma(z_{ijk})-Y_{ijk}\right)P(x_j-x_k),\quad
     x_{j,+} = x_j - \alpha\cdot\left(\sigma(z_{ijk})-Y_{ijk}\right)Px_i,\\
    &x_{k,+} = x_k + \alpha\cdot\left(\sigma(z_{ijk})-Y_{ijk}\right)Px_i
\end{alignat*}
Similar to before, the preconditioner $P$ can be updated via six call of \eqref{p_update} in $O(r^2)$ time. 

\paragraph{The AUC score.} The AUC score \citep{rendle2012bpr} is a popular evaluation metric for recommendation system.  Roughly speaking, the AUC score of a candidate matrix $XX^T$ is the percentage of ranking of the entries of $M$ that is preserved by $XX^T$. Specifically, for each sample $(i,j,k)\in\Omega$, we define a indicator variable $\delta_{ijk}$ as
\[
\delta_{ijk}=\begin{cases}
1 & \text{if $z_{ijk}>0$ and $Y_{ijk}=1$} \\
1 & \text{if $z_{ijk}\leq 0$ and $Y_{ijk}=0$} \\
0 & \text{otherwise},
\end{cases}
\]
where we recall that $Y_{ijk}$ is our observation and $z_{ijk}=(XX^T)_{ij}-(XX^T)_{ik}$.
In other words, $\delta_{ijk}=1$ only if the ranking between $M_{ij}$ and $M_{ik}$ is preserved by $(XX^T)_{ij}$ and $(XX^T)_{ik}$. The AUC score is then defined as the ratio
\[
\text{AUC}=\frac{1}{|\Omega|}\sum_{(i,j,k)\in\Omega}\delta_{ijk}.
\]
Thus, a higher AUC score indicates that the candidate matrix $XX^T$ perserves a larger percentage of the pairwise comparisons in $|\Omega|$.

\paragraph{Training a CF model for Figure~\ref{fig:bprhuge}.}
We precompute a dataset $\Omega$ of $110$ million item-item pairwise comparisons using the user-item ratings from the MovieLens25M dataset, and then run ScaledSGD and SGD over 2 epochs on this dataset. Let $d=62,000$ denote the number of items and let $\Omega \subseteq [d]^3$ denote the set of observations, i.e., the set of entries $(i,j,k)$ where we observe $Y_{ijk} = 1$ if $M_{ij}>M_{ik}$ and $Y_{ijk} = 0$ if $M_{ij}<M_{ik}$. We construct $\Omega$ by sampling 110 million pairwise measurements that have either $M_{ij}>M_{ik}$ or $M_{ij}<M_{ik}$ uniformly at random without replacement from $(i,j,k) \sim [d]^3$. We do this because the item-item matrix $M$ remains highly sparse, and there are many pairs of $(i,j)$ and $(i,k)$ for which $M_{ij}=M_{ik}=0$.

To ensure independence between training and testing, we divide the set $\Omega$ into two disjoint sets $\Omega_{\text{train}}$ and $\Omega_{\text{test}}$. The first set $\Omega_{\text{train}}$ consists of 100 million of all observations, which we use to fit our model. The second set $\Omega_{\text{test}}$ consists of 10 million samples for which we use to calculate the AUC score of our model on new data. 

\paragraph{Upper bounds on the non-personalized ranking AUC score (NP-Maximum).} 
As opposed to personalized ranking methods, non-personalized ranking methods generate the same ranking for every pair of item $j$ and $k$, independent of item $i$. In the context of item-item collaborative filtering, the non-personalized ranking method can be defined as follows. Given a set of pairwise comparisons $\Omega=(\{i,j,k\})$ and observations $Y_{ijk}$, we optimized the ranking between item $j$ and $k$ on a candidate vector $x$, where $x\in\mathbb{R}^{d}$.

Let $x_i$ denote the $i$-th entry of $x$, the non-personalized ranking method attempts to preserve the ranking between the $x_j$ and $x_k$ by minimizing the logistic loss with respect to $(Y_{ijk}, \sigma(x_j-x_k))$ where $\sigma(\cdot)$ is the sigmoid function:
\begin{equation*}
\min_{x\in\R^{r}}f(x)=\frac{1}{|\Omega|}\sum_{(i,j,k)\in\Omega} -Y_{ijk}\log\left(\sigma(x_j-x_k)\right)-\left(1-Y_{ijk}\right)\log\left(1-\sigma(x_j-x_k)\right).
\end{equation*}
The gradient of $f(x)$ is
\[
    \nabla f(x)=\frac{1}{|\Omega|}\sum_{(i,j,k)\in\Omega}\left(\sigma(x_j-x_k)-Y_{ijk} \right)\left[(e_j^T-e_k^T)\right]x,
\]
and the SGD update equations for $x_j$ and $x_k$ are
\[
    x_{j,+}=x_{j}-\alpha\cdot\left(\sigma(x_j-x_k)-Y_{ijk} \right)x_{j},\quad
    x_{k,+}=x_{k}+\alpha\cdot\left(\sigma(x_j-x_k)-Y_{ijk} \right)x_{k}.
\]
Notice that non-personalized ranking method is not a matrix completion problem, the regular SGD is used to minimized $f(x)$. To find the upper bound on the non-personalized ranking AUC score, we directly optimize the non-personalized ranking on the test set $\Omega_{\text{test}}$, and evaluated the corresponding AUC score on  $\Omega_{\text{test}}$. Since we perform both training and evaluation on $\Omega_{\text{test}}$, this corresponding AUC score is the upper bound on the best achievable AUC score on $\Omega_{\text{test}}$.

\section{Additional Experiments on pointwise cross-entropy loss}
This problem is also known as 1-bit matrix completion \cite{davenport20141}. Here our goal is to recover a rank-$r$ matrix $M$ through \textit{binary} measurements. Specifically, we are allowed to take independent measurements on every entry $M_{ij}$, which we denote by $Y_{ij}$. Let $\sigma(\cdot)$ denote the sigmoid function, then $Y_{ij}=1$ with probability $\sigma(M_{ij})$ and $Y_{ij}=0$ otherwise. After a number of measurements are taken on each entry in the set $\Omega$, let $y_{ij}$ denote the percentage of measurements on the $i,j$-th entry that is equal to $1$. The plan is to find the maximum likelihood estimator for $M$ by minimizing a cross-entropy loss defined as follow
\[
\min_{X \in\mathbb{R}^{d\times r}} f(X) = \frac{1}{|\Omega|}\sum_{(i,j)\in\Omega} -y_{ij}\log\left(\sigma(x_i^Tx_j)\right)-\left(1-y_{ij}\right)\log\left(1-\sigma(x_i^Tx_j)\right).
\]
We assume an ideal case where the number of measurements is large enough so that $y_{ij} = \sigma(M_{ij})$ and the entries are fully observed. The gradient of $f(X)$ is
\[
\nabla f(X)= \frac{1}{|\Omega|}\sum_{(i,j)\in\Omega}\left(\sigma(x_i^Tx_j)-y_{ij}\right)\left(e_ie_j^T+e_je_i^T\right)X.
\]
\paragraph{ScaledSGD update equations for pointwise cross-entropy loss.}
Each iteration of ScaledSGD samples one element $(i,j)\in\Omega$ uniformly. The resulting iteration updates only two rows of $X$
\begin{align*}
    x_{i,+} = x_i - \alpha\cdot\left(\sigma(x_i^Tx_j)-y_{ij}\right)Px_j,\quad
    x_{j,+} = x_j - \alpha\cdot\left(\sigma(x_i^Tx_j)-y_{ij}\right)Px_i.
\end{align*}
The preconditioner $P$ can be updated by calling four times of \eqref{p_update} as in RMSE loss.

\paragraph{Matrix completion with pointwise cross-entropy loss.} We apply ScaledSGD to perform matrix completion through minimizing pointwise cross-entropy loss. In this experiment, the well-conditioned and ill-conditioned ground truth matrix $M$ is the same as those in Figure~\ref{fig:rmse}, and the process of data generation are described in \ref{app:A1}. The learning rate for both ScaledSGD and SGD are set to be $\alpha=1$. The search rank for both ScaledSGD and SGD are set to be $r=3$.

Figure \ref{fig:1bit} plots the error $f(X) = \|XX^T-M\|_F^2$ against the number of epochs. Observe that the results shown in Figure \ref{fig:1bit} are almost identical to that of the RMSE loss shown in Figure~\ref{fig:rmse}. Ill-conditioning causes SGD to slow down significantly while ScaledSGD is unaffected. 

\begin{figure}[h!]
    \centering
    \begin{subfigure}{0.5\textwidth}
      \centering
      \includegraphics[width=\linewidth]{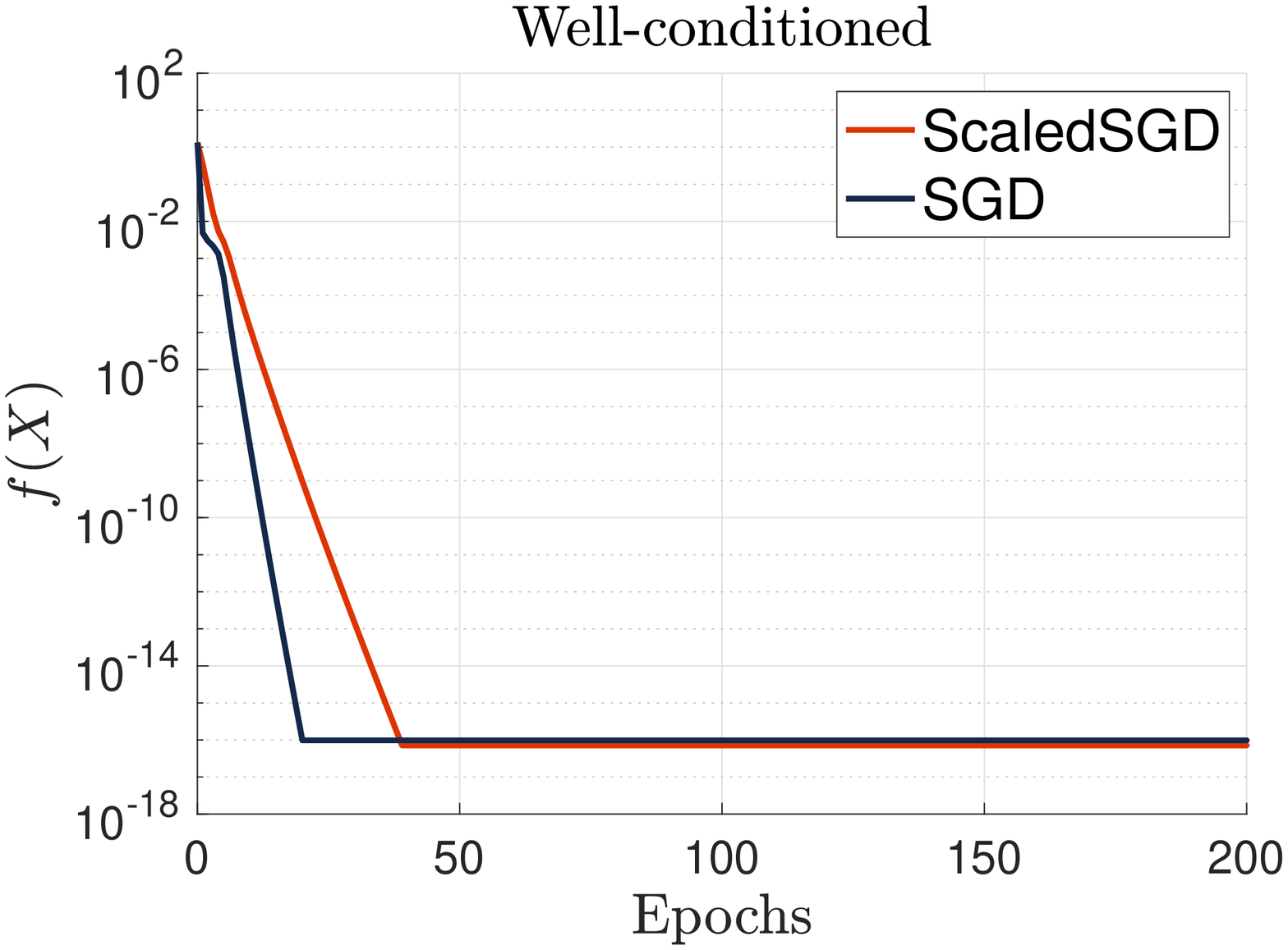}
    \end{subfigure}%
    \begin{subfigure}{0.5\textwidth}
      \centering
      \includegraphics[width=\linewidth]{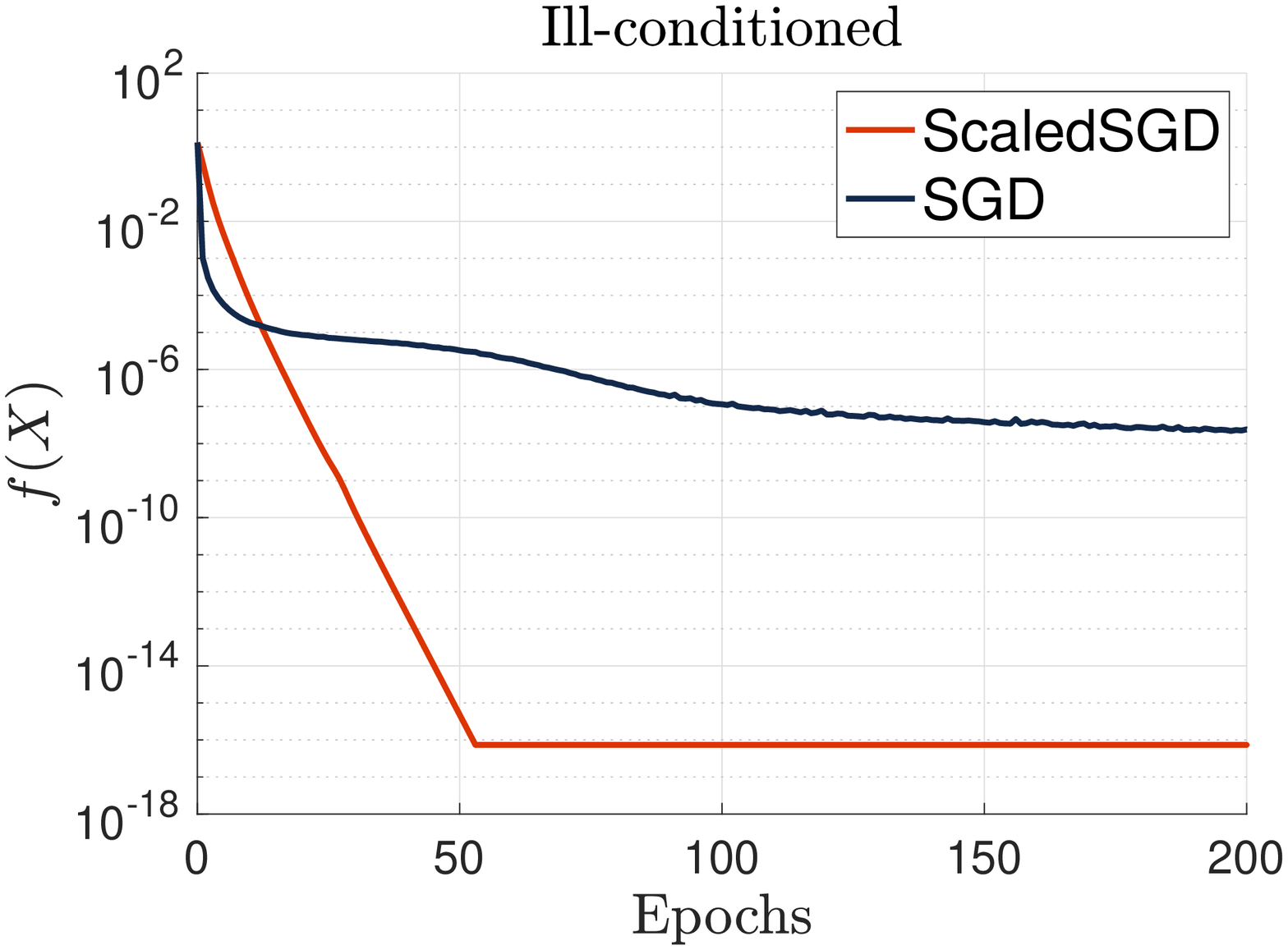}
    \end{subfigure}
    \caption{\textbf{Matrix Completion with pointwise cross entropy loss.} We compare the convergence rate of \ref{scaledsgd} and \ref{sgd} for a well-conditioned and ill-conditioned ground truth matrix $M$ used in Figure \ref{fig:rmse}. (\textbf{Left}) Well-conditoned $M$. (\textbf{Right}) Ill-conditoned $M$.}
    \label{fig:1bit}
\end{figure}

\newpage
\section{Additional Experiments with Noise}
To mimic the real-world datasets, we corrupt each entry of ground truth matrix $M$ by white Gaussian noise. We first generate a noiseless well-conditioned and ill-conditioned matrix $\tilde M$ following same procedure as the one described in \ref{app:A1}. For well-conditioned case, we set the singular value as $S=\mathrm{diag}(10,10,10)$. For the ill-conditioned case, we set $S=\mathrm{diag}(10,10^{-1},10^{-3})$. To obtain a noisy ground truth, we generate a matrix of white Gaussian noise $W$ corresponding to a fixed signal to noise ratio (SNR), which is defined as $\text{SNR}=20\log_{10}(\|\tilde M\|_F / \|W\|_F)$. Finally, we set $M=\tilde M + W$. For the experiments in this section, we set $\text{SNR}=15\text{dB}$. For the case of well-conditioned $\tilde M$, the resulting $M=\tilde M + W$ is full-rank with condition number $\kappa=310.72$. For the case of ill-conditioned $\tilde M$, the resulting $M$ is full-rank with condition number $\kappa = 423.5022$. 

\paragraph{Matrix completion with RMSE loss on noisy datasets.}
We plot the convergence rate of ScaledSGD and SGD under the noisy setting in Figure \ref{fig:rmse_noise}. In this experiment, we pick a larger search rank $r=5$ to accommodate the noisy ground truth. Observe that SGD slows down in both the well-conditioned and ill-conditioned case due to the addition of white Gaussian noise and the larger search rank $r$, while ScaledSGD converge linearly toward the noise floor. 

We also plot the noise floor, which can be computed as follows. First we take the eigendecomposition of $M=Q\Lambda Q^T$, where $Q$ is an orthonormal matrix and $\Lambda$ is a diagonal matrix containing the eigenvalues of $M$ sorted in descending order in its diagonal entries. Let $\Lambda'$ be a diagonal matrix such that $\Lambda'_{ii}=\Lambda_{ii}$ if $i\leq r$, and $\Lambda'_{ii}=0$ otherwise, then the noise floor is defined as the RMSE between $M$ and its best rank-$r$ approximation $M'=Q\Lambda' Q^T$, which is equal to $\frac{1}{2|\Omega|}\sum_{(i,j)\in\Omega}\left(M'_{ij}-M_{ij}\right)^{2}$.

The step-sizes in the simulation are set to be the largest possible step-sizes for which ScaledSGD and SGD can converge to the noise floor. For ScaleSGD, the step-size is set to be $\alpha=0.15$. For SGD, the step-size is set to be $\alpha=0.01$. $\text{SNR}=20\log_{10}(\|\tilde M\|_F / \|W\|_F)=15\text{dB}$.
\begin{figure}[h!]
    \centering
    \begin{subfigure}{0.5\textwidth}
      \centering
      \includegraphics[width=\linewidth]{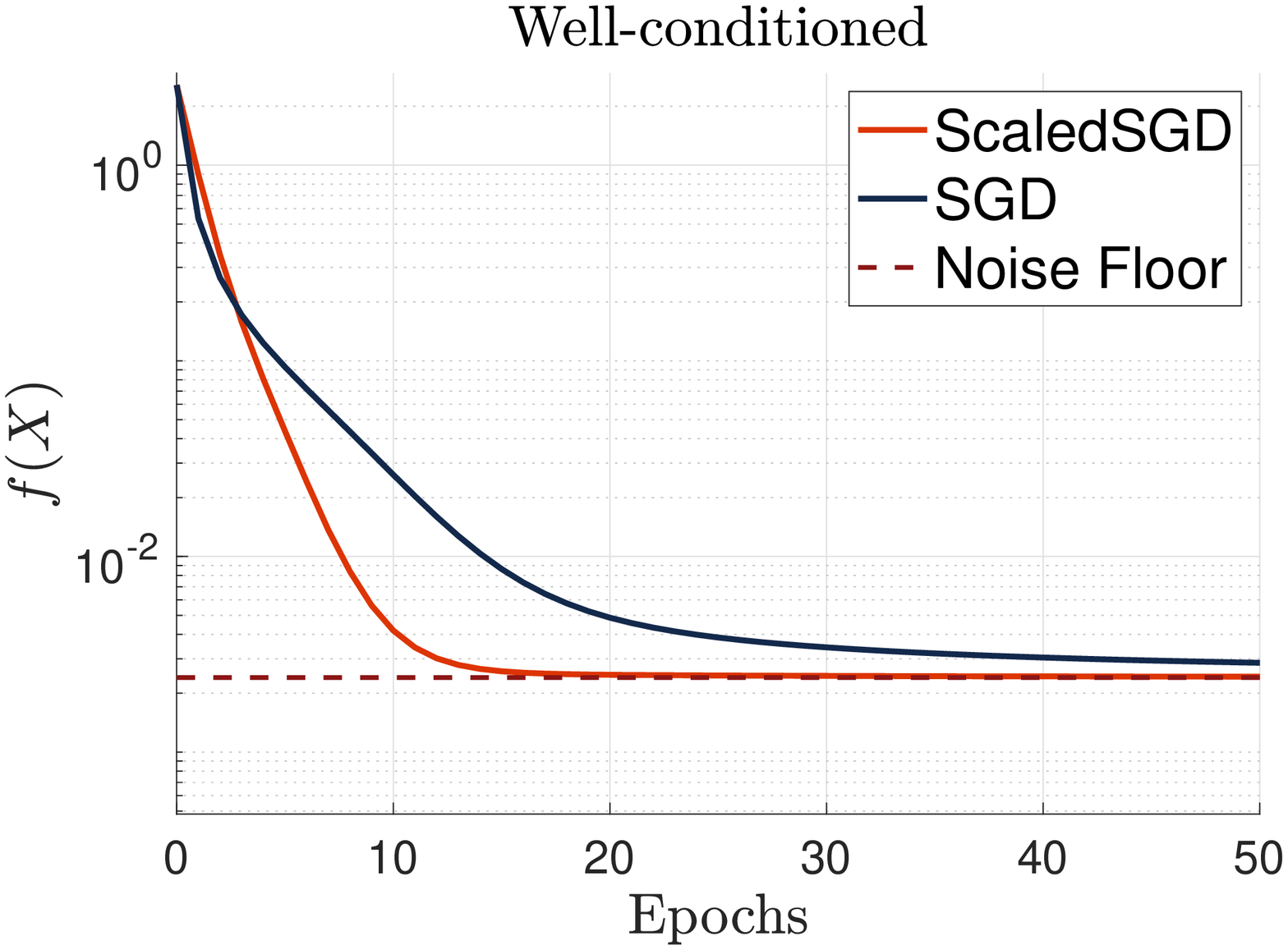}
    \end{subfigure}%
    \begin{subfigure}{0.5\textwidth}
      \centering
      \includegraphics[width=\linewidth]{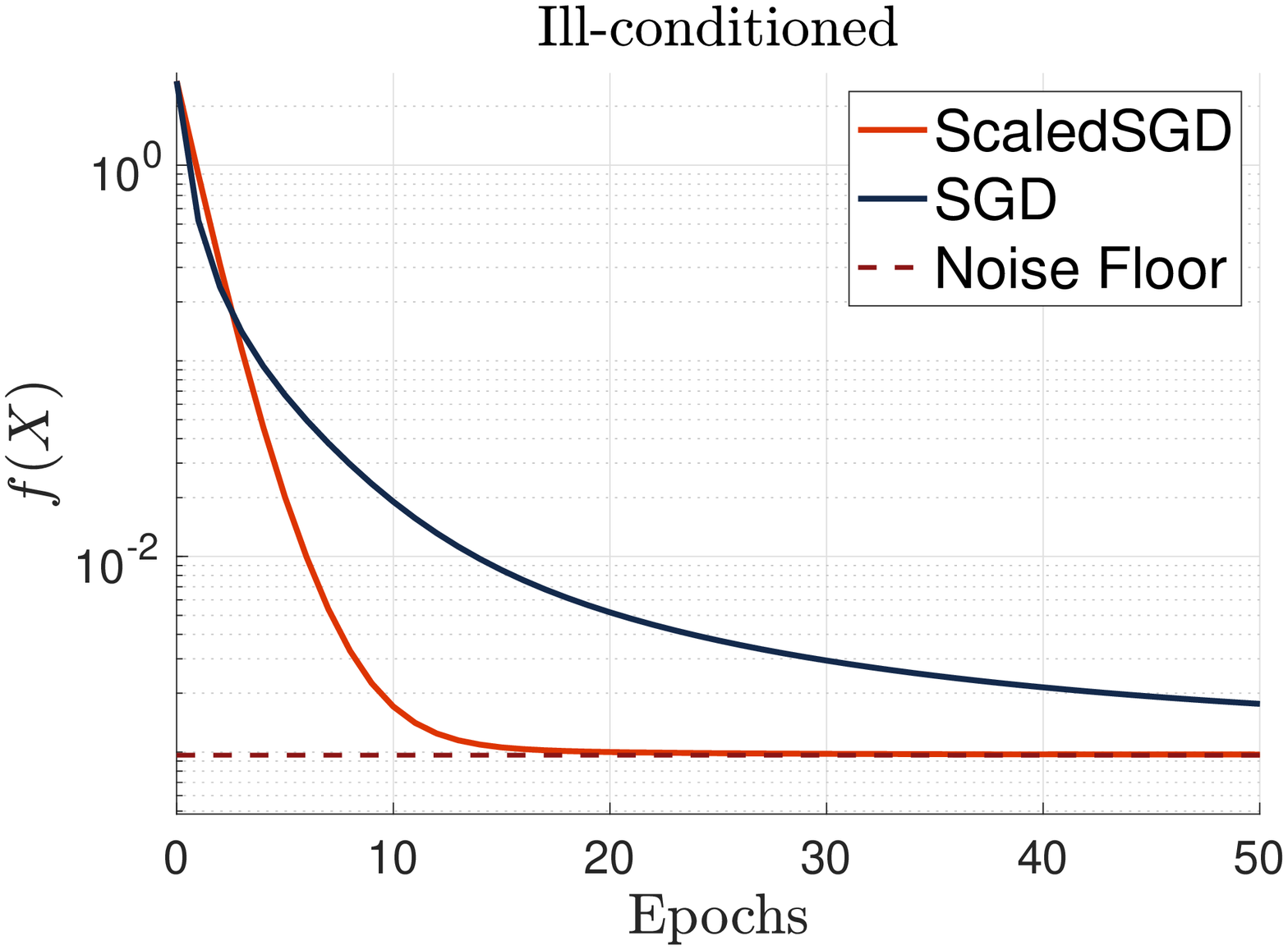}
    \end{subfigure}
    \caption{\textbf{Matrix Completion with RMSE loss in noisy setting.} We compare the convergence rate of \ref{scaledsgd} and \ref{sgd} for noisy ground truth matrix $M=\tilde M + W$ computed with respect to a well-conditioned and ill-conditioned $\tilde M$ and white Gaussian noise $W$. (\textbf{Left}) Well-conditioned $\tilde M$. (\textbf{Right}) Ill-conditioned $\tilde M$.}
    \label{fig:rmse_noise}
\end{figure}

\paragraph{Matrix completion with pointwise cross-entropy loss on noisy datasets.}
We plot the convergence rate of ScaledSGD and SGD under the noisy setting in Figure \ref{fig:1bit_noise}. Similar to RMSE loss in noisy setting, SGD show down in both well-conditioned and ill-conditioned case, while ScaledSGD converge linearly toward the noise floor. In this simulation, the search rank is set to be $r=5$. The step-size are set to be the largest possible step-sizes for which ScaledSGD and SGD can converge to the noise floor. For ScaleSGD, the step-size is set to be $\alpha=0.15$. For SGD, the step-size is set to be $\alpha=0.01$. $\text{SNR}=15\text{dB}$.

\begin{figure}[h!]
    \centering
    \begin{subfigure}{0.5\textwidth}
      \centering
      \includegraphics[width=\linewidth]{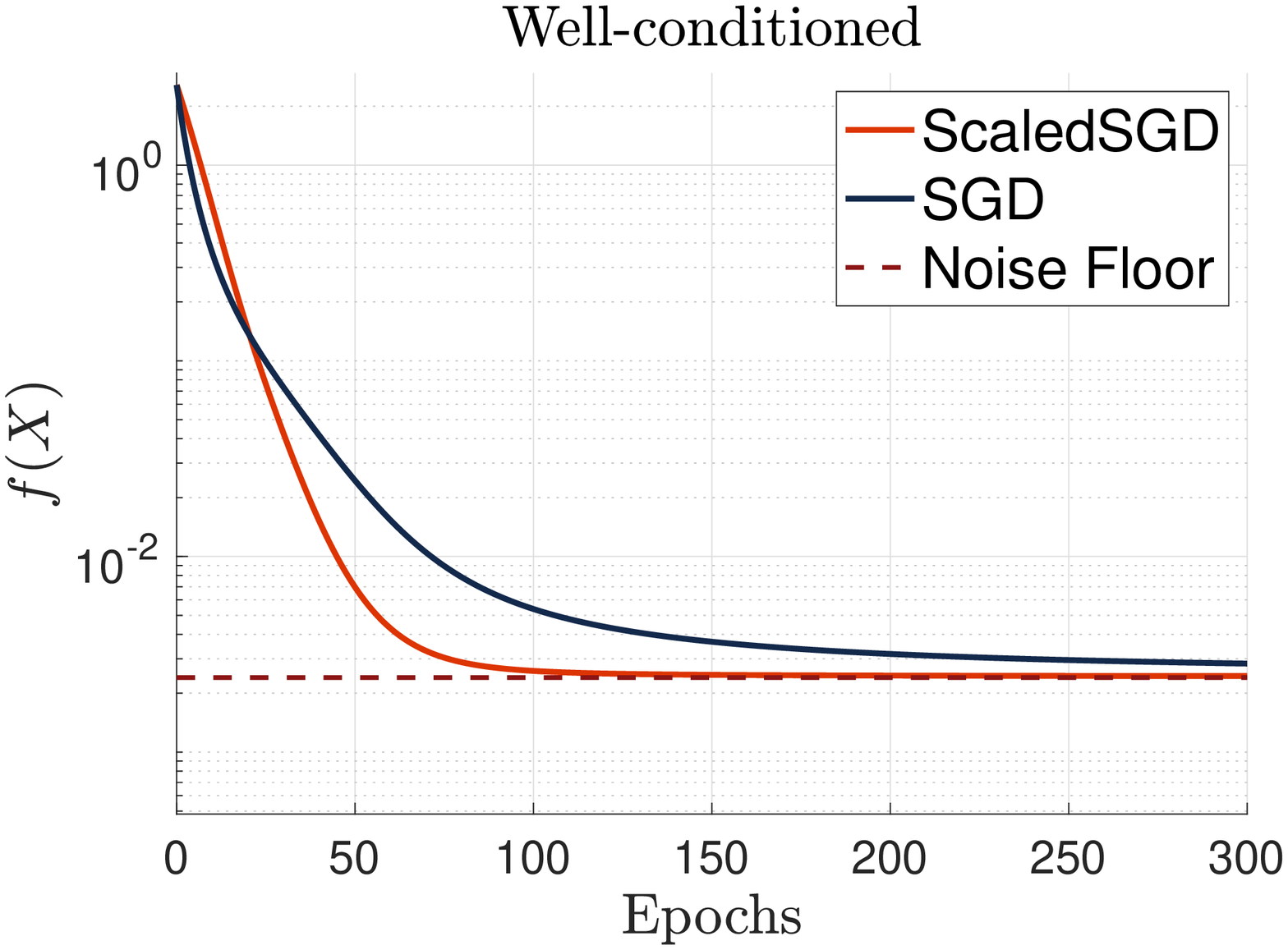}
    \end{subfigure}%
    \begin{subfigure}{0.5\textwidth}
      \centering
      \includegraphics[width=\linewidth]{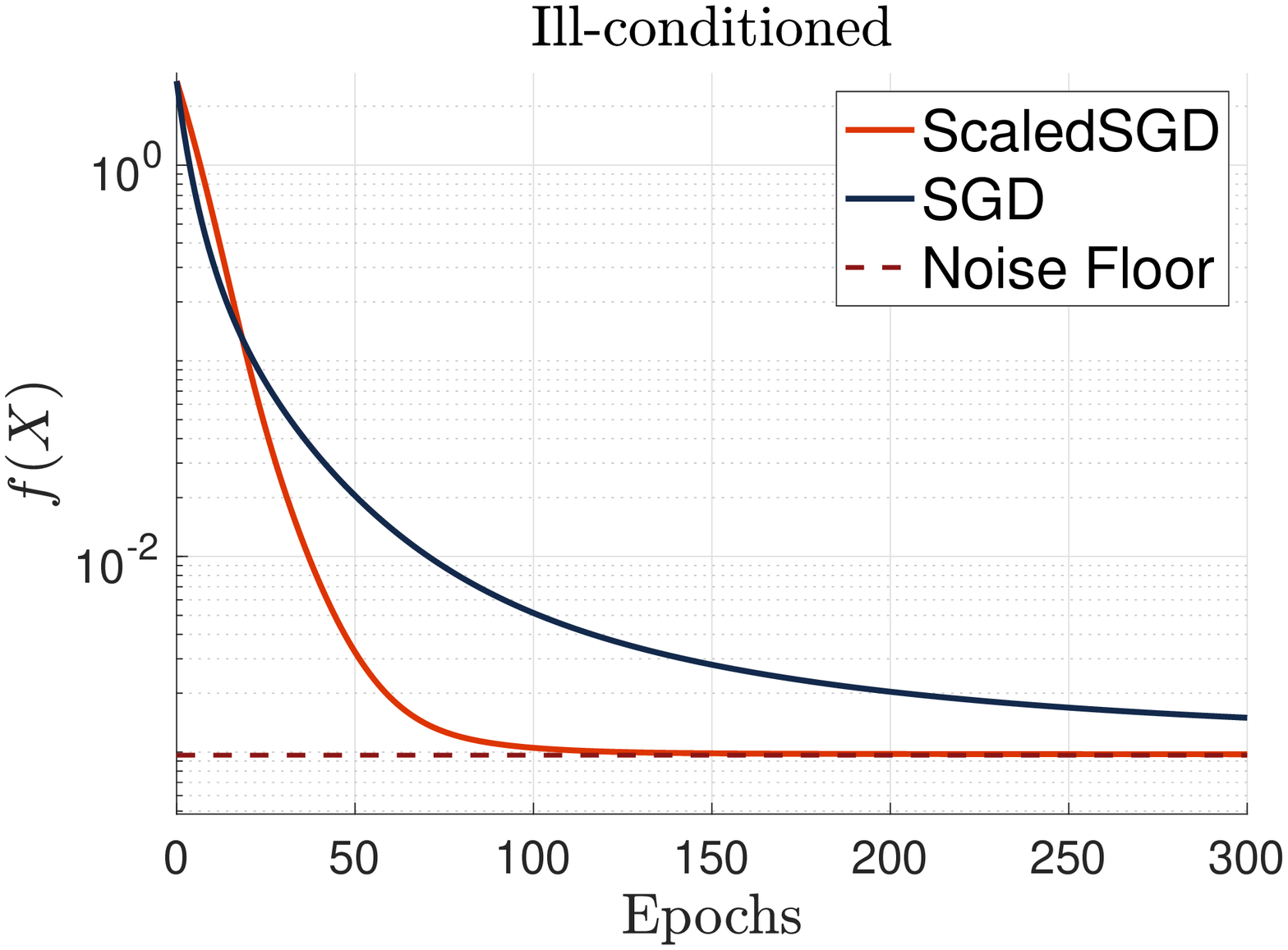}
    \end{subfigure}
    \caption{\textbf{Matrix Completion with pointwise cross-entropy loss in the noisy setting.} We compare the convergence rate of \ref{scaledsgd} and \ref{sgd} for noisy ground truth matrix $M=\tilde M + W$ computed with respect to a well-conditioned and ill-conditioned $\tilde M$ and white Gaussian noise $W$. (\textbf{Left}) Well-conditioned $\tilde M$. (\textbf{Right}) Ill-conditioned $\tilde M$.}
    \label{fig:1bit_noise}
\end{figure}

\section{Additional simulation on item-item collaborative filtering}
Finally, we perform three additional experiments on item-item collaborative filtering in order to compare the ability of \ref{scaledsgd} and \ref{sgd} to generate good recommendations using matrix factorization. 

\paragraph{Dataset.}
For additional simulations on item-time collaborative filtering, we use the MovieLens-Latest-Small and MovieLens-Latest-Full datasets \cite{movielens} in order to gauge the performance of our algorithm on different scales. First, we run a small-scale experiment on the MovieLens-Latest-Small dataset that has 100,000 ratings over 9,000 movies by 600 users. Second, we run a medium-scale and a large-scale experiment on the MovieLens-Latest-Full dataset with 27 million total ratings over 58,000 movies by 280,000 users.\footnote{Both datasets are accessible at \url{https://grouplens.org/datasets/movielens/latest/}}

\paragraph{Experimental Setup.}
The process of training a collaborative filtering model is described in \ref{app:cf}. The hyperparameters for the three experiments in this section are described below.
\begin{itemize}
\item \textbf{MovieLens-Latest-Small dataset:} 
In the small-scale experiment, we sample $|\Omega_{\text{train}}|=$ 1 million and $|\Omega_{\text{test}}|=$ 100,000 pairwise observations for training and testing, respectively. We set our search rank to be $r=3$, so the optimization variable $X$ is of size $9000 \times 3$. Both ScaledSGD and SGD are initialized using a random Gaussian initial point. For ScaledSGD the step-size is $10^{3}$ and for SGD the step-size is $5\times 10^{-2}$. 

\item \textbf{MovieLens-Latest-Full dataset:} In the medium-scale experiment, we sample $|\Omega_{\text{train}}|=$ 10 million and $|\Omega_{\text{test}}|=$ 1 million pairwise observations for training and testing, respectively. In the large-scale experiment, we sample $|\Omega_{\text{train}}|=$ 30 million and $|\Omega_{\text{test}}|=$ 3 million pairwise observations for training and testing, respectively. In both cases, we set our search rank to be $r=3$, so the optimization variable $X$ is of size $58000 \times 3$. For ScaledSGD the step-size is $5\times 10^{3}$ and for SGD the step-size is $5\times 10^{-2}$. 
\end{itemize}

% \paragraph{Difference in learning rates between SGD and ScaledSGD.}
% In our experiments, the step-sizes for both ScaledSGD and SGD are set using a hyperparameter search. We observe that under identical conditions, ScaledSGD is able to tolerate a much larger step-size than SGD. Thus, we set the step-size to be the largest magnitude possible for both methods. 

% \paragraph{Momentum set to zero}
% We also note that in our experiments, momentum is set to zero for both ScaledSGD and SGD. The reason is twofold. First, our main theoretical results are established for the case with zero momentum, so we have chosen numerical experiments to be under the same setting. Second, and more importantly, in our own experiments we observe that adding momentum to SGD and ScaledSGD forces a much smaller learning rate for both methods. As a result, both methods will actually converge slower and the final AUC score will be much worse. 
% \textbf{Explain why we set momentum to zero.} Partly it is because our theoretical results do not consider momentum, and that we expect SGD+momentum to be very different to GD+momentum, because momentum is applied at each iteration. Partly because, in our experiments, we observe that momentum forces a much small learning rate, and the BPR loss and AUC score are much worse. Refer to original BPR paper, which also doesn't use momentum.

\paragraph{Results.}
The results of our experiments for ScaledSGD and SGD are plotted in Figures \ref{fig:bprsmall}, \ref{fig:bprmedium}, and \ref{fig:bprlarge}. 
In all three cases, ScaledSGD reaches the AUC scores that are greater than NP-Maximum's within the first epoch, while SGD requires more than one epoch to achieve the same AUC score as NP-Maximum's in the small-scale (Figure~\ref{fig:bprsmall}) and medium-scale (Figure~\ref{fig:bprmedium}) setting. In addition, of all three cases, ScaledSGD is able to converge to the asymptote of AUC score within the second epoch, while SGD needs more than 2 epochs to converge to the asymptote in the small-scale (Figure~\ref{fig:bprsmall}) and medium-scale (Figure~\ref{fig:bprmedium}) setting. These results demonstrates that ScaledSGD remain highly efficient across small-scale (Figure~\ref{fig:bprsmall}), medium-scale (Figure~\ref{fig:bprmedium}), large-scale(Figure~\ref{fig:bprlarge}) and huge-scale (Figure~\ref{fig:bprhuge}) settings. 

\newpage
\begin{figure}[h!]
    \centering
    \begin{subfigure}{0.5\textwidth}
      \centering
      \includegraphics[width=\linewidth]{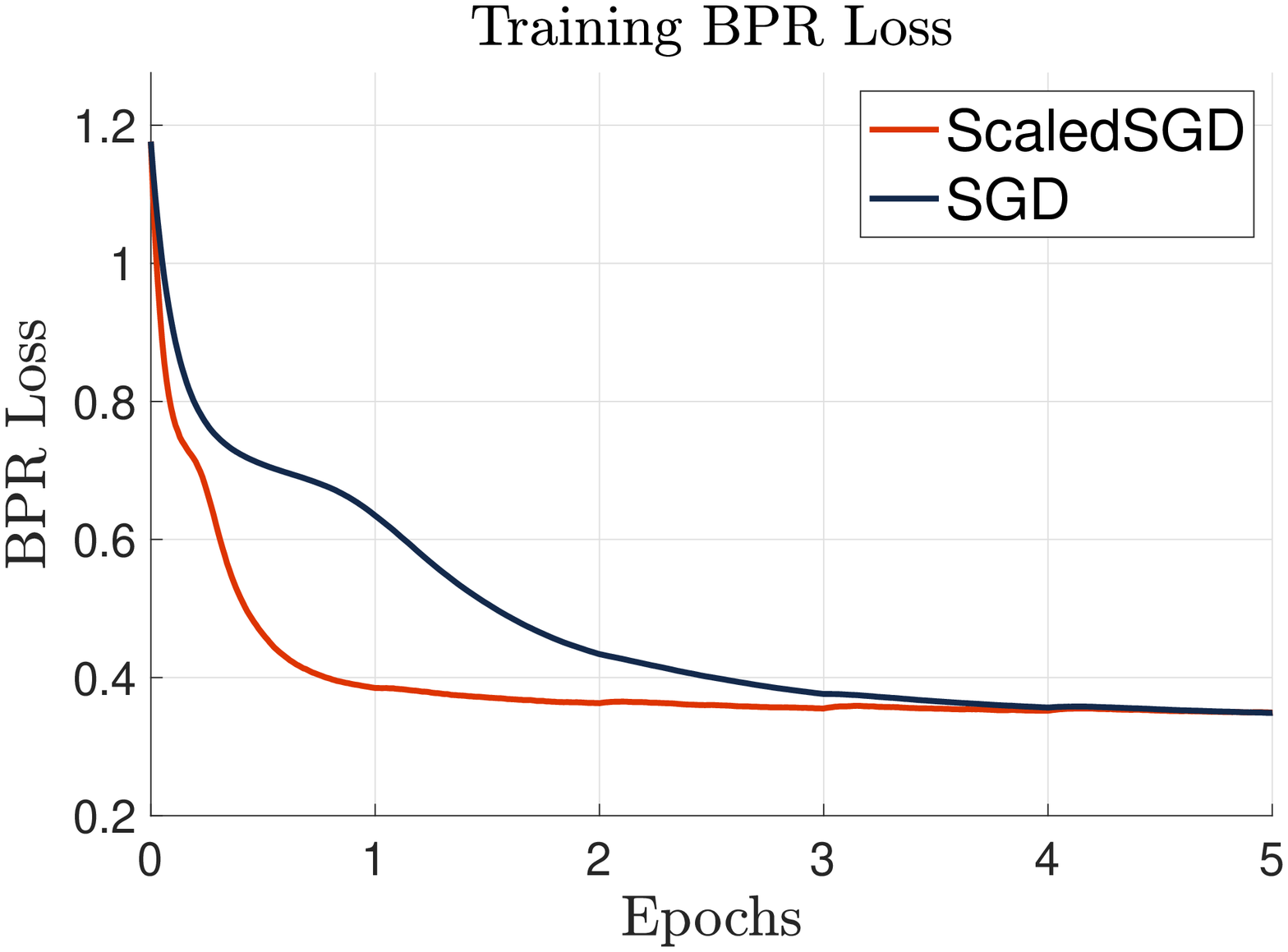}
    \end{subfigure}%
    \begin{subfigure}{0.5\textwidth}
      \centering
      \includegraphics[width=\linewidth]{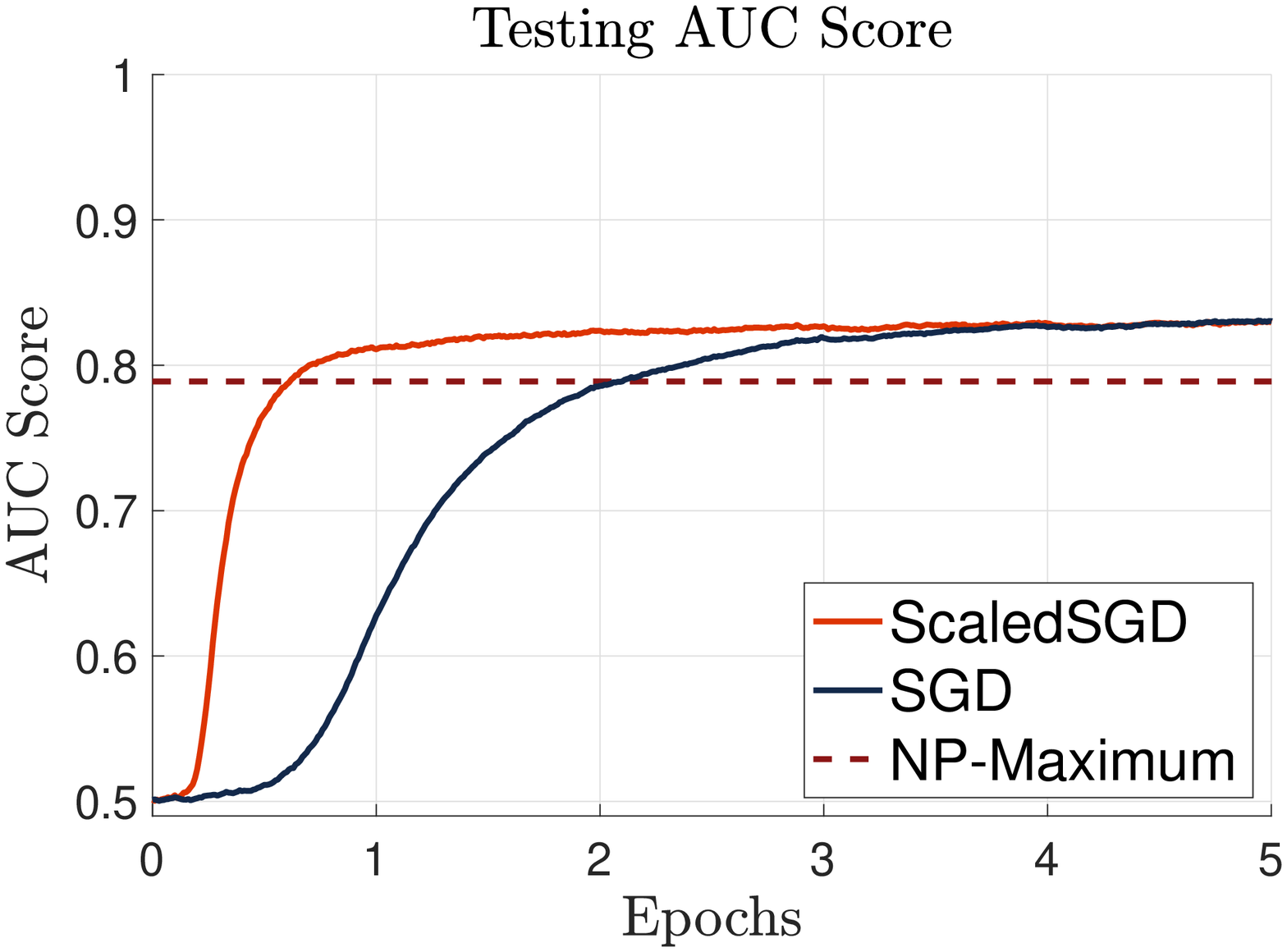}
    \end{subfigure}
    \caption{\textbf{Small-scale item-item collaborative filtering.} (MovieLens-Latest-Small dataset with $|\Omega_{\text{train}}|=$ 1 million and $|\Omega_{\text{test}}|=$ 100,000 pairwise measurements). We compare the training BPR loss and testing AUC score of \ref{scaledsgd} and \ref{sgd}. (\textbf{Left}) Training BPR loss on the training set $\Omega_{\text{train}}$. (\textbf{Right}) Testing AUC score on the test set $\Omega_{\text{test}}$.}
    \label{fig:bprsmall}
\end{figure}

\begin{figure}[h!]
    \vspace{-0.5em}
    \centering
    \begin{subfigure}{0.5\textwidth}
      \centering
      \includegraphics[width=\linewidth]{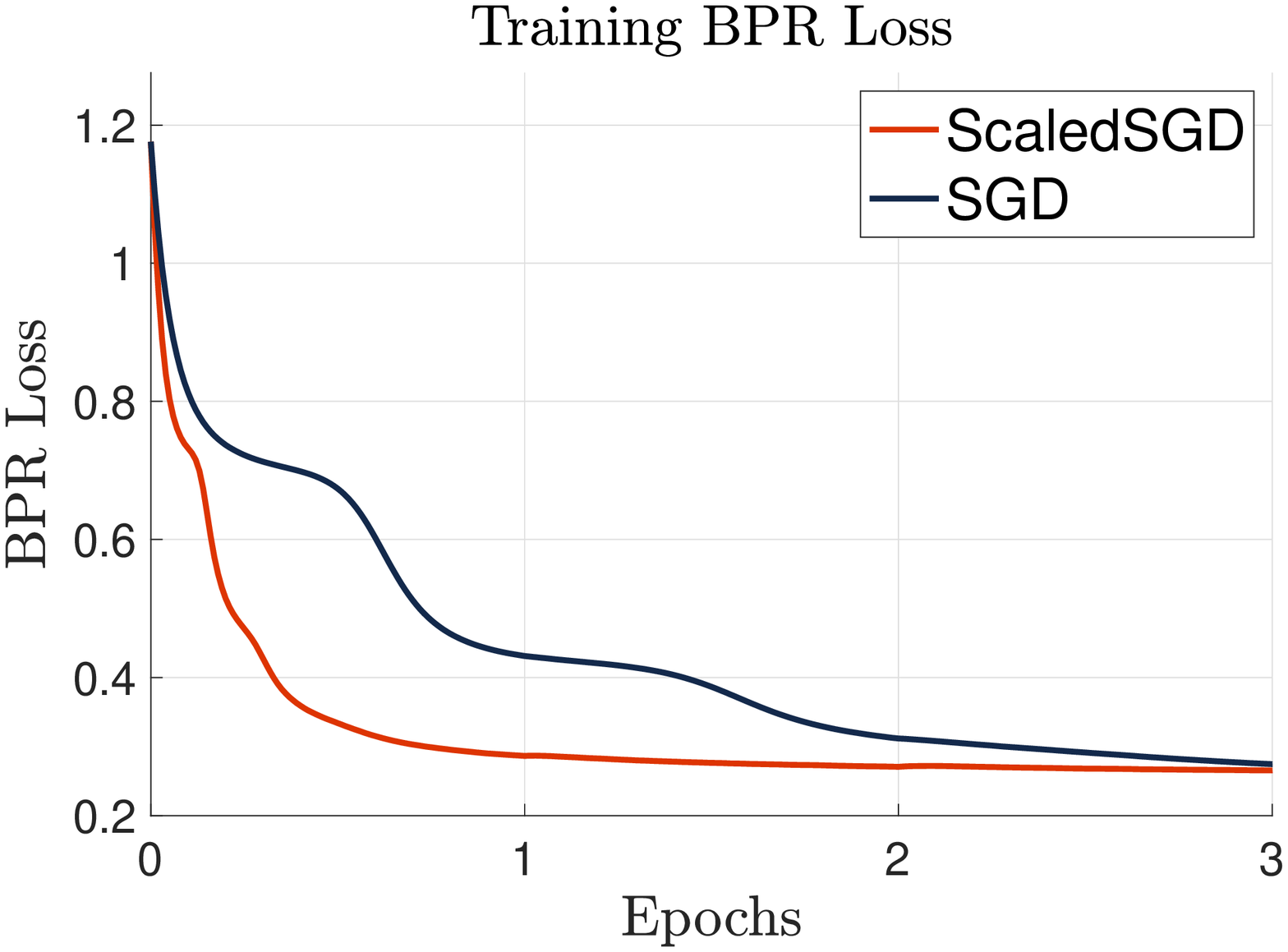}
    \end{subfigure}%
    \begin{subfigure}{0.5\textwidth}
      \centering
      \includegraphics[width=\linewidth]{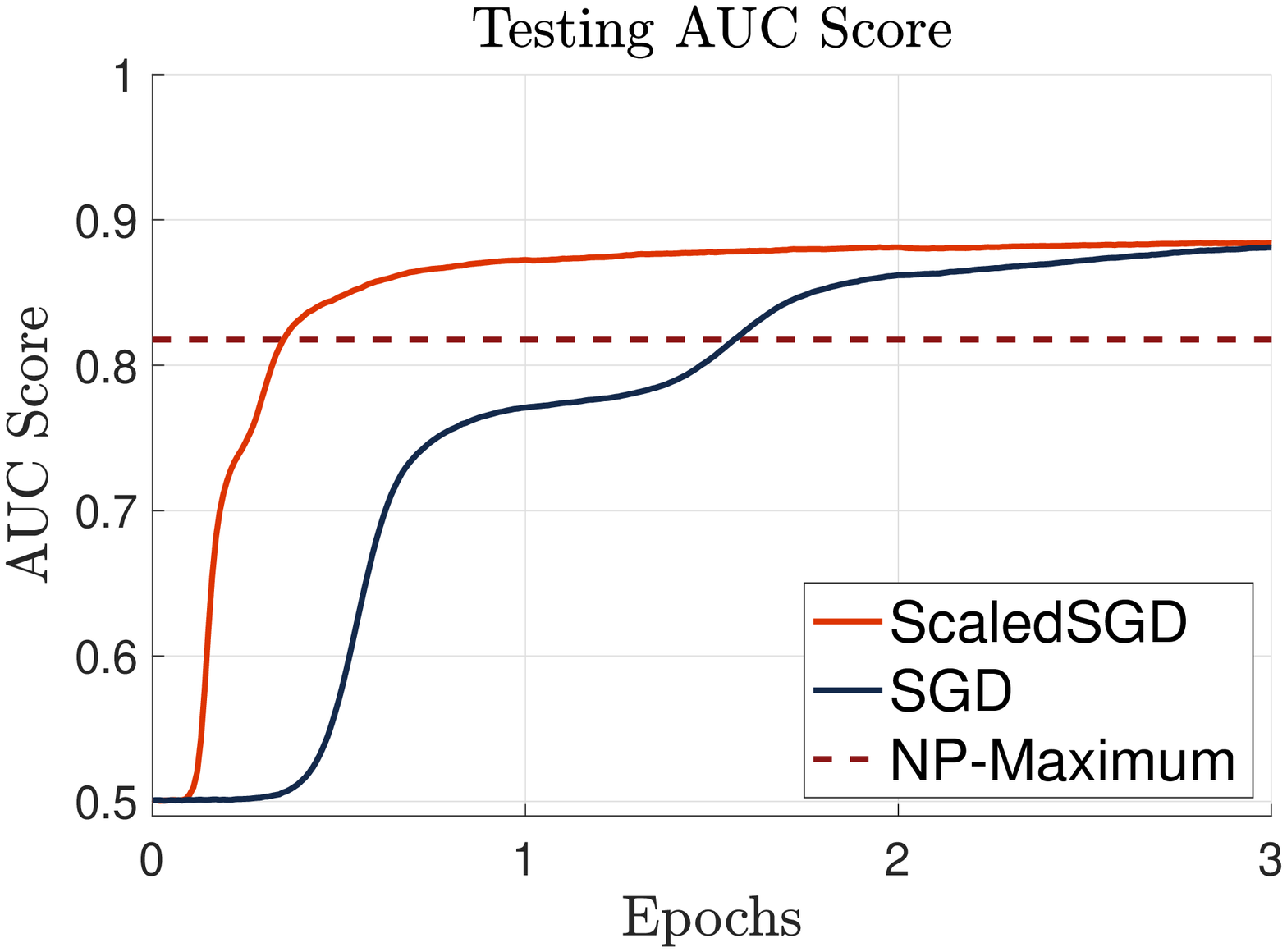}
    \end{subfigure}
    \caption{\textbf{Medium-scale item-item collaborative filtering.} (MovieLens-Latest-Full dataset with $|\Omega_{\text{train}}|=$ 10 million and $|\Omega_{\text{test}}|=$ 1 million pairwise measurements). We compare the BPR loss and AUC score of \ref{scaledsgd} and \ref{sgd} (\textbf{Left}) Training BPR loss on the training set $\Omega_{\text{train}}$. (\textbf{Right}) Testing AUC score on the test set $\Omega_{\text{test}}$.}
    \label{fig:bprmedium}
\end{figure}

\begin{figure}[h!]
    \vspace{-0.5em}
    \centering
    \begin{subfigure}{0.5\textwidth}
      \centering
      \includegraphics[width=\linewidth]{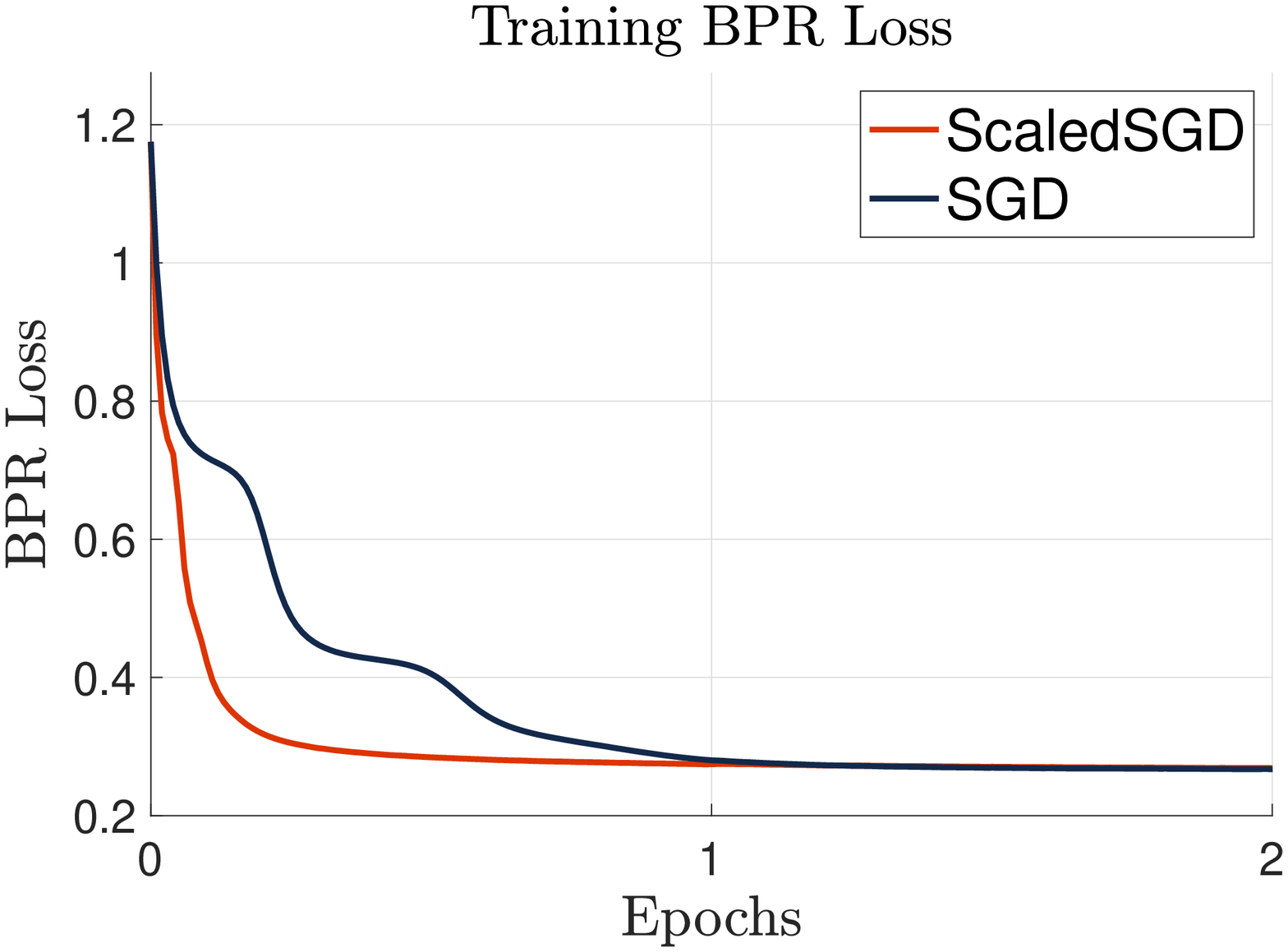}
    \end{subfigure}%
    \begin{subfigure}{0.5\textwidth}
      \centering
      \includegraphics[width=\linewidth]{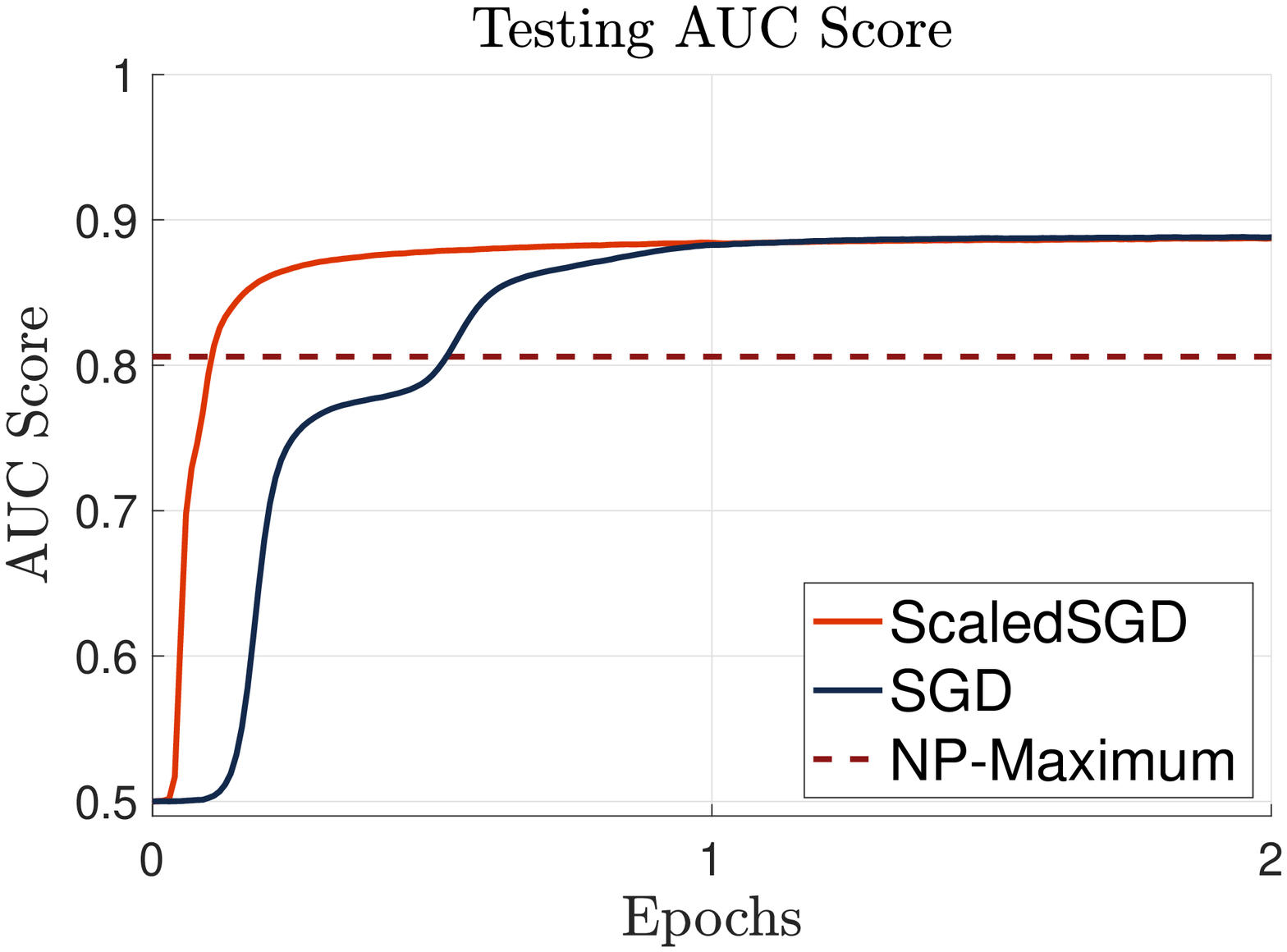}
    \end{subfigure}
    \caption{\textbf{Large-scale item-item collaborative filtering.} (MovieLens-Latest-Full dataset with $|\Omega_{\text{train}}|=$ 30 million and $|\Omega_{\text{test}}|=$ 3 million pairwise measurements). We compare the BPR loss and AUC score of \ref{scaledsgd} and \ref{sgd} (\textbf{Left}) Training BPR loss on the training set $\Omega_{\text{train}}$. (\textbf{Right}) Testing AUC score on the test set $\Omega_{\text{test}}$.}
    \label{fig:bprlarge}
\end{figure}

% \paragraph{Comments on AUC scores}
% Comparing Figures \ref{fig:bprmedium} and \ref{fig:bprlarge}, we see that as expected, both the training and testing AUC score improve when the size of the training set is increased. Here, we note that while 10 million pairwise comparisons is a relatively large training set for illustration purposes, it is still a vanishingly tiny fraction of the total number of pairwise comparisons that can be computed from 27 million ratings on 58,000 movies. Indeed, the theoretical maximum number of pairwise comparisons is $2\times 10^{14}$. We expect that the AUC scores eventually achievable with streaming data over months of online training would be much higher for either algorithms. As our main contribution of this paper is theoretical (i.e. \thmref{main}), we leave full-scale online testing and a detailed comparison of the maximum achievable values of AUC scores as future work.

\section{Proof of the theoretical results} \label{app:proof}

In this section, we show that, in expectation, the search direction
$V=SG(X)(X^{T}X^{-1})$ makes a geometric decrement to both the function
value $f$ and the incoherence $g$. A key idea is to show that the
size of the decrement in $f$ is controlled by the coherence $g_{\max}\ge g_{k}(X)$
of the current iterate, and this motivates the need to decrement $g_{k}$
in order to keep the iterates incoherent. Our key result is that both
decrements are independent of the condition number $\kappa$.

\subsection{Preliminaries}

\global\long\def\vect{\mathrm{vec}}%
 
\global\long\def\eqdef{\overset{\text{def}}{=}}%
 
\global\long\def\tr{\operatorname{tr}}%

We define the inner product between two matrices as $\inner XY\eqdef\tr(X^{T}Y)$,
which induces the Frobenius norm as $\|X\|_{F}=\sqrt{\inner XX}$.
The vectorization $\vect(X)$ is the column-stacking operation that
turns an $m\times n$ matrix into a length-$mn$ vector; it preserves
the matrix inner product $\inner XY=\vect(X)^{T}\vect(Y)$ and the
Frobenius norm $\|\vect(X)\|=\|X\|_{F}$.

We denote $\lambda_{i}(M)$ and $\sigma_{i}(M)$ as the $i$-th eigenvalue
and singular value of a symmetric matrix $M=M^{T}$, ordered from
the most positive to the most negative. We will often write $\lambda_{\max}(M)$
and $\lambda_{\min}(M)$ to index the most positive and most negative
eigenvalues, and $\sigma_{\max}(M)$ and $\sigma_{\min}(M)$ for the
largest and smallest singular values.

Recall for any matrix $V\in\mathbb{R}^{d\times r}$, we define its
local norm with respect to $X\in\mathbb{R}^{d\times r}$ as 
\[
\|V\|_{X}=\|V(X^{T}X)^{1/2}\|_{F},\qquad\|V\|_{X}^{*}=\|V(X^{T}X)^{-1/2}\|_{F}.
\]
Also recall that we have defined the stochastic gradient operator
\begin{equation}
SG(X)=2d^{2}\cdot(XX^{T}-ZZ^{T})_{i,j}\cdot(e_{i}e_{j}^{T}+e_{j}e_{i}^{T})X
\end{equation}
where $(i,j)\sim[d]^{2}$ is selected uniformly at random. This way
SGD is written $X_{+}=X-\alpha SG(X)$ and ScaledSGD is written $X_{+}=X-\alpha SG(X)(X^{T}X)^{-1}$
for step-size $\alpha>0$. 

\subsection{\label{subsec:fconv}Function value convergence}

Recall that \thmref{jin}, due to \citet{jin2016provable}, says that SGD
converges to $\epsilon$ accuracy in $O(\kappa^{4}\log(1/\epsilon))$
iterations, with a four-orders-of-magnitude dependence on the condition
number $\kappa$. By comparison, our main result \thmref{main}
says that ScaledSGD converges to $\epsilon$ accuracy in $O(\log(1/\epsilon))$
iterations, completely independence on the condition number $\kappa$.
In this section, we explain how the first two factors of $\kappa$
are eliminated, by considering the full-batch counterparts of these
two algorithms.

First, consider full-batch gradient descent on the function $f(X)=\|XX^{T}-ZZ^{T}\|_{F}^{2}$.
It follows from the local Lipschitz continuity of $f$ that
\begin{align}
f(X_{t+1}) & \le f(X_{t})-\alpha\inner{\nabla f(X_{t})}{\nabla f(X_{t})}+\alpha\cdot(L/2)\cdot\|\nabla f(X_{t})\|_{F}^{2}\nonumber \\
 & =f(X_{t})-\underbrace{\alpha\|\nabla f(X_{t})\|_{F}^{2}}_{\text{linear progress}}+\alpha^{2}\cdot(L/2)\cdot\underbrace{\|\nabla f(X_{t})\|_{F}^{2}}_{\text{inverse step-size}}\label{eq:gd_fexp}
\end{align}
where $X_{t+1}=X_{t}-\alpha\nabla f(X_{t})$. Here, the linear progress
term determines the amount of progress that can proportionally be
made with a sufficiently small step-size $\alpha$, whereas the inverse
step-size term basically controls how large the step-size can be.
In the case of full-batch gradient descent, it is long known that
an $X_{t}$ that is sufficiently close to $Z$ will satisfy the following
\[
8\lambda_{\min}(Z^{T}Z)\cdot f(X_{t})\le\|\nabla f(X_{t})\|_{F}^{2}\le16\lambda_{\max}(Z^{T}Z)\cdot f(X_{t})
\]
and therefore, taking $\lambda_{\max}(Z^{T}Z)=1$ and $\lambda_{\min}(Z^{T}Z)=\kappa^{-1}$
where $\kappa$ is the condition number, we have linear convergence
\[
f(X_{t+1})\le\left(1-\alpha\cdot8\kappa^{-1}+\alpha^{2}\cdot8L\right)f(X_{t})\le\left(1-\alpha\cdot4\kappa^{-1}\right)f(X_{t})
\]
for step-sizes of $\alpha\le2/(\kappa L)$. Therefore, it follows
from this analysis that full-batch gradient descent takes $T=O(\kappa^{2}\log(1/\epsilon))$
iterations to converge to $\epsilon$-accuracy. In this iteration
count, one factor of $\kappa$ arises from the linear progress term,
which shrinks as $O(\kappa^{-1})$ as $\kappa$ grows large. The second
factor of $\kappa$ arises because the inverse step-size term is a
factor of $\kappa$ larger than the linear progress term, which restricts
the maximum step-size to be no more than $O(\kappa^{-1})$.

The following lemma, restated from the main text, shows that an analogous
analysis for full-batch ScaledGD proves an iteration count of $T=O(\log(1/\epsilon))$
with no dependence on the condition number $\kappa$. In fact, it
proves that full-batch ScaledGD converges like full-batch gradient
descent with a perfect condition number $\kappa=1$.
\begin{lem}[Function descent, \lemref{fdecr} restated]
\label{lem:fdescr2}Let $X,Z\in\R^{n\times r}$ satisfy $\|XX^{T}-ZZ^{T}\|_{F}\le\rho\lambda_{\min}(Z^{T}Z)$
where $\rho<1/2$. Then, the function $f(X)=\|XX^{T}-ZZ^{T}\|_{F}^{2}$
satisfies 
\begin{gather}
|f(X+V)-f(X)-\inner{\nabla f(X)}V|\le\frac{L_{X}}{2}\cdot\|V\|_{X}^{2},\label{eq:Liplike}\\
13\cdot f(X)\le(\|\nabla f(X)\|_{X}^{*})^{2}\le16\cdot f(X),\label{eq:graddom}
\end{gather}
for all $\|V\|_{X}\le C\cdot\sqrt{f(X)}$ with $L_{X}=6+8C+2C^{2}=O(1+C^{2})$. 
\end{lem}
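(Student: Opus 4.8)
The plan is to prove the two assertions separately, both by reducing to direct computations with the error matrix $E\eqdef XX^{T}-ZZ^{T}$, for which $f(X)=\|E\|_{F}^{2}$ and $\nabla f(X)=4EX$. The one structural fact I will use throughout is that the hypothesis $\|E\|_{F}\le\rho\lambda_{\min}(Z^{T}Z)$ together with Weyl's inequality forces $\lambda_{\min}(X^{T}X)=\lambda_{r}(XX^{T})\ge(1-\rho)\lambda_{\min}(Z^{T}Z)$, so that the ratio $\|E\|_{2}/\lambda_{\min}(X^{T}X)\le\rho/(1-\rho)<1$ is an absolute constant. This is exactly the mechanism that cancels the ill-conditioning: wherever a naive analysis would pay a factor $1/\lambda_{\min}(X^{T}X)\approx\kappa$, the smallness of $E$ supplies a compensating factor $\lambda_{\min}(Z^{T}Z)$.

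For the Lipschitz-like bound~(\ref{eq:Liplike}) I would expand $f(X+V)=\|E+XV^{T}+VX^{T}+VV^{T}\|_{F}^{2}$. The first-order part $2\langle E,XV^{T}+VX^{T}\rangle$ collapses to $\langle\nabla f(X),V\rangle=\langle 4EX,V\rangle$ using the symmetry of $E$, leaving the remainder $R=2\langle E,VV^{T}\rangle+\|XV^{T}+VX^{T}\|_{F}^{2}+2\langle XV^{T}+VX^{T},VV^{T}\rangle+\|VV^{T}\|_{F}^{2}$. I would bound each of the four terms against $\|V\|_{X}^{2}=\operatorname{tr}(V^{T}VX^{T}X)$, converting ordinary Frobenius norms via $\|V\|_{F}^{2}\le\|V\|_{X}^{2}/\lambda_{\min}(X^{T}X)$: the term $2\langle E,VV^{T}\rangle$ contributes only an absolute constant since $\|E\|_{2}/\lambda_{\min}(X^{T}X)<1$; the term $\|XV^{T}+VX^{T}\|_{F}^{2}\le 4\|V\|_{X}^{2}$ is immediate from $\|XV^{T}\|_{F}=\|VX^{T}\|_{F}=\|V\|_{X}$; and the cross term and $\|VV^{T}\|_{F}^{2}$ carry one and two powers of the ratio $\|V\|_{X}/\lambda_{\min}(X^{T}X)\le C\rho/(1-\rho)<C$, where the constraint $\|V\|_{X}\le C\sqrt{f(X)}=C\|E\|_{F}$ is invoked. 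Collecting powers of $C$ yields $L_{X}=O(1+C^{2})$.

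For the gradient-domination bounds~(\ref{eq:graddom}), write $P=X(X^{T}X)^{-1}X^{T}$ for the orthogonal projector onto the column space of $X$, so that $(\|\nabla f(X)\|_{X}^{*})^{2}=16\|EX(X^{T}X)^{-1/2}\|_{F}^{2}=16\|EP\|_{F}^{2}$. The upper bound is then immediate: $P$ is a contraction, so $\|EP\|_{F}^{2}\le\|E\|_{F}^{2}=f(X)$. The lower bound is the crux. Using the orthogonal split $\|E\|_{F}^{2}=\|EP\|_{F}^{2}+\|E(I-P)\|_{F}^{2}$, it suffices to show that $\|E(I-P)\|_{F}^{2}\le\tfrac{3}{16}\|E\|_{F}^{2}$. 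Here I would exploit the identity $XX^{T}(I-P)=0$, which gives $E(I-P)=-ZZ^{T}(I-P)$ and reduces the task to a subspace-perturbation estimate: the column space of $X$ must nearly contain that of $Z$, weighted by the spectrum of $ZZ^{T}$, whenever $E$ is small. I would establish this by testing $(I-P)$ against unit vectors $u\perp\mathrm{range}(X)$, where $u^{T}XX^{T}u=0$ forces $\|Z^{T}u\|^{2}=-u^{T}Eu\le\|E\|_{2}$, and then combining this with the eigengap $\lambda_{\min}(Z^{T}Z)$ to control the alignment.

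The main obstacle is making this last estimate quantitatively sharp in a way that is \emph{independent} of $\kappa$. The difficulty is genuine: the weighting by $ZZ^{T}$ means that a crude submultiplicative bound of the form $\|ZZ^{T}(I-P)\|_{F}^{2}\le\lambda_{\max}(Z^{T}Z)\|(I-P)Z\|_{F}^{2}$ loses the eigengap and reintroduces $\lambda_{\max}/\lambda_{\min}$. The estimate must instead be carried out so that the large-eigenvalue directions of $Z$, which are well captured by $\mathrm{range}(X)$, are separated from the small-eigenvalue directions where the mismatch lives, leaving a residual controlled purely in terms of $\rho$. Once $\|E(I-P)\|_{F}^{2}$ is pinned to the required fraction of $\|E\|_{F}^{2}$ with a $\kappa$-free constant, the lower bound of~(\ref{eq:graddom}) follows, and the two estimates together establish the stated descent and gradient-domination properties that drive the condition-number-independent convergence.
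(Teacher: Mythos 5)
Your handling of the Lipschitz-type bound (\ref{eq:Liplike}) and of the upper bound in (\ref{eq:graddom}) is correct and coincides with the paper's own proof: the same quartic expansion of $f(X+V)$, the same conversion of the higher-order terms into powers of $\|V\|_{X}$ at the price of $1/\lambda_{\min}(X^{T}X)$, and the same use of Weyl's inequality to control $\|E\|_{F}/\lambda_{\min}(X^{T}X)\le\rho/(1-\rho)$ and $\|V\|_{X}/\lambda_{\min}(X^{T}X)\le C\rho/(1-\rho)$.

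The lower bound in (\ref{eq:graddom}) is where your plan fails, and not merely because you leave the last estimate open: the intermediate inequality you reduce to, $\|E(I-P)\|_{F}^{2}\le\tfrac{3}{16}\|E\|_{F}^{2}$ with $E=XX^{T}-ZZ^{T}$ and $P=X(X^{T}X)^{-1}X^{T}$, is false. Take $d=2$, $r=1$, $Z=e_{1}$, $X=(\cos\phi,\sin\phi)^{T}$ with $\phi$ small: then $\|E\|_{F}^{2}=2\sin^{2}\phi$ while $E(I-P)=-e_{1}e_{1}^{T}(I-P)$ has $\|E(I-P)\|_{F}^{2}=\sin^{2}\phi$, so the ratio is $1/2$ for arbitrarily small $\rho$. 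The structural reason is that $\|E(I-P)\|_{F}^{2}=\|PE(I-P)\|_{F}^{2}+\|(I-P)E(I-P)\|_{F}^{2}$, and only the corner block $(I-P)E(I-P)=-(I-P)ZZ^{T}(I-P)$ is $O(\rho)\cdot\|E\|_{F}$; the off-diagonal block $PE(I-P)$ is generically of the same order as $\|E\|_{F}$ itself, so no refinement of your subspace-perturbation estimate can force $\|E(I-P)\|_{F}^{2}$ below a constant fraction of $\|E\|_{F}^{2}$. The paper never bounds $\|E(I-P)\|_{F}$: it bounds only the corner block, via the angle $\theta$ between $E$ and the subspace $\{XY^{T}+YX^{T}\}$, whose orthogonal complement among symmetric matrices is exactly $\{(I-P)W(I-P)\}$, and invokes Lemmas~12 and~13 of \citet{zhang2021preconditioned} to obtain $\|(I-P)E(I-P)\|_{F}\le\tfrac{1}{\sqrt{2}}\tfrac{\rho}{\sqrt{1-\rho^{2}}}\|E\|_{F}$. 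If you wish to keep your projector decomposition, the repair is to retain the off-diagonal block on both sides: $\|EP\|_{F}^{2}=\|PEP\|_{F}^{2}+\|PE(I-P)\|_{F}^{2}\ge\tfrac{1}{2}(\|E\|_{F}^{2}-\|(I-P)E(I-P)\|_{F}^{2})$, after which only the corner block must be shown small. Be aware that this factor of $1/2$ is real: in the example above $(\|\nabla f(X)\|_{X}^{*})^{2}=16\sin^{2}\phi=8f(X)$, so the constant $13$ in (\ref{eq:graddom}) cannot be attained by any proof (the paper's derivation appears to conflate the normalizations $\|Y\|_{X}=1$ and $\|XY^{T}+YX^{T}\|_{F}=2$); what the argument actually yields is $8(1-\sin^{2}\theta)\ge 20/3$, which still suffices for the downstream convergence claims after adjusting constants.
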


It follows that the iteration $X_{t+1}=X_{t}-\alpha\nabla f(X_{t})(X^{T}_{t}X_{t})^{-1}$
yields
\begin{align}
f(X_{t+1}) & \le f(X_{t})-\alpha\inner{\nabla f(X_{t})}{\nabla f(X_{t})(X^{T}_{t}X_{t})^{-1}}+\alpha\cdot(L_{X}/2)\cdot\|\nabla f(X_{t})(X^{T}_{t}X_{t})^{-1}\|_{X}^{2}\nonumber \\
 & =f(X_{t})-\alpha(\|\nabla f(X_{t})\|_{X}^{*})^{2}+\alpha^{2}\cdot(L_{X}/2)\cdot(\|\nabla f(X_{t})\|_{X}^{*})^{2}\label{eq:gd_fexp-1}\\
 & \le\left(1-\alpha\cdot8+\alpha^{2}\cdot8L_{X}\right)f(X_{t})\le\left(1-\alpha\cdot4\right)f(X_{t})
\end{align}
for step-sizes of $\alpha\le2/L_{X},$ where $L_{X}=6+8(4)+2(4)^{2}$.
Therefore, we conclude that full-batch ScaledGD takes $T=O(\log(1/\epsilon))$
iterations to converge to $\epsilon$-accuracy, as if the condition
number were perfectly $\kappa=1$.

Note that \lemref{fdescr2} has been proved in both \citet{tong2021accelerating}
and \citet{zhang2021preconditioned}. For completeness, we give a
proof inspired by \citet{zhang2021preconditioned}.
\begin{proof}[Proof of \lemref{fdescr2}]
We prove (\ref{eq:Liplike}) via a direct expansion of the quadratic
\begin{gather*}
\underbrace{\|(X+V)(X+V)^{T}-ZZ^{T}\|_{F}^{2}}_{f(X+V)}=\underbrace{\|XX^{T}-ZZ^{T}\|_{F}^{2}}_{f(X)}+\underbrace{2\inner{XX^{T}-ZZ^{T}}{XV^{T}+VX^{T}}}_{\inner{\nabla f(X)}V}\\
+\underbrace{2\inner{XX^{T}-ZZ^{T}}{VV^{T}}+\|XV^{T}+VX^{T}\|_{F}^{2}}_{\frac{1}{2}\inner{\nabla^{2}f(X)[V]}V}+\underbrace{2\inner{VX^{T}+XV^{T}}{VV^{T}}}_{\frac{1}{6}\inner{\nabla^{3}f(X)[V,V]}V}+\underbrace{\|VV^{T}\|_{F}^{2}}_{\frac{1}{24}\inner{\nabla^{4}f(X)[V,V,V]}V}
\end{gather*}
and it then follows by simple counting that
\begin{gather*}
|f(X+V)-f(X)-\inner{\nabla f(X)}V|\le\frac{L_{2}}{2}\|V\|_{X}^{2}+\frac{L_{3}}{6}\|V\|_{X}^{3}+\frac{L_{4}}{24}\|V\|_{X}^{4},\\
L_{2}=4+2\frac{\|XX^{T}-ZZ^{T}\|_{F}}{\lambda_{\min}(X^{T}X)},\quad L_{3}=\frac{24}{\lambda_{\min}(X^{T}X)},\quad L_{4}=\frac{24}{\lambda_{\min}^{2}(X^{T}X)}.
\end{gather*}
Now, from Weyl's inequality that
\[
\lambda_{\min}(X^{T}X)=\lambda_{r}(XX^{T})\ge\lambda_{r}(ZZ^{T})-\|XX^{T}-ZZ^{T}\|_{F}\ge(1-\rho)\cdot\lambda_{\min}(Z^{T}Z)
\]
and therefore $\|XX^{T}-ZZ^{T}\|_{F}/\lambda_{\min}(X^{T}X)\le\rho/(1-\rho)\le1$
because $\rho<1/2$. It follows that $L_{2}\le6$. If $\|V\|_{X}\le C\cdot\sqrt{f(X),}$
then $\frac{L_{3}}{3}\|V\|_{X}\le8C$ and $\frac{L_{4}}{12}\|V\|_{X}^{2}\le2C^{2}$. 

For the upper-bound in (\ref{eq:graddom}), we have simply 
\[
\|\nabla f(X)\|_{X}^{*}=4\|(XX^{T}-ZZ^{T})X(X^{T}X)^{-1/2}\|_{F}\le4\|XX^{T}-ZZ^{T}\|_{F}.
\]
For the lower-bound in (\ref{eq:graddom}), we evoke \citet[Lemma~12]{zhang2021preconditioned}
with RIP constant $\delta=0$ and regularization parameter $\eta=0$
to yield\footnote{Here, we correct for a factor-of-two error in \citet[Lemma~12]{zhang2021preconditioned}. }
\[
\|\nabla f(X)\|_{X}^{*}=\max_{\|Y\|_{X}=1}2\inner{XY^{T}+YX^{T}}{XX^{T}-ZZ^{T}}=4\|XX^{T}-ZZ^{T}\|_{F}\cdot\cos\theta
\]
in which $\cos\theta$ is defined between $XX^{T}-ZZ^{T}$ and the
set $\{XY^{T}+YX^{T}:Y\in\R^{d\times r}\}$, as in
\[
\cos\theta=\max_{Y\in\R^{d\times r}}\frac{\inner{XY^{T}+YX^{T}}{XX^{T}-ZZ^{T}}}{\|XY^{T}+YX^{T}\|_{F}\cdot\|XX^{T}-ZZ^{T}\|_{F}}.
\]
It follows from \citet[Lemma~13]{zhang2021preconditioned} that
\[
\sin\theta=\frac{\|(I-XX^{\dagger})(XX^{T}-ZZ^{T})(I-XX^{\dagger})\|_{F}}{\|XX^{T}-ZZ^{T}\|}\le\frac{1}{\sqrt{2}}\frac{\rho}{\sqrt{1-\rho^{2}}}.
\]
Hence, for $\rho<1/2$, we have 
\[
\frac{(\|\nabla f(X)\|_{X}^{*})^{2}}{\|XX-ZZ^{T}\|_{F}^{2}}=16\cos^{2}\theta\ge16\left(1-\frac{1}{2}\frac{\rho^{2}}{1-\rho^{2}}\right)\ge16\left(1-\frac{1}{6}\right)=\frac{40}{3}>13.
\]
\end{proof}

\subsection{Coherence convergence}

We now explain that ScaledSGD eliminates the last two
factors of $\kappa$ from SGD because it is able to keep its iterates
a factor of $\kappa^{2}$ more incoherent. First, consider regular SGD on the function $f(X)=\|XX^{T}-ZZ^{T}\|_{F}^{2}$.
Conditioning on the current iterate, we have via the local Lipschitz
continuity of $f$:
\begin{align}
\E[f(X_{+})] & \le f(X)-\alpha\inner{\nabla f(X)}{\E[SG(X)]}+\alpha\cdot(L/2)\cdot\E[\|SG(X)\|_{F}^{2}]\nonumber \\
 & =f(X)-\underbrace{\alpha\|\nabla f(X)\|_{F}^{2}}_{\text{linear progress}}+\alpha^{2}\cdot(L/2)\cdot\underbrace{\E[\|SG(X)\|_{F}^{2}]}_{\text{inverse step-size}}\label{eq:sgd_fexp}
\end{align}
where $X_{+}=X-\alpha SG(X)$. In expectation, the linear
progress term of SGD coincides with that of full-batch gradient descent
in (\ref{eq:gd_fexp}). The inverse step-size term, however, is up
to a factor of $d^{2}$ times larger. To see this, observe that
\begin{align*}
\E[\|SG(X)\|_{F}^{2}] & =\frac{1}{d^{2}}\sum_{i=1}^{d}\sum_{j=1}^{d}\|2d^{2}\cdot(XX^{T}-ZZ^{T})_{i,j}\cdot(e_{i}e_{j}^{T}+e_{j}e_{i}^{T})X\|_{F}^{2}\\
 & =4d^{2}\cdot\sum_{i=1}^{d}\sum_{j=1}^{d}(XX^{T}-ZZ^{T})_{i,j}^{2}\cdot\|(e_{i}e_{j}^{T}+e_{j}e_{i}^{T})X\|_{F}^{2}\\
 & =4d^{2}\cdot f(X)\cdot\|(e_{i}e_{j}^{T}+e_{j}e_{i}^{T})X\|_{F}^{2}\le16d^{2}\cdot f(X)\cdot\max_{i}\|e_{i}^{T}X\|_{F}^{2}.
\end{align*}
In a coarse analysis, we can simply bound $\max_{i}\|e_{i}^{T}X\|_{F}^{2}\le\lambda_{\max}(X^{T}X)=O(1)$
to yield
\[
f(X_{+})\le\left(1-\alpha\cdot8\kappa^{-1}+16d^{2}\cdot\alpha^{2}\right)f(X)\le\left(1-\alpha\cdot4\kappa^{-1}\right)f(X)
\]
for step-sizes of $\alpha\le4/(\kappa d^{2})$. Hence, we conclude
that it takes $T=O(\kappa^{2}d^{2}\log(1/\epsilon))$ iterations to
converge to $\epsilon$-accuracy, with an epoch of $d^{2}$ iterations
of SGD essentially recreating a single iteration of full-batch gradient
descent. Unfortunately, the matrix is already fully observed after $d^2$ iterations, and so this result is essentially vacuous. 

Here, \citet{jin2016provable} pointed out that the term $h_{\max}=\max_{i}\|e_{i}^{T}X\|_{F}^{2}$
measures the \emph{coherence} of the $d\times r$ iterate $X$, and
can be as small as $O(1/d)$ for small values of rank $r=O(1)$. Conditioned
on the current iterate $X$, they observed that the function $h_{i}(X)=\|e_{i}^{T}X\|_{F}^{2}$
converges towards a finite value in expectation
\begin{align*}
\E[h_{i}(X_{+})] & \le\left(1-\alpha\cdot8\kappa^{-1}\right)h_{i}(X)+\alpha\cdot8\sqrt{h_{i}(X)h_{i}(Z)}+\alpha^{2}/2\cdot\E[\|e_{i}^{T}SG(X)\|_{F}^{2}]\\
 & \le\left(1-\alpha\cdot8\kappa^{-1}\right)h_{i}(X)+\alpha\cdot8\sqrt{h_{i}(X)h_{i}(Z)}+\alpha^{2}\cdot O(d^{2}h_{\max}^{2}).
\end{align*}
Let us define $\gamma$ as the \emph{ratio} between the coherences
of the ground truth $Z$ and the iterate $X$:
\[
\gamma=\frac{\max_{i}\|e_{i}^{T}X\|_{F}^{2}}{\max_{j}\|e_{j}^{T}Z\|_{F}^{2}}=\frac{\max_{i}h_{i}(Z)}{\max_{j}h_{j}(Z)}\quad\iff\quad\max_{j}\|e_{j}^{T}Z\|_{F}^{2}\le\gamma^{-1}\cdot h_{\max}.
\]
Crucially, we require $\gamma=\kappa^{2}$ in order for $h_{i}(X)$ to converge towards $\frac{1}{2}h_{\max}$
in expectation:
\begin{align*}
\E[h_{i}(X_{+})-\frac{1}{2}h_{\max}] & \le\left(1-\alpha\cdot8\kappa^{-1}\right)h_{i}(X)+\alpha\cdot4\gamma^{-1/2}h_{\max}-\frac{1}{2}h_{\max}\\
 & \le\left(1-\alpha\cdot8\kappa^{-1}\right)\left[h_{i}(X)-\left(\frac{1-\alpha\cdot8\gamma^{-1/2}}{1-\alpha\cdot8\kappa^{-1}}\right)\frac{1}{2}h_{\max}\right].
\end{align*}
As a consequence, we conclude that, while SGD is able to keep its
iterates $X$ incoherent, their actual coherence $h_{\max}=\max_{i}\|e_{i}^{T}X\|_{F}^{2}$
is up to a factor of $\kappa^{2}$ worse than the coherence $\max_{j}\|e_{j}^{T}Z\|_{F}^{2}$
of the ground truth $Z$.

Using a standard supermartingale argument, \citet{jin2016provable}
extended the analysis above to prove that if the ground truth $Z$
has coherence $\max_{j}\|e_{j}^{T}Z\|_{F}^{2}=O(1/d)$, then the SGD
generates iterates $X$ that have coherence $\max_{i}\|e_{i}^{T}X\|_{F}^{2}\le h_{\max}=O((\kappa^{2}/d)\log d)$,
which is two factors worse in $\kappa$ as expected. Combined, this
proves that SGD converges to $\epsilon$ accuracy in $T=O(\kappa^{4}dr\log(d/\epsilon))$
iterations with the step-size of $\alpha=O(\kappa^{-1}d^{-1}h_{\max}^{-1})$
and iterate coherence $h_{\max}=O((\kappa^{2}/d)\log d)$, which is
another two factors of $\kappa$ worse than full-batch gradient descent. 

The following lemma, restated from the main text, shows that an analogous
analysis for ScaledSGD proves that the algorithm maintains iterates
$X$ whose coherences have no dependence on $\kappa$. Here, we
need to define a different incoherence function $g_{i}(X)=\|e_{i}X(X^{T}X)^{-1/2}\|^{2}\equiv(\|e_{i}X\|_{X}^{*})^{2}$
in order to ``stochastify'' our previous analysis for full-batch
ScaledGD. Surprisingly, the factors of $(X^{T}X)^{-1}$ in both the
new definition of $g_{i}(X)$ and the search direction $SG(X)(X^{T}X)^{-1}$
do not hurt incoherence, but in fact improves it. 
\begin{lem}[Coherence descent, \lemref{gdecr} restated]
\label{lem:gdecr2}Let $X,Z\in\R^{n\times r}$ satisfy $\|XX^{T}-ZZ^{T}\|_{F}\le\rho\lambda_{\min}(Z^{T}Z)$
where $\rho<1/2$. Then, the functions $f(X)=\|XX^{T}-ZZ^{T}\|_{F}^{2}$
and $g_{k}(X)=e_{k}^{T}X(X^{T}X)^{-1}X^Te_{k}$ satisfy
\begin{gather*}
|g_{k}(X+V)-g_{k}(X)-\inner V{\nabla g_{k}(X)}|\le\frac{5(\|V\|_{X}^{*})^{2}}{1-2\|V\|_{X}^{*}},\\
\inner{\nabla g_{k}(X)}{\nabla f(X)(X^{T}X)^{-1}}\ge\left[\frac{1-2\rho}{1-\rho}g_{k}(X)-\frac{1}{1-\rho}\sqrt{g_{k}(X)g_{k}(Z)}\right].
\end{gather*}
\end{lem}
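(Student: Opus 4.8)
The plan is to record $\nabla g_k$ first and then attack the two displayed inequalities separately, the second being the substantive one. Writing $\Pi\eqdef X(X^TX)^{-1}X^T$ for the orthogonal projector onto $\mathrm{col}(X)$ and differentiating $g_k(X)=e_k^T\Pi e_k$ along $V$, the derivative of $(X^TX)^{-1}$ combines with the two outer factors to collapse everything into the compact form
\[
\nabla g_k(X)=2\,(I-\Pi)e_k\,e_k^TX(X^TX)^{-1},\qquad\text{so}\qquad \inner{V}{\nabla g_k(X)}=2\,e_k^T(I-\Pi)V(X^TX)^{-1}X^Te_k .
\]
Two structural facts will be used throughout: $(I-\Pi)X=0$, and the gauge invariance $g_k(XT)=g_k(X)$ for every invertible $T$, since $g_k$ depends on $X$ only through $\mathrm{col}(X)$.

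For the first (descent-type) inequality I would use gauge invariance to move to an orthonormal base point. Taking $T=(X^TX)^{-1/2}$ gives the \emph{exact} identity $g_k(X+V)=g_k(U+\tilde V)$ with $U=X(X^TX)^{-1/2}$ (so $U^TU=I$) and $\tilde V=V(X^TX)^{-1/2}$; being exact it also matches the linear terms, $\inner{V}{\nabla g_k(X)}=\inner{\tilde V}{\nabla g_k(U)}$, while $\|\tilde V\|_F=\|V\|_X^*$. It therefore suffices to prove the bound at $U$, where the local norm is the ordinary Frobenius norm. There I would expand $g_k(U+\tilde V)=e_k^T(U+\tilde V)(I+\Gamma)^{-1}(U+\tilde V)^Te_k$, with $\Gamma=U^T\tilde V+\tilde V^TU+\tilde V^T\tilde V$, through the Neumann series $(I+\Gamma)^{-1}=\sum_{n\ge0}(-\Gamma)^n$. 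Isolating the degree-$0$ and degree-$1$ contributions (which reproduce $g_k(U)$ and the linear form) leaves a remainder of terms of degree $\ge2$ in $\tilde V$; each extra power of $\Gamma$ costs a factor bounded by $2\|\tilde V\|_F$, so the geometric tail sums to the factor $(1-2\|V\|_X^*)^{-1}$, and collecting the degree-two pieces gives the constant $5$. The hypothesis $\|V\|_X^*<1/2$ is exactly what makes the series converge.

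The second inequality is the crux. Using $\nabla f(X)=4EX$ with $E\eqdef XX^T-ZZ^T$, together with $(I-\Pi)X=0$ and the pseudoinverse identity $X(X^TX)^{-2}X^T=(XX^T)^{\dagger}$, the inner product collapses to
\[
\inner{\nabla g_k(X)}{\nabla f(X)(X^TX)^{-1}}=-8\,q^T ZZ^T(XX^T)^{\dagger}u,\qquad q\eqdef(I-\Pi)e_k,\ \ u\eqdef\Pi e_k,
\]
where $\|u\|^2=g_k(X)$. The key observation is that $ZZ^T(XX^T)^{\dagger}u\in\mathrm{col}(Z)$, so with $\Pi_Z$ the projector onto $\mathrm{col}(Z)$ I may replace $q$ by $\Pi_Z q=\Pi_Z e_k-\Pi_Z u$ and split the quantity into a main term $+8\,u^TZZ^T(XX^T)^{\dagger}u$ and a cross term $-8\,(\Pi_Z e_k)^TZZ^T(XX^T)^{\dagger}u$. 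Writing $ZZ^T(XX^T)^{\dagger}u=u-E(XX^T)^{\dagger}u$, the main term equals $8\,[\,g_k(X)-u^TE(XX^T)^{\dagger}u\,]$, while the cross term is handled by Cauchy--Schwarz using $\|\Pi_Z e_k\|=\sqrt{g_k(Z)}$.

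The entire $\kappa$-independence then rests on a single Weyl estimate: since $\|E\|_F\le\rho\,\lambda_{\min}(Z^TZ)$ and $\lambda_{\min}(X^TX)=\lambda_r(XX^T)\ge(1-\rho)\lambda_{\min}(Z^TZ)$, the ill-conditioned factor is tamed by $\|E(XX^T)^{\dagger}\|_2\le\|E\|_2/\lambda_{\min}(X^TX)\le\rho/(1-\rho)$. This bounds $u^TE(XX^T)^{\dagger}u\le\tfrac{\rho}{1-\rho}g_k(X)$, giving a main term at least $\tfrac{1-2\rho}{1-\rho}g_k(X)$, and $\|ZZ^T(XX^T)^{\dagger}u\|\le\|u\|+\|E(XX^T)^{\dagger}u\|\le\tfrac{1}{1-\rho}\sqrt{g_k(X)}$, giving a cross term at most $\tfrac{1}{1-\rho}\sqrt{g_k(X)g_k(Z)}$; combining yields the stated lower bound, up to the leading constant coming from $\nabla f=4EX$. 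The main obstacle is precisely this step: a naive Cauchy--Schwarz on the full inner product destroys the positive ``restoring'' term and reintroduces $\kappa$ through $(XX^T)^{\dagger}$, whose norm scales like $\kappa$. The resolution is to exploit that $ZZ^T$ and $(XX^T)^{\dagger}$ are ill-conditioned in \emph{matching} directions so their product is benign, which I capture by keeping $ZZ^T(XX^T)^{\dagger}u=u-E(XX^T)^{\dagger}u$ intact and only ever bounding the well-conditioned perturbation $E(XX^T)^{\dagger}$.
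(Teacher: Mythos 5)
Your treatment of the second (and substantive) inequality is, after unwinding notation, the same argument as the paper's: your split of $-8\,q^{T}ZZ^{T}(XX^{T})^{\dagger}u$ into the main term $8\,u^{T}ZZ^{T}(XX^{T})^{\dagger}u$ and the cross term $-8\,e_{k}^{T}ZZ^{T}(XX^{T})^{\dagger}u$ is exactly the paper's two-term decomposition (reached there by observing that the $XX^{T}$ part of $\nabla f$ is orthogonal to $(I-\Pi)e_{k}e_{k}^{T}X(X^{T}X)^{-1}$), and your bounds on the two pieces --- substitute $ZZ^{T}=XX^{T}-E$, keep $E(XX^{T})^{\dagger}$ intact, and tame it by $\rho/(1-\rho)$ via Weyl --- are precisely the paper's identities (\ref{eq:incoh1})--(\ref{eq:incoh3}); the leading factor of $8$ you note is also present in the paper's computation and is simply dropped in the statement. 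Where you genuinely diverge is the first inequality: the paper uses the intermediate-value form of Taylor's theorem, bounds $\tfrac12\inner{\nabla^{2}g_{k}(\tilde{X})[V]}{V}$ by a direct count, and converts the local norm at $\tilde{X}=X+tV$ back to $\|\cdot\|_{X}^{*}$ with a separate change-of-norm lemma (which is where the denominator $1-2\|V\|_{X}^{*}$ originates), whereas you use the gauge invariance $g_{k}(XT)=g_{k}(X)$ to reduce to an orthonormal base point and sum a Neumann series. The gauge-invariance reduction is clean and arguably more transparent, but your tail bookkeeping is slightly off as sketched: $\|\Gamma\|\le 2\|\tilde{V}\|+\|\tilde{V}\|^{2}$ rather than $2\|\tilde{V}\|$, so the geometric tail yields $(1-2t-t^{2})^{-1}$ rather than $(1-2t)^{-1}$ and convergence is only guaranteed for $t<\sqrt{2}-1$ rather than all $t<1/2$; moreover a term-by-term count of your degree-two pieces gives more than five terms before simplification, so landing on the stated constant $5$ requires some care. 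None of this affects how the lemma is used downstream, since there $\|V\|_{X}^{*}=O(\alpha d^{2}g_{\max}^{1/2}\rho)$ is tiny, but to recover the inequality exactly as stated the paper's Taylor-plus-change-of-norm route is the tighter bookkeeping.
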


Conditioning on $X$, we have for the search direction
$V=SG(X)(X^{T}X)^{-1}$ and $X_{+}=X+V$
\begin{align}
\E g_{k}(X_{+}) & \le g_{k}(X)-\alpha\inner{\nabla g_{k}(X)}{\E[V]}+\alpha^{2}\cdot\E\left[\frac{(\|V\|_{X}^{*})^{2}}{1-2\|V\|_{X}^{*}}\right]\nonumber \\
 & \le\left(1-\zeta\alpha\right)g_{k}(X)+\alpha\cdot\frac{\zeta}{2}g_{\max}\qquad\text{ for }\alpha=O(\rho^{-1}d^{-2})\label{eq:gdecr-1}
\end{align}
where $\zeta=\frac{1-2\rho}{1-\rho}$. It then follows that $g_{k}(X_{+})$
converges geometrically towards $\frac{1}{2}g_{\max}$ in expectation,
with a convergence rate $(1-\zeta\alpha)$ that is independent of
the condition number $\kappa$:
\[
\E\left[g_{k}(X_{+})-\frac{1}{2}g_{\max}\right]\le\left[\left(1-\zeta\alpha\right)g_{k}(X)+\alpha\cdot\frac{\zeta}{2}g_{\max}\right]-\frac{1}{2}g_{\max}\le\left(1-\zeta\alpha\right)\left[g_{k}(X)-\frac{1}{2}g_{\max}\right].
\]
Before we prove \lemref{gdecr2}, we first need to prove a simple
claim.
\begin{lem}[Change of norm]
The local norm $\|V\|_{X}^{*}=\|V(X^{T}X)^{-1/2}\|$ satisfies
\[
\frac{(\|V\|_{X}^{*})^{2}}{1+2\|Y-X\|_{X}^{*}+(\|Y-X\|_{X}^{*})^{2}}\le(\|V\|_{Y}^{*})^{2}\le\frac{(\|V\|_{X}^{*})^{2}}{1-2\|Y-X\|_{X}^{*}}.
\]
\end{lem}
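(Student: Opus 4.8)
The plan is to turn the comparison of the two dual local norms into a spectral comparison of the Gram matrices $A\eqdef X^{T}X$ and $B\eqdef Y^{T}Y$, both positive definite because $X$ and $Y$ have full column rank in the neighborhood of the rank-$r$ ground truth. Since $(\|V\|_{X}^{*})^{2}=\tr(VA^{-1}V^{T})$ and $(\|V\|_{Y}^{*})^{2}=\tr(VB^{-1}V^{T})$, it suffices to sandwich $B^{-1}$ between scalar multiples of $A^{-1}$; equivalently, to control the eigenvalues of $C\eqdef A^{-1/2}BA^{-1/2}$.

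First I would apply a whitening change of variables. Setting $\tilde{X}\eqdef XA^{-1/2}$ and $\tilde{W}\eqdef(Y-X)A^{-1/2}$, the matrix $\tilde{X}$ has \emph{orthonormal} columns, since $\tilde{X}^{T}\tilde{X}=A^{-1/2}X^{T}XA^{-1/2}=I$, while by construction $\|\tilde{W}\|_{F}=\|(Y-X)A^{-1/2}\|_{F}=\|Y-X\|_{X}^{*}$. A direct expansion then gives $C=A^{-1/2}Y^{T}YA^{-1/2}=(\tilde{X}+\tilde{W})^{T}(\tilde{X}+\tilde{W})$, so the eigenvalues of $C$ are exactly the squared singular values of $\tilde{X}+\tilde{W}$. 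Because every singular value of $\tilde{X}$ equals $1$, Weyl's perturbation inequality for singular values yields $\sigma_{i}(\tilde{X}+\tilde{W})\in[1-s,\,1+s]$ with $s\eqdef\|\tilde{W}\|_{2}\le\|\tilde{W}\|_{F}$ (the spectral norm), and hence every eigenvalue of $C$ lies in $[(1-s)^{2},(1+s)^{2}]$.

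From here I would invert and translate back. The eigenvalue bounds give $(1+s)^{-2}I\preceq C^{-1}\preceq(1-s)^{-2}I$. Writing $U\eqdef VA^{-1/2}$ so that $\|U\|_{F}^{2}=(\|V\|_{X}^{*})^{2}$ and $(\|V\|_{Y}^{*})^{2}=\tr(VB^{-1}V^{T})=\tr(UC^{-1}U^{T})$, these operator bounds sandwich the trace between $(1+s)^{-2}$ and $(1-s)^{-2}$ times $(\|V\|_{X}^{*})^{2}$. It then remains only to loosen $(1+s)^{2}\le1+2\|\tilde{W}\|_{F}+\|\tilde{W}\|_{F}^{2}$ and $(1-s)^{2}\ge1-2\|\tilde{W}\|_{F}$ via $s\le\|\tilde{W}\|_{F}$, which reproduces the two stated denominators after substituting $\|\tilde{W}\|_{F}=\|Y-X\|_{X}^{*}$.

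The one place demanding care is the passage from the operator norm $s$, which is what governs the singular-value perturbation, to the Frobenius quantity $\|Y-X\|_{X}^{*}$ appearing in the statement; using $s\le\|\tilde{W}\|_{F}$ is what forces the slightly loose constants, and in particular the lower-bound denominator discards the favorable $+s^{2}$ term. This reduction is harmless but only informative when $s<1$ (equivalently $\|Y-X\|_{X}^{*}<1/2$ for the upper estimate), since otherwise $1-2\|Y-X\|_{X}^{*}\le0$ and $\tilde{X}+\tilde{W}$ may be singular; I would simply note that the upper bound is vacuous in that regime. Verifying that $A^{1/2}$ exists (i.e. $X^{T}X\succ0$) and recognizing the whitened reduction to orthonormal columns are the only genuinely substantive steps, with everything else being routine linear algebra.
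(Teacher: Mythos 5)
Your proof is correct and follows essentially the same route as the paper's: both arguments whiten by $(X^{T}X)^{-1/2}$ and reduce the claim to sandwiching the spectrum of $(X^{T}X)^{-1/2}Y^{T}Y(X^{T}X)^{-1/2}$ between $1-2\|Y-X\|_{X}^{*}$ and $1+2\|Y-X\|_{X}^{*}+(\|Y-X\|_{X}^{*})^{2}$. The only cosmetic difference is that you obtain the spectral bounds via Weyl's singular-value perturbation applied to $\tilde{X}+\tilde{W}$, whereas the paper expands the whitened Gram matrix as $I$ plus cross terms plus a PSD term and bounds those directly; the constants and the (correctly noted) vacuity of the upper bound when $\|Y-X\|_{X}^{*}\ge1/2$ come out the same either way.
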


\begin{proof}
The upper-bound follows because 
\[
\tr(VP_{Y}V^{T})=\tr(VP_{X}^{1/2}[P_{X}^{-1/2}P_{Y}P_{X}^{-1/2}]P_{X}^{1/2}V^{T})\le\tr(VP_{X}V^{T})/\lambda_{\min}[P_{X}^{1/2}P_{Y}^{-1}P_{X}^{1/2}]
\]
where $P_{Y}=(Y^{T}Y)^{-1}$ and $P_{X}=(X^{T}X)^{-1}$ and therefore
\[
P_{X}^{1/2}P_{Y}^{-1}P_{X}^{1/2}\succeq I+P_{X}^{1/2}[X^{T}(Y-X)+(Y-X)^{T}X]P_{X}^{1/2}
\]
and $\sigma_{\max}[P_{X}^{1/2}[X^{T}(Y-X)P_{X}^{1/2}]\le\|Y-X\|_{X}$
because $XP_{X}^{1/2}$ is orthonormal. The lower-bound follows similarly.
\end{proof}
We are ready to prove \lemref{gdecr2}.
\begin{proof}[Proof of \lemref{gdecr2}]
It follows from the intermediate value version of Taylor's theorem
that there exists some $\tilde{X}=X+tV$ with $t\in[0,1]$ that
\[
g_{i}(X+V)-g_{i}(X)-\inner{\nabla g_{i}(X)}V=\frac{1}{2}\inner{\nabla^{2}g_{i}(\tilde{X})[V]}V.
\]
Let $P=(X^{T}X)^{-1}$ and $U=e_{i}e_{i}^{T}(I-XPX^{T})V$ and $G=VPX^{T}e_{i}e_{i}^{T}$.
By direct computation, we have 
\begin{align*}
\frac{1}{2}\inner{\nabla g_{i}(X)}V & =\inner{(I-XPX^{T})e_{i}e_{i}^{T}XP}V=\inner U{XP}=\inner{I-XPX^{T}}G,\\
\frac{1}{2}\inner{\nabla^{2}g_{i}(X)[V]}V & =\inner{UP-XP(U^{T}X+X^{T}U)P}V-\inner{(I-XPX^{T})(G+G^{T})XP}V,
\end{align*}
by differentiating $XP$ and $XPX^{T}$ respectively. A coarse count
yields $\frac{1}{2}\inner{\nabla^{2}g_{i}(X)[V]}V\le5(\|V\|_{X}^{*})^{2}$
and therefore
\[
|g_{i}(X+V)-g_{i}(X)-\inner{\nabla g_{i}(X)}V|\le5\|V\|_{X+tV}^{2}\le\frac{5(\|V\|_{X}^{*})^{2}}{1-2t\|V\|_{X}^{*}}\le\frac{5(\|V\|_{X}^{*})^{2}}{1-2\|V\|_{X}^{*}},
\]
which is the first claim. Now,  observe that the two functions have gradient
\[
\nabla g_{i}(X)=2[I-X(X^{T}X)^{-1}X^{T}]e_{i}e_{i}^{T}X(X^{T}X)^{-1},\qquad\nabla f(X)=4(XX^{T}-ZZ^{T})X.
\]
Directly substituting yields
\begin{align*}
\frac{1}{8}\inner{\nabla g_{i}(X)}{\nabla f(X)(X^{T}X)^{-1}} & =\inner{[I-X(X^{T}X)^{-1}X^{T}]e_{i}e_{i}^{T}X(X^{T}X)^{-1}}{(XX^{T}-ZZ^{T})X(X^{T}X)^{-1}}\\
 & =e_{i}^{T}X(X^{T}X)^{-2}X^{T}ZZ^{T}X(X^{T}X)^{-1}X^{T}e_{i}-e_{i}^{T}ZZ^{T}X(X^{T}X)^{-2}X^{T}e_{i}
\end{align*}
where the second line follows from the fact that
\[
\inner{[I-X(X^{T}X)^{-1}X^{T}]e_{i}e_{i}^{T}X(X^{T}X)^{-1}}{XX^{T}X(X^{T}X)^{-1}}=0.
\]
The second claim follows from the following three identities
\begin{gather}
\lambda_{\min}(X^{T}X)\ge(1-\rho)\lambda_{\min}(Z^{T}Z)\label{eq:incoh1}\\
e_{i}^{T}ZZ^{T}X(X^{T}X)^{-2}X^{T}e_{i}\le\frac{1}{1-\rho}\|e_{i}^{T}X\|_{X}^{*}\cdot\|e_{i}^{T}Z\|_{Z}^{*}\label{eq:incoh2}\\
e_{i}^{T}X(X^{T}X)^{-2}X^{T}ZZ^{T}X(X^{T}X)^{-1}X^{T}e_{i}\ge\frac{1-2\rho}{1-\rho}\cdot(\|e_{i}^{T}X\|_{X}^{*})^{2}\label{eq:incoh3}
\end{gather}
We have (\ref{eq:incoh1}) via Weyl's inequality:
\[
\lambda_{\min}(X^{T}X)=\lambda_{r}(XX^{T})=\lambda_{r}(ZZ^{T}+XX^{T}-ZZ^{T})\ge\lambda_{r}(ZZ^{T})-\|XX^{T}-ZZ^{T}\|_{F}.
\]
We have (\ref{eq:incoh2}) by rewriting 
\begin{align*}
    e_{i}^{T}ZZ^{T}X(X^{T}X)^{-2}X^{T}e_{i}&=(e_{i}^{T}P)(P^{T}ZZ^{T}X(X^{T}X)^{-2}X^{T}Q)(Q^{T}e_{i})\\
    &\le\|e_{i}^{T}P\|\|ZZ^{T}X(X^{T}X)^{-2}X^{T}\|\|e_{i}^{T}Q\|
\end{align*}
and rewriting $ZZ^{T}=XX^{T}-E$ where $E=XX^{T}-ZZ^{T}$ and evoking
(\ref{eq:incoh1}) as in 
\[
\|ZZ^{T}X(X^{T}X)^{-2}X^{T}\|\le\underbrace{\|XX^{T}X(X^{T}X)^{-2}X^{T}\|}_{=1}+\underbrace{\|E\|\cdot\|X(X^{T}X)^{-2}X^{T}\|}_{\le\rho/(1-\rho)}
\]
and noting that $1+\frac{\rho}{1-\rho}=\frac{1}{1-\rho}$. We have
(\ref{eq:incoh3}) again by substituting $ZZ^{T}=XX^{T}-E$ 
\begin{align*}
e_{i}^{T}X(X^{T}X)^{-2}X^{T}ZZ^{T}X(X^{T}X)^{-1}X^{T}e_{i} & =e_{i}^{T}X(X^{T}X)^{-1}X^{T}e_{i}-e_{i}^{T}X(X^{T}X)^{-2}X^{T}EX(X^{T}X)^{-1}X^{T}e_{i}\\
 & \ge e_{i}^{T}X(X^{T}X)^{-1}X^{T}e_{i}\cdot(1-\underbrace{\|E\|\cdot\|X(X^{T}X)^{-2}X^{T}\|}_{\le\rho/(1-\rho)})
\end{align*}
and then noting that $1-\frac{\rho}{1-\rho}=\frac{1-2\rho}{1-\rho}$. 
\end{proof}

\subsection{Proof of the main result}

In the previous two subsections, we showed that when conditioned on the current iterate $X_t$,  a single step of ScaledSGD $X_{t+1}=X_{t}-\alpha SG(X_{t})(X_{t}^{T}X_{t})^{-1}$
is expected to geometrically
converge both the loss function $f$ and each of the incoherence functions
$g_{i}$, as in 
\[
\E[f(X_{t+1})]\le(1-\alpha)f(X_{t}),\quad\E[g_{i}(X_{t+1})-\frac{1}{2}g_{\max}]\le\left(1-\frac{1-2\rho}{1-\rho}\alpha\right)\left[g_{i}(X_{t})-\frac{1}{2}g_{\max}\right].
\]
In this section, we will extend this geometric convergence to $T$
iterations of ScaledSGD. Our key challenge is to verify that the \emph{variances}
and \emph{maximum deviations} of the sequences $f(X_{0}),f(X_{1}),\dots,f(X_{T})$
and $g_{i}(X_{0}),g_{i}(X_{1}),\dots,g_{i}(X_{T})$ have the right
dependence on the dimension $d$, the radius $\rho$, the condition
number $\kappa$, the maximum coherence $g_{\max}$, and the iteration
count $t$, so that $T$ iterations of ScaledSGD with a step-size
of $\alpha\le c/[(g_{\max}+\rho)d^{2}\log d]$ results in no more
than a multiplicative factor of 2 deviation from expectation. Crucially,
we must check that the cumulated deviation over $T$ iterations does
not grow with the iteration count $T$, and that the convergence rate
is independent of the condition number $\kappa$. We emphasize that
the actual approach of our proof via the Azuma--Bernstein inequality
is textbook; to facilitate a direct comparison with SGD, we organize
this section to closely mirror \citet{jin2016provable}'s proof of
\thmref{jin}. 

\global\long\def\dev{\mathrm{dev}}%
\global\long\def\var{\mathrm{var}}%
\global\long\def\fE{\mathfrak{E}}%
\global\long\def\Pr{\mathbf{Pr}}%
\global\long\def\Var{\mathbf{Var}}%
Let $f_{\max}=\rho^{2}\cdot\lambda_{\min}^{2}(Z^{T}Z)$ and $g_{\max}=\frac{16}{(1-2\rho)^{2}}\max_{i}g_{i}(Z)$.
Our goal is to show that the following event happens with probability
$1-T/d^{10}$: 
\begin{equation}
\fE_{t}\equiv\left\{ f(X)\le\left(1-\frac{\alpha}{2}\right)^{t}\cdot f_{\max},\quad\max_{i}g_{i}(X_{\tau})\le g_{\max}\quad\text{for all }\tau\le t\right\} ,\label{eq:event-1-1}
\end{equation}
Equivalently, conditioned on event $\fE_{t}$, we want to prove that
the probability of failure at time $t+1$ is $\delta\equiv1/d^{10}$.
We split this failure event into a probability of $\frac{\delta}{2}$
that the function value clause fails to hold, as in $f(X_{t+1})>\left(1-\alpha/2\right)^{t}\cdot f_{\max}$,
and a probability of $\frac{\delta}{2d}$ that any one of the $d$
incoherence caluses fails to hold, as in $g_{i}(X_{t+1})>g_{\max}$.
Then, cumulated over $T$ steps, the total probability of failure
would be $T\cdot\delta=T/d^{10}$ as desired. 

We begin by setting up a supermartingale on the loss function $f$.
Our goal is to show that the variance and the maximum deviation of
this supermartingale have the right dependence on $\alpha,d,\rho,\kappa,g_{\max}$,
so that a step-size of $\alpha\le c/(g_{\max}d^{2}\log d)$ with a
sufficiently small $c>0$ will keep the cumulative deviations over
$T$ iterations within a factor of 2. Note that, by our careful choice
of the coherence function $g_{i}$, the following statement for ScaledSGD
match the equivalent statements for SGD with a perfect condition number
$\kappa=1$; see \citet[Section~B.2]{jin2016provable}.
\begin{lem}[Function value supermartingale]
\label{lem:fmartin}Let $f(X)=\|XX^{T}-ZZ^{T}\|_{F}^{2}.$ Define
$f_{\max}=\rho^{2}\cdot\lambda_{\min}^{2}(Z^{T}Z)$ and $g_{\max}=\frac{16}{(1-2\rho)^{2}}\max_{i}g_{i}(Z)$.
For a sufficiently small $c>0$, the following with learning rate
$\alpha\le c/(g_{\max}d^{2}\log d)$ is a supermartingale 
\[
F_{t}=\left(1-\alpha\right)^{-t}f(X_{t})\cdot1_{\fE_{t}},
\]
meaning that $\E[F_{t+1}|X_{t},\dots,X_{0}]\le F_{t}$ holds for all
$t\in\{0,1,2,\dots\}$. Moreover, there exist sufficiently large constants
$C_{\dev},C_{\var}>0$ such that the following holds with probability
one:
\begin{gather*}
\E[F_{t}|X_{t-1},\dots,X_{0}]-F_{t}\le C_{\dev}\cdot\alpha\cdot d^{2}\cdot g_{\max}\cdot\left(1-\alpha\right)^{-t}\left(1-\frac{\alpha}{2}\right)^{t}f_{\max},\\
\Var[F_{t}|X_{t-1},\dots,X_{0}]\le C_{\var}\cdot\alpha^{2}\cdot d^{2}\cdot g_{\max}\cdot\left(1-\alpha\right)^{-2t}\left(1-\frac{\alpha}{2}\right)^{2t}f_{\max}^{2}.
\end{gather*}
\end{lem}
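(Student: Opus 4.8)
The plan is to treat the two conclusions separately: first the supermartingale inequality, which follows almost formally from the conditional descent estimate already in hand, and then the one-step deviation and variance bounds, which require unpacking the stochastic gradient and invoking \lemref{fdescr2} together with the pointwise coherence bound $g_i(X)\le g_{\max}$.

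\textbf{Supermartingale property.} The events are nested, $\fE_{t+1}\subseteq\fE_t$, so $1_{\fE_{t+1}}\le 1_{\fE_t}$; since $f\ge0$ this gives $F_{t+1}\le(1-\alpha)^{-(t+1)}f(X_{t+1})\,1_{\fE_t}$. The factor $1_{\fE_t}$ is measurable with respect to $X_t,\dots,X_0$, so I would pull it out of the conditional expectation to get $\E[F_{t+1}\mid X_t,\dots,X_0]\le(1-\alpha)^{-(t+1)}1_{\fE_t}\,\E[f(X_{t+1})\mid X_t,\dots,X_0]$. On $\fE_t$ the current iterate obeys $f(X_t)\le f_{\max}=\rho^2\lambda_{\min}^2(Z^TZ)$, hence $\|X_tX_t^T-ZZ^T\|_F\le\rho\lambda_{\min}(Z^TZ)$, and it is incoherent with $g_i(X_t)\le g_{\max}$; these are exactly the hypotheses under which the conditional descent inequality (\ref{fdescent}) gives $\E[f(X_{t+1})\mid X_t,\dots,X_0]\le(1-\alpha)f(X_t)$ for the chosen step-size. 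Substituting cancels one factor of $(1-\alpha)$ and leaves $(1-\alpha)^{-t}f(X_t)1_{\fE_t}=F_t$, which is the claim.

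\textbf{Deviation and variance.} I would condition on $X_{t-1},\dots,X_0$. Off $\fE_{t-1}$ both $F_t$ and its conditional mean vanish, so the bounds hold trivially, and I may assume $\fE_{t-1}$ and write $X=X_{t-1}$, $E=XX^T-ZZ^T$, $P=(X^TX)^{-1}$, $\Pi=XPX^T$, and $V=-\alpha\,SG(X)P$. By \lemref{fdescr2}, $f(X_t)=f(X)+\inner{\nabla f(X)}V+r$ with $|r|\le\tfrac{L_X}{2}\|V\|_X^2$. Two exact identities, obtained by substituting $\nabla f(X)=4EX$ and $SG(X)=2d^2E_{ij}(e_ie_j^T+e_je_i^T)X$, drive everything:
\[
\inner{\nabla f(X)}{SG(X)P}=8d^2E_{ij}\bigl[(\Pi E)_{ij}+(\Pi E)_{ji}\bigr],\qquad (\|SG(X)\|_X^*)^2=4d^4E_{ij}^2\bigl(g_i(X)+g_j(X)\bigr).
\]
Using $g_i(X)=\|\Pi e_i\|^2\le g_{\max}$, the factorization $(\Pi E)_{ij}=(\Pi e_i)^TEe_j$ with $\|\Pi e_i\|\le\sqrt{g_{\max}}$ and $\|Ee_j\|\le\|E\|_F=\sqrt{f(X)}$, and $\sum_{ij}E_{ij}^2=f(X)$, I would bound the conditional second moment of the dominant linear part $\inner{\nabla f(X)}V=-\alpha\inner{\nabla f(X)}{SG(X)P}$ by $\alpha^2\cdot\tfrac{1}{d^2}\sum_{ij}64d^4E_{ij}^2\cdot 4g_{\max}f(X)=O(\alpha^2d^2g_{\max}f(X)^2)$, while the mean of $(\|SG(X)\|_X^*)^2$ is $O(d^2g_{\max}f(X))$, rendering the quadratic remainder subdominant under $\alpha\le c/(g_{\max}d^2\log d)$. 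These produce $\Var[f(X_t)\mid\cdot]=O(\alpha^2d^2g_{\max}f(X)^2)$ and a matching one-step range; multiplying by $(1-\alpha)^{-2t}$ (resp. $(1-\alpha)^{-t}$) and inserting $f(X_{t-1})\le(1-\tfrac{\alpha}{2})^{t-1}f_{\max}$ recovers the stated bounds. Crucially, every constant depends only on $\rho$, $d$, and $g_{\max}$ and is \emph{independent of} $\kappa$: this is exactly where the rescaling $(X^TX)^{-1}$ and the coherence function $g_i$ (rather than $h_i$) pay off, since $g_i(X)=\|\Pi e_i\|^2$ is a projection and carries no $\lambda_{\min}(X^TX)^{-1}\approx\kappa$ factor, unlike the unscaled analysis of \ref{sgd}.

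\textbf{Main obstacle.} The delicate point is the heavy tail of $SG(X)$: a single sampled entry rescaled by $d^2$ makes the worst-case one-step increment $\|V\|_X$ large, of order $\sqrt{d}$ times $\sqrt{f(X)}$, so \lemref{fdescr2} cannot be applied uniformly over all $(i,j)$ and a crude range bound is far too weak. The truncation indicator $1_{\fE_t}$ is what rescues this: it caps $F_t\le(1-\alpha)^{-t}(1-\tfrac{\alpha}{2})^tf_{\max}$ deterministically, so the rare large jumps are zeroed out and counted only against the $1/d^{10}$ failure budget, while the retained mass contributes the clean variance above. Making this rigorous---controlling $\Var[f(X_t)1_{\fE_t}\mid\cdot]$ including the truncated branch, and verifying that $r$ and the step-size ceiling keep all second-order terms subdominant---is the technical heart, and is organized to parallel \citet{jin2016provable}'s treatment of the analogous supermartingale for \ref{sgd}.
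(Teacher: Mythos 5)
Your supermartingale step and your variance computation follow the paper's route essentially verbatim: condition on the current iterate and the event, use the local descent inequality for the conditional expectation, and bound the conditional second moment of the linear term by $O(d^{2}g_{\max})f(X)^{2}$ using $\sum_{i,j}E_{ij}^{2}=f(X)$ and $g_{i}(X)\le g_{\max}$. Those parts are correct and match the paper.

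The genuine gap is in your final paragraph, where you misidentify the ``technical heart'' and propose a mechanism that is neither needed nor workable. You assert that the worst-case one-step increment $\|\alpha\,SG(X)\|_{X}^{*}$ is of order $\sqrt{d}\cdot\sqrt{f(X)}$, that \lemref{fdescr2} therefore cannot be applied uniformly over the sampled pair $(i,j)$, and that the indicator $1_{\fE_{t}}$ must be used to zero out rare large jumps and charge them to the $1/d^{10}$ failure budget. In fact, on $\fE_{t}$ the iterate is incoherent and $Z$ is always incoherent, and writing $Q_{X}=X(X^{T}X)^{-1/2}$, the identity $(I-Q_{X}Q_{X}^{T})(XX^{T}-ZZ^{T})=-(I-Q_{X}Q_{X}^{T})ZZ^{T}$ yields the \emph{uniform entrywise} bound $|E_{ij}|\le2\sqrt{g_{\max}}\,\|E\|_{F}$ for every pair $(i,j)$. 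Consequently $\|\alpha\,SG(X)\|_{X}^{*}\le\alpha\cdot2d^{2}|E_{ij}|\cdot2\sqrt{g_{\max}}=O(c/\log d)\cdot\sqrt{f(X)}$ holds deterministically on the event, so \lemref{fdecr} applies uniformly over all sampled indices with $C=O(1)$, hence $L_{X}=O(1)$, and the crude range bound on the linear term, $|\inner{\nabla f(X)}{SG(X)(X^{T}X)^{-1}}|\le\|\nabla f(X)\|_{X}^{*}\|SG(X)\|_{X}^{*}=O(d^{2}g_{\max})f(X)$, is exactly what produces the stated deviation constant $C_{\mathrm{dev}}\cdot\alpha d^{2}g_{\max}$. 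Without this entrywise estimate you cannot justify the $L_{X}=O(1)$ you invoke in your own second paragraph (the hypothesis $\|V\|_{X}\le C\sqrt{f(X)}$ of the descent lemma is precisely the uniform bound you declare unavailable), nor obtain the one-step range required by Azuma--Bernstein; and truncation cannot substitute for it, since a hypothetical pair with $|E_{ij}|\approx\|E\|_{F}$ would be sampled with probability $1/d^{2}$ per step, far too often to absorb into a $1/d^{10}$ failure probability. The pointwise incoherence bound on $E_{ij}$, not heavy-tail truncation, is the missing ingredient.
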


\begin{proof}
The proof is technical but straightforward; it is deferred to \secref{martingales}.
\end{proof}
\begin{lem}[Function value concentration]
\label{lem:fconc}Let the initial point satisfy $f(X_{0})\le\frac{1}{2}f_{\max}$.
Then, there exists a sufficiently small constant $c>0$ such that
for all learning rates $\alpha<c/(g_{\max}d^{2}\log d)$, we have
\[
\Pr\left(f_{i}(X_{t+1})1_{\fE_{t}}>\left(1-\frac{\alpha}{2}\right)^{t}f_{\max}\right)=\Pr\left(\fE_{t}\cap\left\{ f_{i}(X_{t+1})>\left(1-\frac{\alpha}{2}\right)^{t}f_{\max}\right\} \right)\le\frac{1}{2d^{10}}.
\]
\end{lem}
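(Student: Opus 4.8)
The plan is to deduce the bound from a Bernstein-type martingale tail inequality (Freedman's inequality) applied to the supermartingale $F_{\tau}=(1-\alpha)^{-\tau}f(X_{\tau})\cdot 1_{\fE_{\tau}}$ furnished by \lemref{fmartin}. First I would recast the target event in terms of this process. Writing $\bar F_{t+1}=(1-\alpha)^{-(t+1)}f(X_{t+1})1_{\fE_{t}}$, the conditional descent $\E[f(X_{t+1})\mid X_{t}]\le(1-\alpha)f(X_{t})$ (valid throughout $\fE_{t}$, via \lemref{fdecr}) shows that $F_{0},F_{1},\dots,F_{t},\bar F_{t+1}$ is again a supermartingale, so $\E[\bar F_{t+1}]\le F_{0}\le\tfrac12 f_{\max}$ using the hypothesis $f(X_{0})\le\tfrac12 f_{\max}$ together with the initial coherence bound (which guarantees $1_{\fE_{0}}=1$). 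Dividing the target through by $(1-\alpha)^{-(t+1)}$, the event $\{f(X_{t+1})1_{\fE_{t}}>(1-\alpha/2)^{t}f_{\max}\}$ is exactly $\{\bar F_{t+1}>\Theta\}$ with $\Theta=(1-\alpha)^{-(t+1)}(1-\alpha/2)^{t}f_{\max}$.

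Introduce the abbreviation $r_{\tau}=\big((1-\alpha/2)/(1-\alpha)\big)^{\tau}\ge 1$, so that on $\fE_{\tau}$ one has $F_{\tau}\le r_{\tau}f_{\max}$ and $\Theta=r_{t+1}f_{\max}/(1-\alpha/2)\ge r_{t+1}f_{\max}$. Decomposing $\bar F_{t+1}\le F_{0}+\sum_{\tau=1}^{t+1}D_{\tau}$ into martingale differences $D_{\tau}$, the event $\{\bar F_{t+1}>\Theta\}$ forces $\sum_{\tau}D_{\tau}\ge\Theta-F_{0}\ge\tfrac12 r_{t+1}f_{\max}=:\lambda$. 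The two inputs to Freedman's inequality are then read off \lemref{fmartin}: the bounded-increment condition $|D_{\tau}|\le M:=C_{\dev}\,\alpha\,d^{2}\,g_{\max}\,r_{t+1}f_{\max}$ and the summed predictable variance $\sigma^{2}=\sum_{\tau=1}^{t+1}\Var[F_{\tau}\mid\mathcal F_{\tau-1}]\le C_{\var}\,\alpha^{2}d^{2}g_{\max}f_{\max}^{2}\sum_{\tau=1}^{t+1}r_{\tau}^{2}$. The crucial observation is that $r_{\tau}^{2}$ is geometric with ratio $q=r_{1}^{2}\ge 1+\alpha$ (a short computation valid for $\alpha<1/2$), whence $\sum_{\tau\le t+1}r_{\tau}^{2}\le q\,r_{t+1}^{2}/(q-1)=O(r_{t+1}^{2}/\alpha)$ and so $\sigma^{2}=O(\alpha d^{2}g_{\max}f_{\max}^{2}\,r_{t+1}^{2})$, of the same order as $M\lambda$. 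Substituting $\lambda,M,\sigma^{2}$ into $\exp\!\big(-\lambda^{2}/(2(\sigma^{2}+M\lambda/3))\big)$, the common factor $r_{t+1}^{2}f_{\max}^{2}$ cancels and the bound collapses to $\exp\!\big(-\Omega(1/(\alpha d^{2}g_{\max}))\big)$, with no residual dependence on the horizon $t$ or on $\kappa$.

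Finally I would calibrate the constant $c$. Substituting the hypothesized step-size $\alpha<c/(g_{\max}d^{2}\log d)$ gives $1/(\alpha d^{2}g_{\max})>\log d/c$, so the tail is at most $\exp(-\Omega(\log d/c))=d^{-\Omega(1/c)}$; choosing $c$ small enough that the exponent exceeds $11$ makes this at most $d^{-11}\le\tfrac{1}{2d^{10}}$ for all $d\ge 2$, which is the claim. This consumes exactly the $\tfrac{\delta}{2}$ share of the per-step failure budget allotted to the function-value clause, leaving $\tfrac{1}{2d^{10}}$ for the $d$ coherence clauses, matching the split described before \lemref{fmartin}.

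The main obstacle is the geometric variance summation in the second step: one must verify that summing the per-step variances over all $\tau\le t+1$ yields a bound \emph{dominated by the single largest term} $r_{t+1}^{2}$ divided by the gap $\alpha$, rather than a bound growing linearly in $t$. This is precisely where the deliberate slack between the supermartingale rate $(1-\alpha)$ and the high-probability decay rate $(1-\alpha/2)$ is spent: it forces the normalized process $r_{\tau}$ to grow geometrically, so that Freedman's denominator $\sigma^{2}+M\lambda/3$ carries the same factor $r_{t+1}^{2}f_{\max}^{2}$ as the numerator $\lambda^{2}$ and they cancel. A secondary technical point is to confirm that the one-step deviation and variance estimates of \lemref{fmartin}, stated for $F_{\tau}$ with indicator $1_{\fE_{\tau}}$, transfer to the terminal increment $\bar F_{t+1}$ carrying $1_{\fE_{t}}$; this holds because the two indicators agree on the event in question and the conditional-descent estimate underlying the supermartingale property is valid throughout $\fE_{t}$.
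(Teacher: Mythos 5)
Your proposal is correct and follows essentially the same route as the paper: apply the Azuma--Bernstein/Freedman supermartingale inequality to $F_{\tau}$ using the deviation and variance bounds of \lemref{fmartin}, and observe that the geometric sum of conditional variances is dominated by $O(\alpha)$ times the square of its last term (this is exactly where the paper spends the slack between the rates $(1-\alpha)$ and $(1-\alpha/2)$), so the tail bound is independent of $t$ and becomes $d^{-\Omega(1/c)}\le\frac{1}{2d^{10}}$ for small enough $c$. The only differences are cosmetic bookkeeping (your direct threshold $\lambda=\tfrac12 r_{t+1}f_{\max}$ versus the paper's choice of $s$ followed by verifying $s\le\tfrac12(1-\alpha/2)^{t}f_{\max}$), and your explicit handling of the terminal indicator $1_{\fE_{t}}$ is if anything slightly more careful than the paper's.
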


\begin{proof}
Let $\sigma^{2}=\sum_{\tau=1}^{t}\Var[F_{\tau}|X_{\tau-1},\dots,X_{0}]$
and let $R$ satisfy $\E[F_{\tau}|X_{\tau-1},\dots,X_{0}]\le X_{\tau}+R$
almost surely for all $\tau\in\{1,2,\dots,t\}$. Recall via the standard
Azuma--Bernstein concetration inequality for supermartingales that
$\Pr\left(F_{t}\ge F_{0}+s\right)\le\exp\left(-\frac{s^{2}/2}{\sigma^{2}+Rs/3}\right)$.
Equivalently, there exists a large enough constant $C>0$ in $s=C\cdot\left(1-\alpha\right)^{t}\left[\sqrt{\sigma^{2}\log d}+R\log d\right]$
such that the following is true
\[
\Pr\left(f(X_{t+1})1_{\fE_{t}}\ge\left(1-\alpha\right)^{t}f(X_{0})+s\right)\le\frac{1}{2d^{10}}.
\]
Given that $f(X_{0})\le\frac{1}{2}f_{\max}$ and therefore $\left(1-\alpha\right)^{t}f(X_{0})\le\frac{1}{2}\left(1-\frac{\alpha}{2}\right)^{t}\cdot f_{\max}$
holds by hypothesis, the desired claim is true if we can show that
$s\le\frac{1}{2}\left(1-\frac{\alpha}{2}\right)^{t}\cdot f_{\max}$.
Crucially, we observe that the variance term in $s$ does not blow-up
with time $t$
\begin{align*}
\left(1-\alpha\right)^{2t}\cdot\sigma^{2} & \le f_{\max}^{2}\cdot C_{\var}\cdot d^{2}\cdot g_{\max}\cdot\alpha^{2}\cdot\sum_{\tau=1}^{t}\left(1-\alpha\right)^{2t-2\tau}\left(1-\frac{\alpha}{2}\right)^{2\tau}\\
 & =\left(1-\frac{\alpha}{2}\right)^{2t}f_{\max}^{2}\cdot C_{\var}\cdot d^{2}\cdot g_{\max}\cdot\alpha^{2}\cdot\sum_{\tau=1}^{t}\left(\frac{1-\alpha}{1-\alpha/2}\right)^{2t-2\tau}\\
 & \le\left(1-\frac{\alpha}{2}\right)^{2t}f_{\max}^{2}\cdot C_{\var}\cdot d^{2}\cdot g_{\max}\cdot\alpha
\end{align*}
due to the geometric series expansion $\sum_{\tau=0}^{t}\beta^{t-\tau}=(1-\beta^{t+1})/(1-\beta).$
Substituting the deviations term, choosing a step-size $\alpha\le c/(\rho d^{2}\log d)$
for sufficiently small $c$ yields
\begin{align*}
s & =\left(1-\frac{\alpha}{2}\right)^{t}\cdot\left[\sqrt{C_{\var}\cdot d^{2}\cdot g_{\max}\cdot\alpha\cdot\log d}+C_{\dev}\cdot d^{2}\cdot g_{\max}\cdot\alpha\cdot\log d\right]\cdot f_{\max}\\
 & \le\frac{1}{2}\left(1-\frac{\alpha}{2}\right)^{t}f_{\max}.
\end{align*}
\end{proof}
We now set up a supermartingale on each of the incoherence functions
$g_{i}$. Again, our goal is to show that the variance and the maximum
deviation of this supermartingale have the right dependence on $\alpha,d,\rho,\kappa,g_{\max}$,
so that a step-size of $\alpha\le c/(\rho d^{2}\log d)$ with a sufficiently
small $c>0$ will keep the cumulative deviations over $T$ iterations
within a factor of 2. Note that \citet[Section~B.2]{jin2016provable}'s
proof tracks a different function $h_{i}(X)=e_{i}^{T}XX^{T}e_{i}$
that is substantially simpler, but pays a penalty of two to three
factors of the condition number $\kappa$.
\begin{lem}[Incoherence supermartingale]
\label{lem:gmartin}Let $g_{i}(X)=e_{i}^{T}X(X^{T}X)^{-1}X^{T}e_{i}$.
Define $g_{\max}=\frac{16}{(1-2\rho)^{2}}\max_{i}g_{i}(Z)$. For a
fixed $i\in[n]$ with sufficiently small $c>0$, the following with
learning rate $\alpha<c/(\rho d^{2}\log d)$ is a supermartingale
\[
G_{it}=\left(1-\zeta\cdot\alpha\right)^{-t}\left(g(X_{t})\cdot1_{\fE_{t-1}}-\frac{\zeta}{2}g_{\max}\right)\text{ where }\zeta=\frac{1-2\rho}{1-\rho}<1,
\]
meaning that $\E[G_{i(t+1)}|X_{t},\dots,X_{0}]\le G_{it}$ holds for
all $t\in\{0,1,2,\dots\}$. Moreover, there exist sufficiently large
constants $C_{dev},C_{var}>0$ with no dependence on $g_{\max},n,t$
such that 
\begin{gather*}
\E[G_{it}|X_{t-1},\dots,X_{0}]-G_{it}\le C_{\dev}\cdot\alpha\cdot d^{2}\cdot\rho\cdot\left(1-\zeta\cdot\alpha\right)^{-t}g_{\max},\\
\Var[G_{it}|X_{t-1},\dots,X_{0}]\le C_{\var}\cdot\alpha^{2}\cdot d^{2}\cdot\rho^{2}\cdot\left(1-\zeta\cdot\alpha\right)^{-2t}g_{\max}^{2}.
\end{gather*}
\end{lem}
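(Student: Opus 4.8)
The plan is to mirror the proof of the function-value supermartingale \lemref{fmartin}, replacing the one-step descent of $f$ by the coherence descent of \lemref{gdecr2}. First I would establish the supermartingale property. Both $1_{\fE_{t-1}}$ and $1_{\fE_t}$ are measurable with respect to $\mathcal F_t=\sigma(X_0,\dots,X_t)$, so conditioning on $\mathcal F_t$ pulls the indicator outside the expectation, and on $\fE_t$ the iterate $X_t$ satisfies $\|X_tX_t^{T}-ZZ^{T}\|_F\le\rho\lambda_{\min}(Z^{T}Z)$, which is exactly the hypothesis under which the per-step decrement (\ref{eq:gdecr-1}) of \lemref{gdecr2} holds. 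Taking the conditional expectation gives $1_{\fE_t}\E[g_i(X_{t+1})\mid\mathcal F_t]\le 1_{\fE_t}[(1-\zeta\alpha)g_i(X_t)+\alpha\tfrac{\zeta}{2}g_{\max}]$; multiplying through by $(1-\zeta\alpha)^{-(t+1)}$, centering at the fixed point $\tfrac12 g_{\max}$ of the expected recursion, and using the nesting $\fE_t\subseteq\fE_{t-1}$ together with $g_i\ge 0$ to compare $1_{\fE_t}$ with $1_{\fE_{t-1}}$ then yields $\E[G_{i(t+1)}\mid\mathcal F_t]\le G_{it}$. The only subtlety is indicator bookkeeping: the mismatch between $1_{\fE_t}$ and $1_{\fE_{t-1}}$ must be arranged to contribute a nonnegative term, which it does for this centering.

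Next I would bound the predictable variance and the almost-sure deviation of the martingale difference, which equals $(1-\zeta\alpha)^{-t}1_{\fE_{t-1}}\bigl(g_i(X_t)-\E[g_i(X_t)\mid\mathcal F_{t-1}]\bigr)$. Writing the increment through the Taylor bound of \lemref{gdecr2} as $g_i(X_t)-g_i(X_{t-1})=-\alpha\inner{V}{\nabla g_i(X_{t-1})}+R$, where $V=SG(X_{t-1})(X_{t-1}^{T}X_{t-1})^{-1}$ and $|R|\le 5\alpha^2(\|V\|_X^{*})^2/(1-2\alpha\|V\|_X^{*})$, I would bound the variance by the second moment about $g_i(X_{t-1})$ and control the dominant linear term by the local-norm Cauchy--Schwarz inequality $|\inner{V}{\nabla g_i}|\le\|V\|_X^{*}\,\|\nabla g_i\|_X$. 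A direct computation from $\nabla g_i(X)=2[I-X(X^{T}X)^{-1}X^{T}]e_ie_i^{T}X(X^{T}X)^{-1}$, together with the fact that $I-X(X^{T}X)^{-1}X^{T}$ is an orthogonal projection, gives $\|\nabla g_i(X)\|_X\le 2\sqrt{g_i(X)}\le 2\sqrt{g_{\max}}$ on $\fE_{t-1}$, so both bounds reduce to estimating the noise magnitude $\E[(\|V\|_X^{*})^2]$ and its single-sample worst case.

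The crux, and the step I expect to be the main obstacle, is the estimate $\E[(\|V\|_X^{*})^2]=O(d^2\rho^2 g_{\max})$ and its independence of $\kappa$. Expanding $SG$ and using $(e_ie_j^{T}+e_je_i^{T})^2=e_ie_i^{T}+e_je_j^{T}$ gives $(\|V\|_X^{*})^2=4d^4(XX^{T}-ZZ^{T})_{ij}^2\,[\,e_i^{T}X(X^{T}X)^{-3}X^{T}e_i+e_j^{T}X(X^{T}X)^{-3}X^{T}e_j\,]$; averaging over the uniform index, bounding $e_i^{T}X(X^{T}X)^{-3}X^{T}e_i\le\lambda_{\min}(X^{T}X)^{-2}g_i(X)$, and using $\sum_{i,j}(XX^{T}-ZZ^{T})_{ij}^2=f(X)\le f_{\max}$ yields $\E[(\|V\|_X^{*})^2]\le 8d^2 f_{\max}\,\lambda_{\min}(X^{T}X)^{-2}g_{\max}$. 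The delicate point is the exact cancellation of $\kappa$: since $f_{\max}=\rho^2\lambda_{\min}^2(Z^{T}Z)$ while Weyl's inequality gives $\lambda_{\min}(X^{T}X)^{-2}\le(1-\rho)^{-2}\lambda_{\min}^{-2}(Z^{T}Z)$, the two factors of $\lambda_{\min}(Z^{T}Z)=\kappa^{-1}$ cancel, leaving $8(1-\rho)^{-2}d^2\rho^2 g_{\max}$ with no $\kappa$ dependence---this is precisely the variance-reduction mechanism highlighted in the introduction. Feeding this into the Cauchy--Schwarz bound gives $\Var[g_i(X_t)\mid\mathcal F_{t-1}]=O(\alpha^2 d^2\rho^2 g_{\max}^2)$, and the same argument applied to a single sampled index gives $\|V\|_X^{*}=O(d^2\rho\sqrt{g_{\max}})$ almost surely, hence $|\alpha\inner{V}{\nabla g_i}|=O(\alpha d^2\rho g_{\max})$; the remainder $R$ is of the same order once $\alpha\le c\rho^{-1}d^{-2}$, so it is absorbed. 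Multiplying by $(1-\zeta\alpha)^{-t}$ and its square recovers exactly the prefactors in the claimed deviation and variance bounds, and since these are the same factors appearing in the statement, the bounds do not accumulate with $t$, completing the three claims.
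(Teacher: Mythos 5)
Your proposal is correct and follows essentially the same architecture as the paper's proof: the one-step expected decrement from \lemref{gdecr2}, the supermartingale via the nesting $\fE_{t}\subseteq\fE_{t-1}$ and $g_i\ge 0$, and deviation/variance bounds obtained by cancelling $g_i(X_{t-1})$ in the martingale difference and controlling the linear term and Taylor remainder through $\|SG(X)(X^{T}X)^{-1}\|_{X}^{*}=O(d^{2}\rho\sqrt{g_{\max}})$ and its second moment, whose $\kappa$-independence comes from the same cancellation of $f_{\max}=\rho^{2}\lambda_{\min}^{2}(Z^{T}Z)$ against $\lambda_{\min}(X^{T}X)^{-2}$ that you identify. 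Two local differences are worth noting, both in your favor: you bound the linear term by $\|V\|_{X}^{*}\,\|\nabla g_i\|_{X}$ with $\|\nabla g_i(X)\|_{X}\le 2\sqrt{g_i(X)}$, which avoids the paper's detour through $\|\nabla g_i(X)\|_{F}=O(\sqrt{\kappa\, g_{\max}})$ and its explicit cancellation against $\sqrt{f_{\max}}=\rho/\kappa$; and you center the supermartingale at the true fixed point $\tfrac{1}{2}g_{\max}$ rather than the $\tfrac{\zeta}{2}g_{\max}$ written in the lemma statement --- with the latter centering the final step of the paper's own verification needs $\tfrac{1-\alpha}{1-\zeta\alpha}\ge 1$, which fails for $\zeta<1$, so your centering is the one that actually closes the argument and is also the one the paper uses downstream in \lemref{gconc}.
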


\begin{proof}
The proof is long but straightforward; it is deferred to \secref{martingales}.
\end{proof}
\begin{lem}[Incoherence concentration]
\label{lem:gconc}Let the initial point satisfy $\max_{i}g_{i}(X_{0})\le\frac{1}{2}g_{\max}$.
Then, there exists a sufficiently small constant $c>0$ such that
for all learning rates $\alpha<c/(\rho d^{2}\log d)$, we have 
\begin{equation}
\Pr(g_{i}(X_{t+1})1_{\fE_{t}}>g_{\max})=\Pr(\fE_{t}\cap\{g_{i}(X_{t+1})>g_{\max}\})\le\frac{1}{2d^{11}}.\label{eq:PrG}
\end{equation}
\end{lem}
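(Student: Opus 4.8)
The plan is to mirror the proof of \lemref{fconc} almost verbatim, replacing the function-value supermartingale $F_t$ with the incoherence supermartingale $G_{it}$ established in \lemref{gmartin}, and replacing the discount factor $1-\alpha$ with $1-\zeta\alpha$ where $\zeta=\frac{1-2\rho}{1-\rho}$. I fix the coordinate $i$ throughout; the union bound over the $d$ coordinates, which turns the $\frac{1}{2d^{11}}$ here into the $\frac{1}{2d^{10}}$ budget allotted to the incoherence clause, is carried out later when assembling \thmref{main}. I would apply the Azuma--Bernstein inequality for supermartingales to $G_{i(t+1)}$, giving $\Pr(G_{i(t+1)}\ge G_{i0}+s_0)\le\exp\!\big(-\tfrac{s_0^2/2}{\sigma^2+Rs_0/3}\big)$, where $\sigma^2=\sum_{\tau=1}^{t+1}\Var[G_{i\tau}\mid X_{\tau-1},\dots,X_0]$ and $R$ is the almost-sure one-sided deviation bound supplied by \lemref{gmartin}. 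Choosing $s_0=C[\sqrt{\sigma^2\log d}+R\log d]$ for a large enough absolute constant $C$ drives this tail below $\frac{1}{2d^{11}}$.

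The one substantive point, exactly as in \lemref{fconc}, is to verify that the accumulated variance $\sigma^2$ does not grow with the iteration count. Substituting $\Var[G_{i\tau}\mid\cdots]\le C_{\var}\,\alpha^2 d^2\rho^2(1-\zeta\alpha)^{-2\tau}g_{\max}^2$ and factoring out $(1-\zeta\alpha)^{-2(t+1)}$, the residual is a convergent geometric series:
\[
(1-\zeta\alpha)^{2(t+1)}\sigma^2\le C_{\var}\,\alpha^2 d^2\rho^2 g_{\max}^2\sum_{j=0}^{t}(1-\zeta\alpha)^{2j}\le\frac{C_{\var}}{\zeta}\,\alpha\, d^2\rho^2 g_{\max}^2,
\]
using $1-(1-\zeta\alpha)^2\ge\zeta\alpha$. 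Hence $\sqrt{\sigma^2\log d}=(1-\zeta\alpha)^{-(t+1)}\cdot O(\sqrt{\alpha\log d}\,d\rho\,g_{\max})$ and $R\log d=(1-\zeta\alpha)^{-(t+1)}\cdot O(\alpha d^2\rho\,g_{\max}\log d)$, so that $s_0=(1-\zeta\alpha)^{-(t+1)}\cdot O\!\big(g_{\max}[\sqrt{\alpha\log d}\,d\rho+\alpha d^2\rho\log d]\big)$; the crucial feature is that all $t$-dependence has collapsed into the single discount factor $(1-\zeta\alpha)^{-(t+1)}$.

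It remains to translate the supermartingale tail into the event of interest. Since $G_{i(t+1)}=(1-\zeta\alpha)^{-(t+1)}\big(g_i(X_{t+1})1_{\fE_t}-\tfrac{\zeta}{2}g_{\max}\big)$ and $1-\tfrac{\zeta}{2}=\tfrac{1}{2(1-\rho)}$, the target event $\{g_i(X_{t+1})1_{\fE_t}>g_{\max}\}$ is precisely $\{G_{i(t+1)}>(1-\zeta\alpha)^{-(t+1)}\tfrac{g_{\max}}{2(1-\rho)}\}$. The initial condition $g_i(X_0)\le\tfrac12 g_{\max}$ yields $G_{i0}\le\tfrac{1-\zeta}{2}g_{\max}=\tfrac{\rho}{2(1-\rho)}g_{\max}$, and bounding $(1-\zeta\alpha)^{-(t+1)}\ge1$ on this term reduces the required containment $G_{i0}+s_0\le(1-\zeta\alpha)^{-(t+1)}\tfrac{g_{\max}}{2(1-\rho)}$ to $C[\sqrt{\alpha\log d}\,d\rho+\alpha d^2\rho\log d]\le\tfrac12$, after cancelling the common discount factor and the common $g_{\max}$. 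With $\alpha\le c/(\rho d^2\log d)$ the first bracketed term is $O(\sqrt{c\rho})$ and the second is $O(c)$, both made smaller than any fixed constant by taking $c$ small, which closes the argument.

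Because \lemref{gmartin} already furnishes the difficult per-step variance and deviation estimates (and in particular their independence of $\kappa$), the concentration step itself is essentially routine bookkeeping via a textbook inequality. The only places demanding care are the index alignment between $G_{i(t+1)}$ and the target $g_i(X_{t+1})1_{\fE_t}$ through the identity $1-\tfrac{\zeta}{2}=\tfrac{1}{2(1-\rho)}$, and the verification that the geometric accumulation of variance remains $O(\alpha)$ rather than $O(t\alpha)$ --- it is exactly this non-accumulation that keeps the cumulative deviation from drifting as $t\to\infty$ and leaves the convergence rate free of any dependence on $\kappa$.
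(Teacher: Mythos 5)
Your proof is correct and follows essentially the same route as the paper's: Azuma--Bernstein applied to the supermartingale $G_{i(t+1)}$ furnished by \lemref{gmartin}, with the geometric-series observation that the accumulated variance is $O(\alpha)$ rather than $O(t\alpha)$, followed by shrinking $c$ so that $s_0\le\tfrac12 g_{\max}$. The only cosmetic difference is that you center the supermartingale at $\tfrac{\zeta}{2}g_{\max}$ (as \lemref{gmartin} literally states) whereas the paper's proof of this step centers it at $\tfrac12 g_{\max}$; your bookkeeping via $1-\tfrac{\zeta}{2}=\tfrac{1}{2(1-\rho)}$ and $G_{i0}\le\tfrac{\rho}{2(1-\rho)}g_{\max}$ closes the gap either way.
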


\begin{proof}
Let $\sigma^{2}=\sum_{\tau=1}^{t}\Var[G_{i\tau}|X_{\tau-1},\dots,X_{0}]$
and let $R$ satisfy $\E[G_{i\tau}|X_{\tau-1},\dots,X_{0}]\le X_{\tau}+R$
almost surely for all $\tau\in\{1,2,\dots,t\}$. Recall via the standard
Azuma--Bernstein concetration inequality for supermartingales that
\[
\Pr\left(G_{it}\ge G_{i0}+s\right)\le\exp\left(-\frac{s^{2}/2}{\sigma^{2}+Rs/3}\right).
\]
Equivalently, there exists a large enough constant $C>0$ such that
the following is true
\begin{gather*}
\Pr\left(g_{i}(X_{t+1})1_{\fE_{t}}\ge\frac{1}{2}g_{\max}+\left(1-\zeta\cdot\alpha\right)^{t}\left(g(X_{0})-\frac{1}{2}g_{\max}\right)+s'\right)\le\frac{1}{2d^{11}}\\
\text{where }s'=C\cdot\left(1-\zeta\cdot\alpha\right)^{t}\cdot\left[\sqrt{\sigma^{2}\log d}+R\log d\right].
\end{gather*}
Given that $g(X_{0})\le\frac{1}{2}g_{\max}$ holds by hypothesis,
the desired claim is true if we can show that $s'\le\frac{1}{2}g_{\max}$.
Crucially, we observe that the variance term in $s'$ does not blow-up
with time $t$
\begin{align*}
\left(1-\zeta\cdot\alpha\right)^{2t}\sigma^{2} & \le g_{\max}^{2}\cdot C_{\var}\cdot d^{2}\cdot\rho^{2}\cdot\alpha^{2}\sum_{\tau=1}^{t}\left(1-\zeta\cdot\alpha\right)^{2t-2\tau}\\
 & \le g_{\max}^{2}\cdot C_{\var}\cdot d^{2}\cdot\rho^{2}\cdot\alpha
\end{align*}
due to the geometric series expansion $\sum_{\tau=0}^{t}\beta^{t-\tau}=(1-\beta^{t+1})/(1-\beta).$
Substituting the deviations term, choosing a step-size $\alpha\le c/(\rho d^{2}\log d)$
for sufficiently small $c$ yields
\[
s'=O\left(\sqrt{\alpha\cdot d^{2}\cdot\rho^{2}\cdot g_{\max}^{2}\cdot\log d}\right)+O\left(\alpha\cdot d^{2}\cdot\rho\cdot g_{\max}\cdot\log d\right)=\frac{g_{\max}}{2}.
\]
\end{proof}
In summary, \lemref{fconc} requires a step-size of $\alpha\le c/(g_{\max}d^{2}\log d)$
to keep deviations on $f$ small, while \lemref{gconc} requires a
step-size of $\alpha\le c/(\rho d^{2}\log d)$ to keep deviations
on $g_{i}$ small. Therefore, it follows that a step-size $\alpha\le c/((g_{\max}+\rho)d^{2}\log d)$
will keep both deviations small.
\begin{proof}[Proof of \thmref{main}]
For a step-size $\alpha\le c/((g_{\max}+\rho)d^{2}\log d)$ with
sufficiently small $c>0$, both concentration bounds \lemref{fconc}
and \lemref{gconc} are valid. Combined, we take the trivial union
bound to determine the probability of \emph{failure} at the $(t+1)$-th
step, after succeeding after $t$ steps:
\begin{align*}
\Pr(\fE_{t}\cap\overline{\fE}_{t+1}) & =\sum_{i=1}^{d}\Pr(\fE_{t}\cap\{g_{i}(X_{t+1})\ge g_{\max}\})+\Pr(\fE_{t}\cap\{f(X_{t+1})\ge(1-\frac{\alpha}{2})^{t+1}f_{\max}).\\
 & \le d\cdot\frac{1}{2d^{11}}+\frac{1}{2d^{10}}=\frac{1}{d^{10}}.
\end{align*}
Here, $\overline{\fE}_{t+1}$ denotes the complement of $\fE_{t+1}$.
The probability of failure at the $T$-th step is then the cummulative
probability of failing at the $(t+1)$-th step, after succeeding after
$t$ steps, over all $t\le T$:
\[
\Pr(\overline{\fE}_{T})\le\sum_{t=1}^{T}\Pr(\fE_{t-1}\cap\overline{\fE}_{t})\le\frac{T}{d^{10}}
\]
and this proves that $\fE_{T}$ happens with probability $1-T/d^{10}$
as desired. 
\end{proof}

\subsection{\label{sec:martingales}Proofs of supermartingale deviations and
variances}

We will now verify the supermartingales and their deviations and variances
in detail. We first begin by proving the following bounds on the size
of the stochastic gradient.
\begin{lem}
Let $X,Z\in\R^{n\times r}$ satisfy $\|XX^{T}-ZZ^{T}\|_{F}\le\rho\cdot\lambda_{\min}(Z^{T}Z)$
with $\rho<1/2$ and $\max_{i}e_{i}^{T}X(X^{T}X)^{-1}Xe_{i}\le g_{\max}$
and $\max_{i}e_{i}^{T}Z(Z^{T}Z)^{-1}Ze_{i}\le g_{\max}$. Then, with
respect to the randomness of the following
\[
SG(X)=2d^{2}\cdot(XX^{T}-ZZ^{T})_{i,j}\cdot(e_{i}e_{j}^{T}+e_{j}e_{i}^{T})X
\]
where $(i,j)\sim[d]^{2}$ is selected uniformly at random, we have:
\begin{enumerate}
\item $\|SG(X)\|_{X}^{*}\le8d^{2}\cdot g_{\max}^{1/2}\cdot\|XX^{T}-ZZ^{T}\|_{F}$.
\item $\|SG(X)(X^{T}X)^{-1}\|_{X}^{*}\le16d^{2}\cdot g_{\max}^{1/2}\cdot\rho$.
\item $\E(\|SG(X)\|_{X}^{*})^{p}\le2^{2p}\cdot d^{2(p-1)}\cdot g_{\max}^{p/2}\cdot\|XX^{T}-ZZ^{T}\|_{F}^{p}$.
\item $\E(\|SG(X)(X^{T}X)^{-1}\|_{X}^{*})^{p}\le2^{3p}\cdot d^{2(p-1)}\cdot g_{\max}^{p/2}\cdot\rho^{p}$.
\end{enumerate}
\end{lem}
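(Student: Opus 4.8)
The plan is to reduce all four estimates to a single pointwise computation of the realized stochastic gradient, followed by an averaging step for the two moment bounds. Write $E=XX^{T}-ZZ^{T}$ and let $x_{i}^{T}=e_{i}^{T}X$ denote the $i$-th row of $X$, so that the incoherence definition becomes the identity $\|x_{i}^{T}(X^{T}X)^{-1/2}\|^{2}=x_{i}^{T}(X^{T}X)^{-1}x_{i}=g_{i}(X)\le g_{\max}$. Conditioned on the sampled index $(i,j)$, the stochastic gradient is the rank-$\le 2$ matrix $SG(X)=c\,(e_{i}x_{j}^{T}+e_{j}x_{i}^{T})$ with scalar $c=2d^{2}E_{ij}$, and the elementary bound $|c|=2d^{2}|E_{ij}|\le 2d^{2}\|E\|_{F}$ holds because a single entry is dominated by the Frobenius norm. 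For claim 1 I would right-multiply by $(X^{T}X)^{-1/2}$ and observe that $e_{i}x_{j}^{T}(X^{T}X)^{-1/2}$ and $e_{j}x_{i}^{T}(X^{T}X)^{-1/2}$ occupy disjoint rows (rows $i$ and $j$) when $i\neq j$, giving $(\|SG(X)\|_{X}^{*})^{2}=c^{2}(g_{i}(X)+g_{j}(X))\le 2c^{2}g_{\max}$; the diagonal case $i=j$ yields the single row $2c\,x_{i}^{T}(X^{T}X)^{-1/2}$ with the same bound up to constants. Hence $\|SG(X)\|_{X}^{*}\le 2d^{2}|E_{ij}|\sqrt{2g_{\max}}\le 8d^{2}g_{\max}^{1/2}\|E\|_{F}$, which is claim 1 (in fact with room to spare, since the sharp pointwise constant is $4$).

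For claim 2 the extra factor $(X^{T}X)^{-1}$ turns the relevant quantity into $\|x_{i}^{T}(X^{T}X)^{-3/2}\|^{2}=x_{i}^{T}(X^{T}X)^{-3}x_{i}\le g_{i}(X)/\lambda_{\min}(X^{T}X)^{2}$. I would then invoke Weyl's inequality exactly as in the proof of \lemref{fdecr}, namely $\lambda_{\min}(X^{T}X)\ge(1-\rho)\lambda_{\min}(Z^{T}Z)$, together with the hypothesis $|E_{ij}|\le\|E\|_{F}\le\rho\lambda_{\min}(Z^{T}Z)$. The two factors of $\lambda_{\min}(Z^{T}Z)$ cancel, leaving $\|SG(X)(X^{T}X)^{-1}\|_{X}^{*}\le \tfrac{2\sqrt{2}\,d^{2}\rho\, g_{\max}^{1/2}}{1-\rho}\le 16\,d^{2}g_{\max}^{1/2}\rho$ once $\rho<1/2$ is used to bound $1/(1-\rho)<2$.

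The two moment bounds follow by averaging the pointwise estimates over the uniform index, using $\E(\|SG(X)\|_{X}^{*})^{p}=d^{-2}\sum_{i,j}(\|SG_{ij}(X)\|_{X}^{*})^{p}$ where $SG_{ij}$ is the realization at index $(i,j)$. Substituting the uniform pointwise bounds $\|SG_{ij}(X)\|_{X}^{*}\le 4d^{2}g_{\max}^{1/2}|E_{ij}|$ and $\|SG_{ij}(X)(X^{T}X)^{-1}\|_{X}^{*}\le 8d^{2}g_{\max}^{1/2}|E_{ij}|/\lambda_{\min}(Z^{T}Z)$, pulling the constants out, and canceling one factor of $d^{2}$ from the average produces the $d^{2(p-1)}$ scaling. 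The single genuine inequality needed is $\sum_{i,j}|E_{ij}|^{p}\le(\sum_{i,j}|E_{ij}|^{2})^{p/2}=\|E\|_{F}^{p}$, which holds precisely because $p\ge 2$ makes the $\ell_{p}$ norm no larger than the $\ell_{2}$ norm; for claim 4 I additionally invoke $\|E\|_{F}\le\rho\lambda_{\min}(Z^{T}Z)$ to replace $\|E\|_{F}^{p}$ by $\rho^{p}$ and cancel the leftover $\lambda_{\min}(Z^{T}Z)^{p}$, giving the constants $2^{2p}$ and $2^{3p}=8^{p}$ exactly.

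I expect the only real obstacle to be constant bookkeeping: landing on the advertised $2^{2p}$ and $2^{3p}$ forces me to use the tighter pointwise constant ($4$, from the disjoint-row computation) in claim 3 but the looser one ($8$, which absorbs the $1/(1-\rho)<2$ step) in claim 4, and to remember that the diagonal term $i=j$ contributes an extra factor of $2$ that must be soaked up into the constant. The conceptual crux, and the reason the dimension dependence stays at $d^{2(p-1)}$ rather than degrading, is the hypothesis $p\ge 2$, which is exactly what orients $\sum_{i,j}|E_{ij}|^{p}\le\|E\|_{F}^{p}$ in the favorable direction; I would flag explicitly that the two moment bounds are stated and used only for $p\ge 2$.
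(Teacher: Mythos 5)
Your proof is correct and follows essentially the same route as the paper's: a pointwise bound on the realized stochastic gradient via the row incoherence $g_{i}(X)\le g_{\max}$, Weyl's inequality to control $\lambda_{\min}(X^{T}X)$, and the comparison $\sum_{i,j}|E_{ij}|^{p}\le\|E\|_{F}^{p}$ valid for $p\ge2$ to pass from the pointwise bounds to the moments. The only immaterial differences are that you exploit the disjoint-row structure to get the factor $\sqrt{g_{i}+g_{j}}$ where the paper uses the triangle inequality to get $2\max_{i}g_{i}^{1/2}$, and that you use the trivial bound $|E_{ij}|\le\|E\|_{F}$ where the paper's proof of part (i) derives a sharper incoherence-weighted bound on the entry $E_{ij}$ that is not actually needed for the stated inequality.
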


\begin{proof}
Let us write $E=XX^{T}-ZZ^{T}$. To prove (i) we have
\begin{gather*}
\|SG(X)\|_{X}^{*}=2d^{2}\cdot E_{i,j}\cdot(\|(e_{i}e_{j}^{T}+e_{j}e_{i}^{T})X\|_{X}^{*})\le4d^{2}\cdot E_{i,j}\cdot\max_{i}\|e_{i}^{T}X\|_{X}^{*}
\end{gather*}
and if we write $Q_{X}=X(X^{T}X)^{-1/2}$ and $Q_{Z}=Z(Z^{T}Z)^{-1/2}$
we have 
\begin{align*}
E_{i,j}=e_{i}^{T}Ee_{j} & =e_{i}^{T}Q_{X}Q_{X}^{T}(XX^{T}-ZZ^{T})e_{j}-e_{i}^{T}(I-Q_{X}Q_{X}^{T})ZZe_{j}\\
 & \le\|e_{k}^{T}Q_{X}\|\|Q_{X}^{T}(XX^{T}-ZZ^{T})e_{j}\|+\|e_{i}^{T}(I-Q_{X}Q_{X}^{T})ZZ^{T}Q_{Z}\|\|Q_{Z}^{T}e_{j}\|,\\
 & \le g_{\max}^{1/2}\cdot\|XX^{T}-ZZ^{T}\|_{F}+\|XX^{T}-ZZ^{T}\|_{F}\cdot g_{\max}^{1/2}.
\end{align*}
we use the fact that $(I-Q_{X}Q_{X}^{T})(XX^{T}-ZZ^{T})=-(I-Q_{X}Q_{X}^{T})ZZ^{T}$
in the first and last lines. To prove (ii) we have
\[
\|SG(X)(X^{T}X)^{-1}\|_{X}^{*}=\frac{\|SG(X)\|_{X}^{*}}{\lambda_{\min}(X^{T}X)}\le\frac{8d^{2}\cdot g_{\max}^{1/2}\cdot\rho\cdot\lambda_{\min}(Z^{T}Z)}{(1-\rho)\cdot\lambda_{\min}(Z^{T}Z)}\le16d^{2}\cdot g_{\max}^{1/2}\cdot\rho
\]
where we used Weyl's inequality $\lambda_{r}(XX^{T})\ge\lambda_{r}(ZZ^{T})-\|XX^{T}-ZZ^{T}\|_{F}$.
To prove (iii) we have
\begin{align*}
\E(\|SG(X)\|_{X}^{*})^{p} & =\frac{1}{d^{2}}\sum_{i,j}d^{2p}\cdot(2E_{i,j})^{p}\cdot(\|(e_{i}e_{j}^{T}+e_{j}e_{i}^{T})X\|_{X}^{*})^{p}\le2^{2p}\cdot d^{2(p-1)}\cdot\|E\|_{F}^{p}\cdot\max_{i}(\|e_{i}^{T}X\|_{X}^{*})^{p}
\end{align*}
where we used $(\sum_{i}x_{i}^{2})^{1/2}\ge(\sum_{i}x_{i}^{p})^{1/p}$
for any $p\ge2$. The proof of (iv) follows identically by applying
the proof of (ii) to the proof of (iii).
\end{proof}
We now prove the properties of the function value supermartingale
$F_{t}$.
\begin{proof}[Proof of \lemref{fmartin}]
Conditioning on the current iterate $X_{t}$ and the event $\fE_{t}$,
the new iterate $X_{t+1}=X_{t}-\alpha SG(X_{t})(X_{t}^{T}X_{t})^{-1}$
has expectation
\[
\E[f(X_{t+1})]\le f(X_{t})-\alpha\inner{\nabla f(X_{t})}{\E[SG(X_{t})(X_{t}^{T}X_{t})^{-1}]}+\frac{L_{X}}{2}\alpha^{2}\cdot\E[(\|SG(X_{t})\|_{X}^{*})^{2}]
\]
with $L_{X}=O(1)$ by evoking \lemref{fdecr} noting that $\|\alpha SG(X_{t})\|_{X}^{*}=O(1)\cdot\sqrt{f(X_{t})}$
for the step-size $\alpha\le c/(g_{\max}d^{2}\log d)$, since 
\begin{align*}
\|\alpha SG(X_{t})\|_{X}^{*} & =\alpha\cdot2d^{2}\cdot(X_{t}X_{t}^{T}-ZZ^{T})_{i,j}\|(e_{i}e_{j}^{T}+e_{j}e_{i}^{T})X_{t}\|_{X}^{*}\\
 & \le\alpha\cdot2d^{2}\cdot|X_{t}X_{t}^{T}-ZZ^{T}|_{\infty}\cdot2\sqrt{g_{\max}}\\
 & \le\frac{c}{g_{\max}d^{2}\log d}\cdot2d^{2}\cdot\sqrt{f(X_{t})g_{\max}}\cdot2\sqrt{g_{\max}}=\frac{4c}{\log d}\sqrt{f(X_{t})}.
\end{align*}
The linear term evaluates simply as $\E[SG(X)(X^{T}X)^{-1}]=\nabla f(X)(X^{T}X)^{-1}$,
while the quadratic term evaluates
\begin{align*}
\E[(\|SG(X)\|_{X}^{*})^{2}] & =\frac{1}{d^{2}}\sum_{i,j}4d^{4}\cdot(XX^{T}-ZZ^{T})_{i,j}^{2}(\|(e_{i}e_{j}^{T}+e_{j}e_{i}^{T})X\|_{X}^{*})^{2}\\
 & \le\sum_{i,j}4d^{2}\cdot(XX^{T}-ZZ^{T})_{i,j}^{2}\cdot4g_{\max}=16\cdot g_{\max}\cdot d^{2}\cdot f(X)
\end{align*}
Combined, substituting $(\|\nabla f(X)\|_{X}^{*})^{2}\ge13\cdot f(X)$,
it follows that we have geometric convergence
\begin{align*}
\E[f(X_{t+1})] & \le f(X_{t})-\alpha\inner{\nabla f(X_{t})}{\E[SG(X_{t})(X_{t}^{T}X_{t})^{-1}]}+\frac{L_{X}}{2}\alpha^{2}\cdot\E[(\|SG(X_{t})\|_{X}^{*})^{2}]\\
 & \le\left(1-2\alpha\right)f(X_{t})+L_{X}\cdot\alpha^{2}\cdot8\cdot g_{\max}\cdot d^{2}\cdot f(X_{t})\le\left(1-\alpha\right)f(X_{t})
\end{align*}
where we observe that we can pick a small enough constant $c$ in
the step-size $\alpha\le c/(g_{\max}d^{2}\log d)$ so that
\[
L_{X}\cdot\alpha^{2}\cdot8\cdot g_{\max}\cdot d^{2}\cdot f(X_{t})=\frac{c\cdot L_{X}\cdot8\cdot g_{\max}\cdot d^{2}}{g_{\max}d^{2}\log d}\alpha f(X_{t})\le\alpha f(X_{t}).
\]
Now, to confirm that $F_{t}$ is a martingale, it remains to see that
\begin{align*}
\E[F_{t+1}|X_{t}] & =\left(1-\alpha\right)^{-(t+1)}\E[f(X_{t+1})\cdot1_{\fE_{t}}|X_{t}]\leq\left(1-\alpha\right)^{-(t+1)}\left(1-\alpha\right)f(X_{t})1_{\fE_{t}}\\
 & \leq\left(1-\alpha\right)^{-t}f(X_{t})1_{\fE_{t-1}}=F_{t},
\end{align*}
where the last inequality follows from $1_{\fE_{t}}\leq1_{\fE_{t-1}}$. 

We now bound the deviations on $F_{t}$. Conditioning on the previous
iterates $X_{t},\dots,X_{0}$, we obseve that the $f(X_{t})$ terms
cancel:
\begin{gather}
f(X_{t+1})\cdot1_{\fE_{t}}-\E[f(X_{t+1})\cdot1_{\fE_{t}}]\le\left[-\alpha\langle\nabla f(X_{t}),[SG(X_{t})-\E SG(X_{t})](X_{t}^{T}X_{t})^{-1}\rangle\right.\nonumber \\
\left.+\frac{\alpha^{2}\cdot L_{X}}{2}(\|SG(X_{t})\|_{X}^{*})^{2}+\E(\|SG(X_{t})\|_{X}^{*})^{2}\right]\cdot1_{\fE_{t}}.\label{eq:expanF}
\end{gather}
Here we have for the linear term
\begin{align*}
\inner{\nabla f(X_{t})}{SG(X_{t})(X_{t}^{T}X_{t})^{-1}}\cdot1_{\fE_{t}} & \le\|\nabla f(X_{t})\|_{X}^{*}\|SG(X_{t})\|_{X}^{*}\cdot1_{\fE_{t}}\\
 & \le4\sqrt{f(X_{t})}\cdot4d^{2}\sqrt{f(X_{t})}\cdot g_{\max}\cdot1_{\fE_{t}}=O(d^{2}g_{\max})f(X_{t})\cdot1_{\fE_{t}}
\end{align*}
and the quadratic term
\[
(\|SG(X_{t})\|_{X}^{*})^{2}\cdot1_{\fE_{t}}\le d^{4}\cdot f(X_{t})\cdot g_{\max}^{2}\cdot1_{\fE_{t}}=O(d^{4}g_{\max}^{2})f(X_{t})\cdot1_{\fE_{t}}.
\]
Therefore, using the maximum value to bound the expectation, we have
\begin{align*}
F_{t+1}-\E[F_{t+1}|X_{t},\dots,X_{0}] & \le\alpha\left(1-\alpha\right)^{-t}\cdot\left[O(d^{2}g_{\max})f(X_{t})+\alpha O(d^{4}g_{\max}^{2})f(X_{t})\right]\cdot1_{\fE_{t}}\\
 & \le\alpha\left(1-\alpha\right)^{-t}\cdot O(d^{2}g_{\max})f(X_{t})\cdot1_{\fE_{t}}\\
 & \le C_{\dev}\cdot\alpha\left(1-\alpha\right)^{-t}\left(1-\frac{\alpha}{2}\right)^{t}f_{\max}\cdot d^{2}\cdot g_{\max}\cdot1_{\fE_{t}}
\end{align*}
where again we observe that a step-size like $\alpha\le c/(g_{\max}d^{2}\log d)=O(d^{-2}g_{\max}^{-1})$
yields the cancellation of exponents $\alpha\cdot O(d^{4}g_{\max}^{2})=O(d^{2}g_{\max})$. 

Finally, we bound the variance. Conditioned on all previous iterates
$X_{t},\dots,X_{0}$ we have 
\begin{gather*}
\Var(\langle\nabla f(X_{t}),SG(X_{t})(X_{t}^{T}X_{t})^{-1}\rangle\cdot1_{\fE_{t}})\le\E[\langle\nabla f(X_{t}),SG(X_{t})(X^{T}_{t}X_{t})^{-1}\rangle^{2}\cdot1_{\fE_{t}}]\\
\leq~(\|\nabla f(X_{t})\|_{X_{t}}^{*})^{2}\cdot\E[(\|SG(X_{t})\|_{X}^{*})^{2}]\cdot1_{\fE_{t}}\leq O(d^{2}g_{\max})\cdot f(X_{t})^{2}\cdot1_{\fE_{t}},
\end{gather*}
and also 
\begin{align*}
 & \Var((\|SG(X_{t})\|_{X}^{*})^{2}\cdot1_{\fE_{t}})\le\E[(\|SG(X_{t})\|_{X}^{*})^{4}\cdot1_{\fE_{t}}]=O(d^{6}g_{\max}^{3})f(X_{t})^{2}\cdot1_{\fE_{t}}.
\end{align*}
By the same expansion in (\ref{eq:expanF}) we have 
\begin{align*}
\Var(F_{t+1}|X_{t},\dots,X_{0}) & \leq\alpha^{2}\left(1-\alpha\right)^{-2t}\cdot\left[O(d^{2}g_{\max})\cdot f(X_{t})^{2}+\alpha^{2}O(d^{6}g_{\max}^{3})f(X_{t})^{2}\right]\cdot1_{\fE_{t}}\\
 & \leq\alpha^{2}\left(1-\alpha\right)^{-2t}\cdot O(d^{2}g_{\max})\cdot f(X_{t})^{2}\cdot1_{\fE_{t}}\\
 & \le C_{\var}\cdot\alpha^{2}\left(1-\alpha\right)^{-2t}\left(1-\frac{\alpha}{2}\right)^{2t}f_{\max}^{2}\cdot d^{2}\cdot g_{\max}\cdot1_{\fE_{t}}
\end{align*}
where again we observe that a step-size like $\alpha\le c/(g_{\max}d^{2}\log d)=O(d^{-2}g_{\max}^{-1})$
yields the cancellation of exponents $\alpha^{2}\cdot(d^{6}g_{\max}^{3})=O(d^{2}g_{\max})$. 
\end{proof}
We now prove properties of the incoherence martingale.
\begin{proof}[Proof of \lemref{gmartin}]
Conditioning on $X_{t}$ and the event $\fE_{t}$, we have for $V=SG(X_{t})(X^{T}_{t}X_{t})^{-1}$
\begin{align*}
\E[g_{i}(X_{t+1})] & \le g_{k}(X_{t})-\alpha\inner{\nabla g_{i}(X_{t})}{\E[V]}+\alpha^{2}\cdot\E\left[\frac{(\|V\|_{X}^{*})^{2}}{1-2\|V\|_{X}^{*}}\right]\\
 & \le\left(1-\frac{1-2\rho}{1-\rho}\alpha\right)g_{i}(X_{t})+\alpha\cdot\frac{1}{1-\rho}\cdot\sqrt{g_{i}(X)g_{i}(Z)}+\alpha^{2}\cdot\frac{\E\left[(\|V\|_{X}^{*})^{2}\right]}{1-2\|V\|_{X}^{*}}\\
 & \le\left(1-\frac{1-2\rho}{1-\rho}\alpha\right)g_{i}(X_{t})+\alpha\cdot\frac{\sqrt{g_{i}(Z)/g_{\max}}}{1-\rho}\cdot g_{\max}+\alpha^{2}\cdot\frac{O(d^{2}\cdot g_{\max}\cdot\rho^{2})}{1-O(g_{\max}^{1/2}\cdot\rho)}\\
 & \le\left(1-\zeta\alpha\right)g_{i}(X_{t})+\alpha\cdot\frac{\zeta}{2}g_{\max}\qquad\text{ for }\alpha=O(\rho^{-1}d^{-2}).
\end{align*}
Here we note that we have carefully chosen $g_{\max}$ so that the
ratio $\max_{i}g_{i}(Z)/g_{\max}=[(1-\rho)\cdot\zeta/4]^{2}.$ It
then follows that the following is a supermartingale
\[
G_{it}=\left(1-\zeta\alpha\right)^{-t}\left(g_{i}(X_{t})\cdot1_{\fE_{t-1}}-\frac{\zeta}{2}g_{\max}\right).
\]
Indeed, we have 
\begin{align*}
\E[G_{i(t+1)}|X_{t},\dots,X_{0}] & =\left(1-\zeta\alpha\right)^{-(t+1)}\left(\E[g_{i}(X_{t+1})\cdot1_{\fE_{t}}|X_{t},\dots,X_{0}]-\frac{\zeta}{2}g_{\max}\right)\\
 & \le\left(1-\zeta\alpha\right)^{-(t+1)}\left[\left(1-\zeta\alpha\right)g_{i}(X_{t})\cdot1_{\fE_{t}}+\alpha\cdot\frac{\zeta}{2}g_{\max}\cdot1_{\fE_{t}}-\frac{\zeta}{2}g_{\max}\right]\\
 & \le\left(1-\zeta\alpha\right)^{-t}\left[g_{i}(X_{t})\cdot1_{\fE_{t-1}}-\left(\frac{1-\alpha}{1-\alpha}\right)\cdot\frac{\zeta}{2}g_{\max}\right]=G_{it}
\end{align*}
where the final line uses $1_{\fE_{t}}\le1_{\fE_{t-1}}\le1$. 

We now bound the deviations on $G_{it}$. Conditioning on the previous
iterates $X_{t},\dots,X_{0}$, we obseve that the $g_{i}(X_{t})$
terms cancel:
\begin{multline}
g_{i}(X_{t+1})\cdot1_{\fE_{t}}-\E[g_{i}(X_{t+1})\cdot1_{\fE_{t}}]\le\left[-\alpha\langle\nabla g_{i}(X_{t}),[SG(X_{t})-\E SG(X_{t})](X_{t}^{T}X_{t})^{-1}\rangle\right.\\
\left.+5\alpha^{2}\cdot\frac{(\|SG(X_{t})(X^{T}_{t}X_{t})^{-1}\|_{X}^{*})^{2}+\E(\|SG(X_{t})(X^{T}_{t}X_{t})^{-1}\|_{X}^{*})^{2}}{1-\alpha\|SG(X_{t})(X^{T}_{t}X_{t})^{-1}\|_{X}^{*}}\right]\cdot1_{\fE_{t}}.\label{eq:expanG}
\end{multline}
Here we have for the linear term
\[
\|\nabla g(X)\|\cdot1_{\fE_{t}}\le\|2[I-X(X^{T}X)^{-1}X^{T}]e_{i}e_{i}^{T}X(X^{T}X)^{-1}\|\cdot1_{\fE_{t}}\le O(\kappa^{1/2}\cdot g_{\max}^{1/2})
\]
and $\|SG(X)\|_{X}^{*}\cdot1_{\fE_{t}}\le O(d^{2}\cdot\sqrt{g_{\max}f(X)})=O(d^{2}\sqrt{g_{\max}}\rho/\kappa)$
noting that $f_{\max}=\rho^{2}/\kappa^{2}$ and hence 
\begin{align*}
|\inner{\nabla g(X_{t})}{SG(X_{t})(X^{T}_{t}X_{t})^{-1}}| & \le\|\nabla g(X_{t})\|\cdot\|SG(X_{t})\|_{X}^{*}\\
 & \le O(\sqrt{\kappa g_{\max}})\cdot O(d^{2}\sqrt{g_{\max}}\rho/\kappa)=O(g_{\max}\cdot d^{2}\rho)\cdot1_{\fE_{t}}.
\end{align*}
We have for the quadratic term 
\[
\frac{5(\|SG(X_{t})(X^{T}_{t}X_{t})^{-1}\|_{X}^{*})^{2}}{1-\alpha\|SG(X_{t})(X^{T}X)^{-1}\|_{X}^{*}}\cdot1_{\fE_{t}}\le\frac{O(d^{4}\cdot g_{\max}\cdot\rho^{2})}{1-\alpha\cdot O(d^{2}\cdot g_{\max}^{1/2}\cdot\rho)}\cdot1_{\fE_{t}}=O(g_{\max}\cdot d^{4}\rho^{2})\cdot1_{\fE_{t}}.
\]
Therefore, using the maximum value to bound the expectation, we have
\begin{align*}
G_{i(t+1)}-\E[G_{i(t+1)}|X_{t},\dots,X_{0}] & \le\alpha\left(1-\alpha\right)^{-t}g_{\max}\cdot\left[O(d^{2}\rho)+\alpha O(d^{4}\rho^{2})f(X_{t})\right]\cdot1_{\fE_{t}}\\
 & \le\alpha\left(1-\alpha\right)^{-t}g_{\max}\cdot O(d^{2}\rho)\cdot1_{\fE_{t}}
\end{align*}
where again we observe that a step-size like $\alpha\le c/(\rho d^{2}\log d)=O(d^{-2}\rho^{-1})$
yields the cancellation of exponents $\alpha\cdot O(d^{4}\rho^{2})=O(d^{2}\rho)$. 

Finally, we bound the variance. Conditioned on all previous iterates
$X_{t},\dots,X_{0}$ we have 
\begin{gather*}
\Var(\langle\nabla g_{i}(X_{t}),SG(X_{t})(X^{T}_{t}X_{t})^{-1}\rangle\cdot1_{\fE_{t}})\le\E[\langle\nabla g_{i}(X_{t}),SG(X_{t})(X^{T}_{t}X_{t})^{-1}\rangle^{2}\cdot1_{\fE_{t}}]\\
\leq~(\|\nabla g_{i}(X_{t})\|)^{2}\cdot\E[(\|SG(X_{t})\|_{X}^{*})^{2}]\cdot1_{\fE_{t}}\leq O(g_{\max}^{2}\cdot d^{2}\rho^{2})\cdot1_{\fE_{t}},
\end{gather*}
and also 
\begin{gather*}
\Var\left(\frac{5(\|SG(X_{t})(X^{T}_{t}X_{t})^{-1}\|_{X}^{*})^{2}}{1-\alpha\|SG(X_{t})(X^{T}_{t}X_{t})^{-1}\|_{X}^{*}}\cdot1_{\fE_{t}}\right)\le\frac{25\cdot\E[(\|SG(X_{t})(X^{T}_{t}X_{t})^{-1}\|_{X}^{*})^{4}]}{(1-\alpha\|SG(X_{t})(X^{T}_{t}X_{t})^{-1}\|_{X}^{*})^{2}}\cdot1_{\fE_{t}}\\
\le\frac{O(g_{\max}^{2}\cdot d^{6}\rho^{4})}{1-\alpha\cdot O(d^{2}\cdot g_{\max}^{1/2}\cdot\rho)}\cdot1_{\fE_{t}}=O(g_{\max}^{2}\cdot d^{6}\rho^{4})\cdot1_{\fE_{t}}.
\end{gather*}
By the same expansion in (\ref{eq:expanG}) we have 
\begin{align*}
\Var(G_{i(t+1)}|X_{t},\dots,X_{0}) & \leq\alpha^{2}\left(1-\alpha\right)^{-2t}\cdot\left[O(g_{\max}^{2}\cdot d^{2}\rho^{2})\cdot1_{\fE_{t}}+\alpha^{2}O(g_{\max}^{2}\cdot d^{6}\rho^{4})\cdot1_{\fE_{t}},\right]\cdot1_{\fE_{t}}\\
 & \leq\alpha^{2}\left(1-\alpha\right)^{-2t}g_{\max}^{2}\cdot O(d^{2}\rho^{2})\cdot1_{\fE_{t}}
\end{align*}
where again we observe that a step-size like $\alpha\le c/(\rho d^{2}\log d)=O(d^{-2}\rho^{-1})$
yields the cancellation of exponents $\alpha^{2}\cdot(d^{6}\rho^{4})=O(d^{2}\rho^{2})$. 
\end{proof}

\end{document}